\documentclass{article}
\usepackage[T1]{fontenc}
\usepackage{microtype}
\usepackage{caption}
\usepackage{amsmath}
\usepackage{amssymb}
\usepackage{amsthm}
\usepackage{mathtools}
\usepackage{thmtools}
\usepackage{mathrsfs}
\usepackage{libertine}
\usepackage[varl,varqu]{zi4}
\usepackage{array}
\usepackage{booktabs}
\usepackage[inline,shortlabels]{enumitem}
\usepackage{graphicx}
\usepackage[dvipsnames]{xcolor}
\usepackage{hyperref}
\usepackage{cleveref}
\definecolor{alizarin}{RGB}{227,38,54}
\definecolor{ultramarine}{RGB}{18,10,143}
\definecolor{Amaranth}{rgb}{0.9, 0.17, 0.31}
\definecolor{tumgreen}{RGB}{162,173,0}
\definecolor{tumblue}{RGB}{0, 101, 189}

\hypersetup{
  linkcolor  = tumblue,
  citecolor  = ultramarine,
  urlcolor   = ultramarine,
  colorlinks = true
}
\usepackage[margin=1in]{geometry}
\usepackage[outline]{contour}
\contourlength{1.2pt}

\def\RR{{\mathbb R}}

\def\gG{{\mathcal{G}}}

\def\mA{{\bm{A}}}
\def\mB{{\bm{B}}}

\def\mF{{\bm{F}}}

\def\mH{{\bm{H}}}
\def\mI{{\bm{I}}}

\def\mL{{\bm{L}}}
\def\mM{{\bm{M}}}

\def\mP{{\bm{P}}}
\def\mQ{{\bm{Q}}}
\def\mR{{\bm{R}}}

\def\mT{{\bm{T}}}
\def\mU{{\bm{U}}}
\def\mV{{\bm{V}}}
\def\mW{{\bm{W}}}
\def\mX{{\bm{X}}}
\def\mY{{\bm{Y}}}
\def\mZ{{\bm{Z}}}

\def\vv{{\bm{v}}}

\usepackage{amsmath,amsfonts,bm}

\def\eqref#1{equation~\ref{#1}}
\def\1{\bm{1}}

\def\vv{{\bm{v}}}

\def\mA{{\bm{A}}}
\def\mB{{\bm{B}}}

\def\mF{{\bm{F}}}

\def\mH{{\bm{H}}}
\def\mI{{\bm{I}}}

\def\mL{{\bm{L}}}
\def\mM{{\bm{M}}}

\def\mP{{\bm{P}}}
\def\mQ{{\bm{Q}}}
\def\mR{{\bm{R}}}

\def\mT{{\bm{T}}}
\def\mU{{\bm{U}}}
\def\mV{{\bm{V}}}
\def\mW{{\bm{W}}}
\def\mX{{\bm{X}}}
\def\mY{{\bm{Y}}}
\def\mZ{{\bm{Z}}}

\DeclareMathAlphabet{\mathsfit}{\encodingdefault}{\sfdefault}{m}{sl}
\SetMathAlphabet{\mathsfit}{bold}{\encodingdefault}{\sfdefault}{bx}{n}

\def\gG{{\mathcal{G}}}

\newcommand{\R}{\mathbb{R}}

\definecolor{first}{HTML}{2CA02C}  % green
\definecolor{second}{HTML}{FF7F0E} % orange
\definecolor{third}{HTML}{FFD700}  % yellow

\definecolor{alizarin}{RGB}{227,38,54}
\definecolor{ultramarine}{RGB}{18,10,143}
\definecolor{Amaranth}{rgb}{0.9, 0.17, 0.31}
\definecolor{tumgreen}{RGB}{162,173,0}
\definecolor{tumblue}{RGB}{0, 101, 189}

\hypersetup{
  linkcolor  = tumblue,
  citecolor  = ultramarine,
  urlcolor   = ultramarine,
  colorlinks = true
}

\newcommand{\ip}[2]{\left\langle#1,#2\right\rangle}
\newcommand{\abs}[1]{\left|#1\right|}
\newcommand{\norm}[1]{\left\|#1\right\|}

\newcommand{\CC}{\mathbb{C}}

\usepackage{microtype}
\usepackage{titletoc}
\usepackage{subfigure}
\usepackage{natbib}
\usepackage{hyperref}
\usepackage{algorithm}
\usepackage{algorithmic}
\usepackage{hyperref}
\usepackage{url}
\usepackage{array,multirow,graphicx}
\usepackage{comment}
\usepackage{booktabs}
\usepackage{xcolor}
\usepackage{enumitem}
\usepackage{color}
\usepackage{amsmath,amsfonts,bm}
\usepackage{diagbox}
\usepackage{relsize}
\usepackage{wrapfig,lipsum,booktabs}
\usepackage{bbm}

\usepackage[table]{xcolor}
\usepackage[dvipsnames]{xcolor}

\newtheorem{theorem}{Theorem}%[section]
\newtheorem{definition}[theorem]{Definition}

\newtheorem{lemma}[theorem]{Lemma}

\newtheorem{example}[theorem]{Example}
\newtheorem{claim}[theorem]{Claim}

\makeatletter
\def\moverlay{\mathpalette\mov@rlay}
\def\mov@rlay#1#2{\leavevmode\vtop{%
   \baselineskip\z@skip \lineskiplimit-\maxdimen
   \ialign{\hfil$\m@th#1##$\hfil\cr#2\crcr}}}
\newcommand{\charfusion}[3][\mathord]{
    #1{\ifx#1\mathop\vphantom{#2}\fi
        \mathpalette\mov@rlay{#2\cr#3}
      }
    \ifx#1\mathop\expandafter\displaylimits\fi}
\makeatother

\usepackage[textsize=tiny]{todonotes}
\usepackage{parskip}
\usepackage{etoolbox}
\makeatother

\title{Beyond Oversquashing: Understanding Signal Propagation \\ in GNNs Via Observables}
\author{
Eden Nagar$^1$, Ya-Wei Eileen Lin$^{2, 3}$, and Ron Levie$^1$
\\ 
\\
\normalsize{$^1$ Technion - Israel Institute of Technology, Faculty of Mathematics}\\
\normalsize{$^2$ Technical University of Munich, School of Computation, Information and Technology}\\
\normalsize{$^3$  Munich Center for Machine Learning}
}
\date{}

\begin{document}
\maketitle

\begin{abstract}
Graph Neural Networks (GNNs) perform computations on graphs by routing the  signal between graph regions using a graph shift operator or a message passing scheme. Often, the propagation of the signal leads to a loss of information, where the signal tends to diffuse across the graph instead of being deliberately routed between regions of interest. Two notions that depict this phenomenon are oversmoothing and oversquashing. In this paper, we propose an alternative approach for modeling signal propagation, inspired by quantum mechanics, using the notion of observables. Specifically, we model the place in the graph where the signal lies,  how much the signal is concentrated there, and how much of the signal is propagated towards a location of interest when applying a GNN. Using these new concepts, we prove that standard spectral GNNs have poor signal propagation capabilities. We then propose a new type of spectral GNN, termed Schrödinger GNN, which we show has a superior capacity to route the signal across the graph. 
\end{abstract}

\section{Introduction}
\label{sec:intro}

Graph Neural Networks (GNNs) \citep{scarselli2009graph, kipf2017semisupervisedclassificationgraphconvolutional} have emerged as powerful tools, enabling breakthrough applications across diverse domains \citep{wu2020comprehensive, zhou2020graph}, including molecular science \citep{duvenaud2015convolutional, kearnes2016molecular, schutt2017schnet}, physics simulations \citep{battaglia2016interaction, sanchez2020learning}, social network analysis \citep{hamilton2017inductive, perozzi2014deepwalk}, and recommendation systems \citep{wang2019neural, he2020lightgcn}.  
A GNN is a layered architecture that takes a graph with node features, often referred to as the signal, and returns some output, e.g., another signal on the same graph.
The hidden states of the signal across the layers can be interpreted as a gradual flow or propagation of the node features, % 
 since the GNN computes the signal at the next layer using local operations on the previous layer.

Often, to solve a problem on graphs, the GNN should be able to direct the propagation of the signal from certain regions of the graph to others. For example, the function of an enzyme is often understood through the notion of allosteric regulation \citep{nerin2024machine, tian2023passer}: activation in one site of the enzyme (the receptor) changes the dynamics of the molecule, leading to some change in another site, called the active site. To be able to predict such a behavior using a GNN, the GNN should be able to propagate the signal about the binding site, which captures structural properties of the receptor, to the distant active site.  
However, one limitation of typical GNNs is that the signal gets diffused in all directions the more layers are used in the network, rather than being propagated, or routed, in a coherent way between regions in the graph.  
This limits the applicability of typical GNNs when a deliberate routing of the signal is required to solve the task. Two standard notions that are commonly regarded as quantifying this phenomenon are \emph{oversmoothing}  \citep{li2018deeper, oono2021graphneuralnetworksexponentially, 
zhao2020pairnorm, 
rong2020dropedgedeepgraphconvolutional, chen2020simpledeepgraphconvolutional, GiraldoEtAl2023Tradeoff} and \emph{oversquashing}  \citep{ AlonYahav2021Bottleneck,topping2022understandingoversquashingbottlenecksgraphs,BlackEtAl2023ER, di2023over, GravinaEtAl2025SWAN, banerjee2022oversquashinggnnslensinformation}%. 

However, the first notion, oversmoothing, which is often quantified via the Dirichlet energy  \citep{shuman2013emerging, sandryhaila2013discretesignalprocessinggraphs}, describes how quickly the signal varies, or oscillates, across the whole graph, not how much the signal can be kept concentrated, or coherent, when propagating it from one region to another. The second phenomenon, oversquashing,  describes the phenomenon where long range information is compressed through topological bottlenecks. Hence, analyses of oversquashing are typically based on various definitions quantifying bottlenecks, e.g., Cheeger constant, graph curvature,   %
 and effective resistance   \citep{topping2022understandingoversquashingbottlenecksgraphs,BlackEtAl2023ER,banerjee2022oversquashing}. 
 In the same papers, oversquashing is quantified via 
 the magnitudes of the partial derivatives of the GNN, at node $m$ with respect to node $n$, for all $m,n$. Namely, large derivatives  indicate that all edges affect all other edges significantly enough. The magnitudes of these derivatives are typically linked to quantities of bottlenecks via special inequalities. Rewiring algorithms are designed to maximize the overall flow along the graph (i.e.,  to minimize topological bottlenecks), and hence, by the special inequalities, to also increase the GNNs derivatives.  

\paragraph{Our contribution.}  
 Here, we identify the following limitation in oversquashing analyses. As noted above, often to solve the task, specific nodes should send their information to other specific nodes. However, by focusing on the graph structure and trying to increase the communication between all pairs of nodes via rewiring, regardless of the task, the signal still diffuses across the graph by the GNN. Increasing all pairwise communications does not promote the ability of the GNN  to single out specific pairs of locations of interest on the graph, allowing a coherent transmission of information between them, while having small GNN gradients for the rest of the nodes.
 
Our goal is to analyze information flow in GNNs beyond oversquashing. We aim to directly study how coherent the signal stays when it is routed between regions of the graph. For this, we propose  an alternative way to model and probe different aspects of the content of the signal and its flow. Specifically, we model (i) the location in the graph where the signal lies, (ii) how much the content of the signal is concentrated about this location, and, (iii) how much of the signal is propagated from one location of the graph to another when applying a GNN. 
 These three concepts are defined via the notion of observables and their mean and variance, similarly to the approach in quantum mechanics.  
 Our focus is not trying to amplify \emph{all} pairwise node communications, but rather to study the ability of GNNs to route the signal from each location to a respective \emph{specific destination}.  Here, a good flow of information is characterized by the ability of the GNN to choose these destinations in view of the task. We note that measuring signal content using observables was also investigated in
 % done in the past in the context of
the signal processing literature  \citep{levie2014adjoint, levie2018uncertaintyprinciplesoptimallysparse, levie2021waveletplanchereltheoryapplication, Halvdansson_2023}.

We then prove that standard spectral GNNs have poor signal propagation capabilities: they keep the location of the content of the signal unchanged, and only increase the spread of the signal about this location. 
 We hence propose a novel spectral GNN, called \emph{Schr\"odinger GNN}, which has provably good signal flow properties. Namely, with \emph{Schr\"odinger filters}, we can direct the propagation of the signal in any desired direction in the graph.   
 
Schr\"odinger GNNs are based on two main components: a unitary graph shift operator (GSO), and complex modulated signals. The unitary GSO is analogous to the  Schr\"odinger operator in classical quantum mechanics, and specifically, in the free particle dynamics. It assures that the content of the signal is transformed in a geometry preserving way, rather than being diffused. Moreover,  Schr\"odinger GNNs consider some of the input feature channels as encoding an abstract notion of ambient location in the graph.  We call these features \emph{formal location}. The rest of the feature channels are called \emph{the signal}. The idea is to be able to shift the signal across the formal location, in any desired direction. For illustration, in a social network, we might want to shift the \emph{income} signal along the \emph{age} direction, to allow comparing salaries of different age groups.  
To quantify the propagation properties of signals, we consider an observable corresponding to each formal location feature, namely, an operator that measures the formal location of signals. Moreover, to guarantee that the formal location of signals shifts when applying GNNs, we form in the signal complex oscillations along the direction of each formal location. 
We show that this leads roughly to a constant speed of the formal location of signals when applying linear Schr\"odinger filters. 

We empirically validate our theory on graph classification and regression benchmarks. 

\section{Measuring signal localization and propagation}

We now introduce the quantities used throughout the paper to measure where signals are localized.

\subsection{General notations}

For $N\in\mathbb{N}$ we denote $[N]=\{1, \ldots,N\}$. 
We treat a vector $a=(a_1,\ldots,a_n)\in\mathbb{C}^N$ as functions $a:[N]\rightarrow\mathbb{C}$, where $a(n)=a_n$ for $n \in [N]$. 
For $z\in \CC$, we denote complex conjugation by $\overline{z}$, and real and imaginary parts by ${\rm Re}(z)$ and ${\rm Im}(z)$.
Consider a graph $\gG=(V,E)$ with the vertex set $V=[N]$ and the edge set $E\subset [N]^2$.
 We denote by $\mathcal{N}(v)$ the neighborhood of vertex $v\in V$. 
We consider only undirected weighted graphs, and denote the adjacency matrix by $A=(a_{n,m})_{n,m}\in\mathbb{R}^{N\times N}$, where $A = A^\top$ and $a_{n,m} = 0$ whenever $\{n,m\}\notin E$.  A \emph{graph-signal} is a pair $(\gG,f)$ where $\gG$ is a graph and $f=(f_k(n))_{k\in[K],n\in[N]}\in\RR^{N\times K}$ is a 2D array called a \emph{signal}, with each $f_k\in\RR^N$. Here, we see $f$ also as a mapping $f:V\rightarrow\mathbb{C}^K$, and see each $f_k$ as a mapping $f_k:V\rightarrow\mathbb{C}$ called a \emph{channel}. By convention, nodes or locations are substituted using brackets $f(n)\in\RR^K$ and channels are written with subscript $f_k\in\RR^N$. Accordingly, the value of channel number $k$ of $f$ at node $n$ is denoted by $f_k(n)$. %
A graph shift operator (GSO) is any operator that encodes the graph's structure, e.g., the adjacency matrix or any graph Laplacian.  
The inner product of two  signals  $f, g\in\mathbb{C}^{N\times 1}$ is $\langle f, g\rangle=\sum_{v\in V}f(v)\overline{g(v)}$, and  norm of  $f$ is $\|f\|_2^2=\langle f,f\rangle$.
For a matrix $\mM$, its operator norm is $\|\mM\|_{\mathrm{op}}=\sup_{\|x\|_2=1}\|\mM x\|_2$. For  $f\in\mathbb{C}^{N}$, we denote by $\operatorname{diag}(f)$ the diagonal matrix with diagonal elements $\operatorname{diag}(f)_{n,n}=f_n$.
The commutator of two square matrices  is  $[\mM,\mY]=\mM\mY-\mY\mM$.

\subsection{Observables and the signal routing measure}

In a general Hilbert space $\mathcal{H}$ of signals, an \emph{observable} is a self-adjoint operator $M$ in $\mathcal{H}$, i.e.,  $M^* = M$. By the spectral theorem, any self-adjoint operator in a finite dimensional space can be written as $M = \sum_j \lambda_j P_j$  where $\{\lambda_j\}_j$ are real eigenvalues and $\{P_j\}_j$ are the orthogonal eigenprojections. This decomposition motivates treating a self-adjoint operator as an \emph{observable} of a \emph{physical quantity} \citep{nielsen2010quantum}. Namely, we interpret the eigenvalues as values that the physical quantity can attain, and $P_j$ as projections upon spaces of signals that have $\lambda_j$ as the value of their physical quantity.  For example, the diagonal operator $D:\mathbb{C}^N\rightarrow\mathbb{C}^N$ defined by $(Dg)(j)=jg(j)$ can be thought of as a \emph{location observable} on the discrete line $[N]$. 
Here, the eigenvectors $e_j=(0,\ldots,0,1,0,\ldots,0)$ (with $1$ only at the $j$-th entry) are thought of as pure states (or pure signal) with location exactly $\lambda_j=j$. 
Any signal $g\in \mathbb{C}^N$  is a linear combination of the \emph{pure location states} $\{e_j\}_j$, i.e., $g=\sum_j g(j) e_j$ with $g(j)\in\mathbb{C}$.
When the state $g$ is normalized to $\|g\|_2=1$,  we can interpret $|g(j)|^2$ as  the weight, or probability, of $g$ being at location $j$. 
While $g$  does not have one exact location, we can define its \emph{mean location} as $\mathcal{E}_D(g)=\sum_j |g(j)|^2j$, 
and its location variance as $\mathcal{V}_D(g)=\sum_j |g(j)|^2(j-\mathcal{E}_D(g))^2$.
Using operator notations, these two quantities can be written as $\mathcal{E}_D(g)=\ip{D g}{g}$ and $\mathcal{V}_D(g)=\norm{(D-\mathcal{E}_D(g) I)g}_2^2$, where $I$ is the identity operator in $\mathbb{C}^N$.

This discussion motivates the general construction of observables in quantum mechanics. 
For a self-adjoint operator $M$ and normalized state $g \in \mathcal{H}$, the \emph{expected value} (or \emph{mean}) of $M$ with respect to $g$ is defined to be $\mathcal{E}_M(g) := \langle Mg, g \rangle$. 
Note that when $\mathcal{H}=\mathbb{C}^N$, we have $\mathcal{E}_M(g) = \sum_i \lambda_i \langle P_ig, g \rangle$, which is interpreted similarly to the above example of location observable. 
The \emph{variance} of $M$ with respect to  $g$ is defined to be
\[
\mathcal{V}_M(g) := \norm{(M-\mathcal{E}_M(g) I)g}_2^2= \ip{(M-\mathcal{E}_M(g) I)^2g}{g} =\mathcal{E}_{M^2}(g) - \mathcal{E}_M(g)^2.
\]

In addition to the classical notions of mean and variance, we propose quantifying how well a signal is transmitted towards a target value of the physical quantity. Consider a scenario where we have an initial signal $g^{(0)}$, and we would like to transmit this signal to be concentrated about some value $r\in\RR$ with respect to some observable $M$. For that, suppose that we operate on $g^{(0)}$, e.g., with a GNN, and transform it to $g^{(t)}$.
The following definition quantifies how well $g^{(t)}$ achieves this goal.

\begin{definition}[Signal routing measure]
\label{def:PA}
For an observable $M$, normalized \emph{initial} signal $g^{(0)}$ and \emph{final} signal $g^{(t)}$, and a  target value $r \in \mathbb{R}$, the  \emph{signal routing measure} is defined to be
\begin{equation}
\label{eq:PA}
\mathcal{P}_{M}(g^{(0)},g^{(t)},r) = \frac{\langle(M-Ir)^2g^{(t)},g^{(t)}\rangle}{\mathcal{V}_{M}(g^{(0)})}.  
\end{equation}
\end{definition}
In the setting of Definition \ref{def:PA}, the observable $M$ represents a physical quantity of interest. 
% models some physical quantity.
The term  $\langle(M-Ir)^2g^{(t)},g^{(t)}\rangle$ quantifies how much the values of the physical quantity of  $g^{(t)}$ are concentrated about $r$, and the denominator normalizes this with respect to how well the physical quantity of the initial state $g^{(0)}$ is concentrated.  
It is easy to verify the identity
\begin{equation}
\label{thm:energy_flow_measure}
\mathcal{P}_{M}(g^{(0)},g^{(t)},r)= \frac{\mathcal{V}_{M}(g^{(t)}) + (r - \mathcal{E}_{M}(g^{(t)}))^2}{\mathcal{V}_{M}(g^{(0)})}.
\end{equation}
Hence, to minimize the routing measure, one should construct an operation that transforms $g^{(0)}$ to some $g^{(t)}$, keeping the variance of $g^{(t)}$ small (relatively to the variance of $g^{(0)}$), while making the expected value of $g^{(0)}$ as close as possible to the target value $r$.

\section{Signal propagation in Schr\"odinger graph signal processing}\label{sec:Schrodinger_gsp}

Next, we introduce Schr\"odinger graph signal processing, and analyze signal propagation under it.

\subsection{Feature location observables}

Consider a graph-signal $(\gG,q)$ with $q=(q_1,\ldots,q_M):V\rightarrow\mathbb{C}^M$.
We treat some of the  channels of $q$ as the signal and some as some abstract notion of locations.
Namely, for some $1<J<M$ we call $g=(q_1,\ldots,q_J)$ the \emph{signal}, and call $f=(q_{J+1},\ldots,q_M)$ the \emph{feature locations}. Denote $K=M-J$ and $f=(f_1,\ldots,f_K)$. As we show later, working with complex-valued signals is important for routing signals between graph regions. Hence, we consider $g:V\rightarrow\mathbb{C}^J$ with $\norm{g_j}_2=1$, and consider real-valued feature locations $f:V\rightarrow\mathbb{R}^K$, which need not be normalized. Define the  \emph{feature location observables} $X_{f_k}=\text{diag}(f_k)$, for $k\in[K]$. By the fact that $f_k$ is real-valued, $X_{f_k}$ is self-adjoint. Now, $\mathcal{E}_{X_{f_k}}(g_j)=\sum_{n\in[N]} f_k(n)\abs{g_j(n)}^2$ 
is interpreted as the $f_k$-value about which the energy of $g_j$ is centered, and $\mathcal{V}_{X_{f_k}}(g_j)$ is the spread of the energy of $g_j$ about this center.

\subsection{Partial derivatives and the second order feature derivative GSO}
Our construction uses a special definition of a GSO based on  derivatives. 
\begin{definition}[$f_k$-partial derivative]\label{def:fk_partial}
Given a feature location $f_k: V \to \mathbb{R}$, we define the \emph{$f_k$-partial derivative}  $\nabla_{f_k}\in \mathbb{C}^{N\times N}$ as follows. For $n,m\in V$
\[
(\nabla_{f_k})_{n,m}=a_{n,m}({f_k}(n)-{f_k}(m)).
\] 
\end{definition}

Note that $\nabla_{f_k}$ is skew-symmetric (i.e.,  $\nabla_{f_k}^*=-\nabla_{f_k}$), and hence $\nabla_{f_k}^2$ is self-adjoint.
\begin{definition}[Schr\"odinger Laplacian]
Given $K$ feature locations  $f=(f_1,\ldots,f_K)$, the corresponding \emph{Schr\"odinger Laplacian} is defined to be
\[
\Delta_f = -\sum_{k\in [K]}\nabla_{f_k}^2.
\]
\end{definition}
The Schr\"odinger Laplacian is self-adjoint as a sum of bounded self-adjoint operators. This makes the following operator unitary.

\begin{definition}[Schr\"odinger operator]
Given feature locations $f:V\rightarrow\mathbb{R}^K$ and \emph{time} $t\in\mathbb{R}$, the corresponding \emph{Schr\"odinger operator} is defined to be
$\mathcal{S}[t,f] = e^{-it\Delta_f}$.
\end{definition}

As we define later in the section, \emph{Schr\"odinger graph signal processing} is based on filtering signals using Schr\"odinger operators as GSOs.
In this paper, we develop the theory for Schr\"odinger operators based on Schr\"odinger Laplacians, as these special GSOs lead to theoretical guarantees. However, the Schr\"odinger signal processing methodology can be extended to be based on general GSOs, like standard graph Laplacians, though with less theoretical guarantees.

Let us draw an analogy to the classical theory. In the free particle Schr\"odinger equation, we consider the space $\mathbb{R}^3$ as the ``graph,'' consider the coordinates $x,y,z$ as the locations, and $\partial_x,\partial_y,\partial_z$ as the partial derivatives. Here, $\Delta_{x,y,z}=-\partial_x^2-\partial_y^2-\partial_z^2$ is the classical Laplace operator. Given a wave function $g^{(0)}:\mathbb{R}^3\rightarrow\mathbb{C}$ representing a particle at time $0$, $g^{(t)}=\mathcal{S}[t,(x,y,z)]g^{(0)}$ is the particle at time $t$. 
In our case, given a signal $g^{(0)}$ on the graph, thought of as the state at time 0, we denote $g^{(t)}=\mathcal{S}[t,f]g^{(0)}$, and interpret $g^{(t)}$  as the signal at time $t$.

\subsection{Analyzing signal propagation via splitting}

Note that typical signals  are not  localized about one feature location. For example, the grayscale signal of an image is typically supported across all $x,y$ locations. Hence, the expected location and location variance are not meaningful localization notions for such signals (see Figure~\ref{fig:subsignals} for illustration). Still, we can  conceptually apply a localization analysis with observables as follows. We decompose the signal $g$ into a sum of chunks $g=g^1,\ldots,g^L$, e.g., by multiplying the signal by windows in the formal locations $g^l=w^l(f_k)g$, where $w^1,\ldots,w^L:\mathbb{R}\rightarrow\mathbb{R}$ form a partition of unity. Here we assume that each $w^l$ is well localized about one location value. Then, each chunk  $g^l$ has a meaningful mean location, and we can track how Schr\"odinger operators propagate this location. Moreover, by tracking how much the Schr\"odinger operator increases the variance of the chunk, we interpret how much the content of the signal in this chunk is diffused, scatters, or dispersed, when propagating it. Note that this analysis makes sense by the linearity of the Schr\"odinger operator. In our methodology, we do not decompose $g$ to chunks when applying a GNN. Rather, this decomposition is  for conceptualizing the signal propagation, and also for diagnosis of trained GNNs (see Section \ref{sec:experiments}).  

\begin{figure}[h]
\centering
\includegraphics[width=1\textwidth]{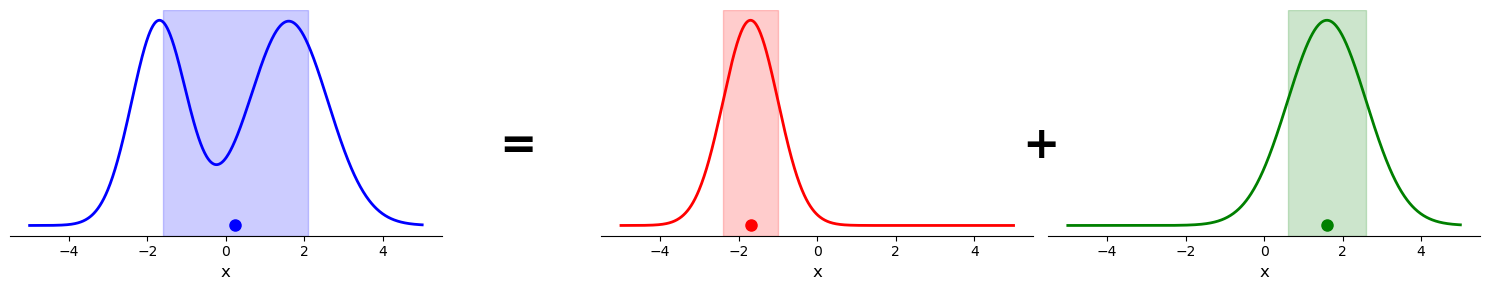}
\caption{Decomposition of a signal $g$ to $g^0+g^1$. Expected feature locations are marked by a dot, and the variance is signified by a color band.}
\label{fig:subsignals}
\end{figure}

\subsection{Dynamics of 1D signals via feature momentum}
\label{Dynamics of 1D Signals via Feature Momentum}

In the classical theory, the partial derivatives are called the \emph{momentum observables}, and the mean $\mathcal{E}_{i\partial_x}(g)$ is interpreted as the expected momentum, or speed, of  particle $g$. Analogously, we interpret the $f_k$-partial derivative $i\nabla_{f_k}$ as observables of momentum or velocity along $f_k$. This interpretation is made precise by developing dynamical equations of signals under the Schrödinger operator. 
In the following discussion, we focus on a single-channel signal $g=g_1$ and a single feature location $f=f_1$. We first show that the expected momentum of a signal is constant under Schr\"odinger dynamics.

\begin{theorem}[Constant expected momentum]
\label{thm:expected_momentum_invariant}
Let $g^{(0)}:V\rightarrow\mathbb{C}$ be a normalized signal and $f:V\rightarrow\mathbb{R}$ a feature location. Then, for every $t\in\mathbb{R}$, 
\[
\mathcal{E}_{i\nabla_{f}}(g^{(t)})=\mathcal{E}_{i\nabla_{f}}(g^{(0)}).
\]
\end{theorem}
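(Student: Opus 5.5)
The plan is to show that the momentum observable $i\nabla_f$ commutes with the Schr\"odinger operator $\mathcal{S}[t,f]=e^{-it\Delta_f}$, and that $\mathcal{S}[t,f]$ is unitary. Expected momentum is then conserved by a one-line computation, exactly as for the free particle.

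First, recall that $\nabla_f$ is skew-symmetric. This holds because $a_{n,m}=a_{m,n}$ is real and $f(n)-f(m)$ is real and antisymmetric in $(n,m)$. Consequently $(i\nabla_f)^*=-i\nabla_f^*=i\nabla_f$, so $i\nabla_f$ is self-adjoint and $\mathcal{E}_{i\nabla_f}$ is a genuine (real) expectation.

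Second, in the single-feature setting we have $\Delta_f=-\nabla_f^2$. This is a polynomial in $\nabla_f$, so $[\nabla_f,\Delta_f]=0$. Since $\mathcal{S}[t,f]=e^{-it\Delta_f}=\sum_{k\ge0}\frac{(-it)^k}{k!}\Delta_f^k$ is a convergent power series in $\Delta_f$ (finite dimensions), it follows that $[\nabla_f,\mathcal{S}[t,f]]=0$. Moreover, $\Delta_f$ is self-adjoint, so $\mathcal{S}[t,f]^*=e^{it\Delta_f}=\mathcal{S}[t,f]^{-1}$ and $\mathcal{S}[t,f]$ is unitary. In particular $g^{(t)}$ remains normalized.

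Third, we compute directly:
\[
\mathcal{E}_{i\nabla_f}(g^{(t)})=\langle i\nabla_f\mathcal{S}[t,f]g^{(0)},\mathcal{S}[t,f]g^{(0)}\rangle=\langle \mathcal{S}[t,f]^*\mathcal{S}[t,f]\, i\nabla_f g^{(0)},g^{(0)}\rangle=\mathcal{E}_{i\nabla_f}(g^{(0)}).
\]
The middle step uses the commutation established above, and the last step uses unitarity.

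There is no real obstacle here. The only point needing care is the commutation of $\nabla_f$ with the exponential, which is immediate from the power series. An alternative is to differentiate $t\mapsto\langle i\nabla_f g^{(t)},g^{(t)}\rangle$: this gives $\langle [i\Delta_f,i\nabla_f]g^{(t)},g^{(t)}\rangle=0$ by the same commutation.
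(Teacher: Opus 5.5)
Your proposal is correct and follows essentially the same route as the paper: unitarity of $\mathcal{S}[t,f]$ from self-adjointness of $\Delta_f=-\nabla_f^2$, commutation of $\nabla_f$ with $\mathcal{S}[t,f]$, and the same one-line inner-product computation. The only difference is cosmetic. You justify the commutation directly from the power series of $e^{-it\Delta_f}$, whereas the paper invokes a functional-calculus lemma stated for commuting self-adjoint operators, which fits $\nabla_f$ only after passing to the self-adjoint $i\nabla_f$; your justification is therefore slightly cleaner.
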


We then show that the rate of change of the expected location is equal to some smoothed version of the expected momentum. For that, we first define smoothing with respect to feature directions.

\begin{definition}[$f$-smoothing operator]\label{def:f_smoothing}
Let $f$ be a feature location. The \emph{$f$-smoothing operator} $W_f:\mathbb{C}^N\rightarrow\mathbb{C}^N$ is defined for every signal $g\in\mathbb{C}^N$ and vertex $v\in V$ by
\[
(W_{f} g)(v)= \sum_{w \in\mathcal{N}(v)} a_{v,w} (f(w) - f(v))^2g(w).
\]
\end{definition}

By definition, the $f$-smoothing operator mixes the values of the signal $g$ only along edges where the feature $f$ changes. It is hence interpreted as smoothing along the $f$ direction.

\begin{theorem}
[Expected feature location derivative under Schr\"odinger dynamics]
\label{thm:derivative_expected_feature}
Let $g^{(0)}:V\rightarrow\mathbb{C}$ be a normalized signal and $f:V\rightarrow\mathbb{R}$ a feature location. Let $g^{(t)}=\mathcal{S}[t,f]g^{(0)}$ for every $t\in\mathbb{R}$. Then, 
\begin{equation}
\label{eq:dynamics_smooth}
\frac{\partial}{\partial t}\mathcal{E}_{X_f}(g^{(t)})=2 \mathrm{Re}\left(\langle i\nabla_f g^{(t)}, W_f g^{(t)}\rangle\right).  
\end{equation}

\end{theorem}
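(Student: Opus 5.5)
The plan is a Heisenberg-picture computation: differentiate the expectation, reduce it to a commutator of $\Delta_f$ with $X_f$, and identify that commutator through the smoothing operator $W_f$ of Definition~\ref{def:f_smoothing}.

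\textbf{Step 1: differentiate the mean.} Since $g^{(t)}=e^{-it\Delta_f}g^{(0)}$ and $\Delta_f$ is a fixed finite matrix, we have $\frac{\partial}{\partial t}g^{(t)}=-i\Delta_f g^{(t)}$. Differentiating $\mathcal{E}_{X_f}(g^{(t)})=\langle X_f g^{(t)},g^{(t)}\rangle$ by the product rule, and using that the inner product is conjugate-linear in its second slot and that $\Delta_f$ is self-adjoint, gives
\[
\frac{\partial}{\partial t}\mathcal{E}_{X_f}(g^{(t)}) = -i\langle X_f\Delta_f g^{(t)},g^{(t)}\rangle + i\langle \Delta_f X_f g^{(t)},g^{(t)}\rangle = i\langle [\Delta_f,X_f]g^{(t)},g^{(t)}\rangle .
\]

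\textbf{Step 2: the basic commutator.} This is the key algebraic step. View $W_f$ as the real symmetric matrix with entries $(W_f)_{n,m}=a_{n,m}(f(n)-f(m))^2$; the diagonal vanishes, which matches Definition~\ref{def:f_smoothing}. Using Definition~\ref{def:fk_partial} and $X_f=\mathrm{diag}(f)$,
\[
(\nabla_f X_f - X_f\nabla_f)_{n,m}=a_{n,m}(f(n)-f(m))\bigl(f(m)-f(n)\bigr)=-(W_f)_{n,m}.
\]
Hence $[\nabla_f,X_f]=-W_f$.

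\textbf{Step 3: the Laplacian commutator.} By the Leibniz rule for commutators,
\[
[\nabla_f^2,X_f]=\nabla_f[\nabla_f,X_f]+[\nabla_f,X_f]\nabla_f=-\nabla_f W_f-W_f\nabla_f .
\]
Since $\Delta_f=-\nabla_f^2$, it follows that $[\Delta_f,X_f]=\nabla_f W_f+W_f\nabla_f$.

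\textbf{Step 4: recombine.} Write $g=g^{(t)}$. Substituting Step 3 into Step 1, and using $\nabla_f^*=-\nabla_f$ and $W_f^*=W_f$, the derivative becomes
\[
i\langle W_f g,-\nabla_f g\rangle+i\langle \nabla_f g,W_f g\rangle .
\]
The first term equals $-i\langle W_f g,\nabla_f g\rangle$, which is exactly $\overline{\,i\langle \nabla_f g,W_f g\rangle\,}$. The second term is $\langle i\nabla_f g,W_f g\rangle$. A complex number plus its conjugate is twice its real part, so the derivative equals $2\,\mathrm{Re}\langle i\nabla_f g,W_f g\rangle$, as claimed in \eqref{eq:dynamics_smooth}.

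\textbf{Expected obstacle.} The argument is essentially bookkeeping. The only places where care is needed are the sign conventions: conjugate-linearity in the second argument, the skew-adjointness of $\nabla_f$, and the sign of $[\nabla_f,X_f]$. These must combine so that the answer comes out as the real part rather than the imaginary part. I will double-check this by testing the identity on a two-node graph.
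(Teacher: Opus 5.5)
Your proposal is correct and follows the paper's proof essentially step for step: the derivative $i\langle[\Delta_f,X_f]g^{(t)},g^{(t)}\rangle$, the identity $[\nabla_f,X_f]=-W_f$ (the paper's lemma expressing the smoothing operator as a commutator), the Leibniz expansion $[\Delta_f,X_f]=\nabla_fW_f+W_f\nabla_f$, and the final recombination $z+\overline{z}=2\,\mathrm{Re}\,z$. In Step 4 you use $\nabla_f^*=-\nabla_f$ and $W_f^*=W_f$, while the paper uses the self-adjointness of $i\nabla_f$; these are the same fact, and your signs come out consistent.
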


The right-hand-side of Eq.~(\ref{eq:dynamics_smooth}) is interpreted as a smoothed version of the expected momentum $\mathcal{E}_{i\nabla_f}(g^{(t)})=\ip{i\nabla_f  g^{(t)}}{g^{(t)}}$. Hence, Theorem \ref{thm:derivative_expected_feature} states that the rate of change of the expected location is equal to a smoothed expected momentum. In Appendix~\ref{app:proof_of_thm:derivative_expected_feature}, we show that for smooth enough signals, the rate of change of the expected location is close to the exact expected momentum. Since the expected momentum is constant, the theorem  suggests that the rate of change of the expected location is roughly constant, as long as the signal stays smooth enough. This analysis hence justifies calling $i\nabla_f$ the momentum, or velocity, observable.

% We note that 
Theorem \ref{thm:derivative_expected_feature} is analogous to the classical case, where the rate of change of the expected location of a free particle is equal to its expected momentum, which is constant. See Appendix~\ref{sec:Schrodinger_classic} for more details. 
An equation for the dynamics of the variance is given in Appendix \ref{Dynamics of the Variance}.

\subsection{Achieving translations via feature modulation}
\label{Achieving Translations via Feature Modulation}

We wish to be able to translate the expected feature location of signals using Schr\"odinger operators. In typical graph data, all features are real. However, as we show next, for real value signals, the expected momentum is always zero.  

\begin{example}
    Let $g$ be a real-valued signal and $f$ a (real) feature location . The corresponding expected momentum is $\mathcal{E}_{i\nabla_{f}}(g) =\ip{i\nabla_{f}g}{g} = i\ip{\nabla_{f}g}{g}$. Since $f$ and $g$ are real valued, so is $\ip{\nabla_{f}g}{g}$. Hence, $\mathcal{E}_{i\nabla_{f}}(g)$ is imaginary. However, it must also be real as an expected value of a self-adjoint observable. Indeed, $\ip{i\nabla_{f}g}{g}=\ip{g}{i\nabla_{f}g}=\overline{\ip{i\nabla_{f}g}{g}}$. Hence, we must have  $\mathcal{E}_{i\nabla_{f}}(g)=0$. 
\end{example}

Hence, given a real-valued signal, to be able to route it between feature regions, we must first modify it to be complex-valued. We do this via the feature modulation operator.
\begin{definition}
[Feature modulation]
Given a feature location $h:V\rightarrow\mathbb{R}$ and  a  \emph{phase} $\theta\in\mathbb{R}$, the corresponding \emph{feature modulation operator} is defined to be $D[\theta h]:={\rm diag}(e^{i\theta h})$, where $e^{i\theta h}$ is the vector with entry $(e^{i\theta h})(v)=e^{i\theta h(v)}$ for each node $v\in V$.
\end{definition}
Next, we show that modulating a real-valued signal gives nonzero expected momentum in general.
\begin{theorem}[Expected momentum of modulated signal]
\label{thm:non_zero_expected_momentum_modulated_signals}
Given a normalized signal $g:V\rightarrow\mathbb{R}$,  feature locations $f,h:V\rightarrow\mathbb{R}$, and a  phase $\theta\in\mathbb{R}$, the expected momentum of  $D[\theta h]g$  satisfies
\begin{equation}
\label{eq:mod_exp}
\mathcal{E}_{i\nabla_f}(D[\theta h]g) = \sum_{(m,n) \in E} a_{m,n} g(m) g(n) (f(n) - f(m)) \sin(\theta(h(n) - h(m))).
\end{equation}
\end{theorem}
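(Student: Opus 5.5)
The proof is a direct computation: expand the expected momentum of $u:=D[\theta h]g$ entrywise, then use the fact that it is real (it is the expectation of the self-adjoint observable $i\nabla_f$) to replace the complex exponential by its real part. No earlier theorem is needed beyond the definitions of $\nabla_f$, $D[\theta h]$ and the inner product.

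\emph{Step 1: expand.} We have $u(n)=e^{i\theta h(n)}g(n)$. Since $g$ is real, $\overline{u(n)}=e^{-i\theta h(n)}g(n)$. With the convention $\langle a,b\rangle=\sum_n a(n)\overline{b(n)}$ and Definition~\ref{def:fk_partial},
\[
\mathcal{E}_{i\nabla_f}(u)=\langle i\nabla_f u,u\rangle
= i\sum_{n,m} a_{n,m}\,(f(n)-f(m))\,g(m)g(n)\,e^{i\theta(h(m)-h(n))}.
\]
Only pairs with $a_{n,m}\neq 0$ contribute, so the sum runs over the ordered pairs $(m,n)\in E$. Because the graph is undirected, $E$ contains both orientations of each edge.

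\emph{Step 2: take the real part.} As in the Example preceding the definition of feature modulation, $\langle i\nabla_f u,u\rangle=\langle u,i\nabla_f u\rangle=\overline{\langle i\nabla_f u,u\rangle}$, so $\mathcal{E}_{i\nabla_f}(u)$ is real. Hence it equals the real part of the sum in Step 1. All the factors $a_{n,m}$, $f$, $g$ are real, so termwise we only need
\[
\mathrm{Re}\bigl(i e^{i\theta(h(m)-h(n))}\bigr)=-\sin\bigl(\theta(h(m)-h(n))\bigr)=\sin\bigl(\theta(h(n)-h(m))\bigr).
\]
Substituting this into the sum gives exactly \eqref{eq:mod_exp}, with $a_{m,n}=a_{n,m}$ by symmetry.

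\emph{Cross-check by pairing orientations.} Alternatively, one can group the terms for $(n,m)$ and $(m,n)$. The factor $f(n)-f(m)$ flips sign under the swap and the exponential is conjugated, so each pair combines into $i\cdot 2i\sin(\cdot)$, which is real. Summed over unordered edges this gives twice the per-ordered-pair term, consistent with summing over ordered pairs in Step 2.

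The only delicate point is bookkeeping. One must track which index carries the complex conjugate, since this fixes the sign of the sine. One must also read $\sum_{(m,n)\in E}$ as a sum over ordered pairs, so that no stray factor of $2$ appears. Everything else is routine.
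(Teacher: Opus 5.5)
Your proof is correct and follows the paper's direct entrywise expansion. The only difference is how the sine is extracted: the paper pairs the two orientations of each edge, writing the ordered-pair sum as $\tfrac{i}{2}\sum_{(m,n)\in E}$ of the symmetrized summand and using $e^{-i\phi}-e^{i\phi}=-2i\sin\phi$ (your cross-check), whereas your Step 2 uses the reality of $\langle i\nabla_f u,u\rangle$ from self-adjointness to take real parts termwise — equally valid, and your reading of $E$ as ordered pairs matches the paper's factor-of-$2$ bookkeeping.
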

Theorem \ref{thm:non_zero_expected_momentum_modulated_signals} can be interpreted as follows. Consider the edge signals $e_{g,h},e_f:E\rightarrow\mathbb{R}$ defined by
\[e_{g,h}(v,w)=g(v) g(w) \sin\big(\theta(h(w) - h(v))\big) , \quad  e_f(v,w)=f(v) - f(w).\]
The right-hand-side of Eq.~(\ref{eq:mod_exp}) is  the  edge-space inner product $\ip{e_{g,h}}{e_f}$.
Hence, as long as we choose a modulating feature $h$ such that $e_{g,h}$ and $e_f$ are not orthogonal, the expected momentum of $D[\theta h]g$ will be nonzero.

\subsection{Dynamics of multi-channel signals and observables}
\label{Dynamics of Multi-Channel Signals and Observables}
We next derive the dynamics under the Schrödinger operator for multidimensional locations. 
\begin{theorem}[Expected multi-feature derivative]
\label{thm:derivative_expected_multi_feature}
Given the Schr\"odinger Laplacian $\Delta_f = -\sum_{j \in [K]}\nabla_{f_j}^2$, a normalized signal $g^{(0)}$, and $g^{(t)}=\mathcal{S}[t,f]g^{(0)}$, for every $k\in[K]$ we have
\begin{equation}
    \label{eq:EmultiFeature}
    \frac{\partial}{\partial t}\mathcal{E}_{X_{f_k}}(g^{(t)})=  
     2 \mathrm{Re}\left(\langle i\nabla_{f_k} g^{(t)}, W_{f_k} g^{(t)}\rangle\right) -\sum_{j\neq k} \ip{[i\nabla_{f_j}^2,X_{f_k}]g^{(t)}}{g^{(t)}}.
\end{equation}
\end{theorem}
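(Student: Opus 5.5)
The plan is to reduce everything to the Heisenberg-type identity for expectations under unitary evolution. Since $\Delta_f$ is self-adjoint and $g^{(t)}=e^{-it\Delta_f}g^{(0)}$, we have $\frac{\partial}{\partial t}g^{(t)}=-i\Delta_f g^{(t)}$. Differentiating $\mathcal{E}_{X_{f_k}}(g^{(t)})=\ip{X_{f_k}g^{(t)}}{g^{(t)}}$ with the product rule gives
\[
\frac{\partial}{\partial t}\mathcal{E}_{X_{f_k}}(g^{(t)})=\ip{X_{f_k}(-i\Delta_f)g^{(t)}}{g^{(t)}}+\ip{X_{f_k}g^{(t)}}{-i\Delta_f g^{(t)}}.
\]
Using self-adjointness of $\Delta_f$ and $X_{f_k}$, this equals $\ip{i[\Delta_f,X_{f_k}]g^{(t)}}{g^{(t)}}$, up to a sign that I will fix carefully. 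Substituting $\Delta_f=-\sum_j\nabla_{f_j}^2$ and using linearity of the commutator, the derivative splits into a sum over $j\in[K]$ of terms $-\ip{[i\nabla_{f_j}^2,X_{f_k}]g^{(t)}}{g^{(t)}}$. The terms with $j\neq k$ are exactly the second sum in Eq.~(\ref{eq:EmultiFeature}) and are left untouched. This also explains the form of the statement: the multi-feature result is just the single-feature calculation plus cross terms.

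For the $j=k$ term, I note that it has the same algebraic form as the whole derivative in the single-feature case $K=1$, where $\Delta_f=-\nabla_{f_k}^2$. Theorem~\ref{thm:derivative_expected_feature} computes precisely that quantity for the feature $f_k$. To use it, I will observe that its proof only uses the commutator identity at a fixed time, applied to an arbitrary normalized vector. Equivalently, I would show directly the pointwise identity
\[
-\ip{[i\nabla_{f_k}^2,X_{f_k}]u}{u}=2\,\mathrm{Re}\ip{i\nabla_{f_k}u}{W_{f_k}u}\quad\text{for all } u\in\mathbb{C}^N,
\]
and then apply it to $u=g^{(t)}$. This step is legitimate because the identity is a statement about operators, independent of the dynamics generating $u$.

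If I must verify this identity rather than cite it, I will expand $[\nabla^2,X]=\nabla[\nabla,X]+[\nabla,X]\nabla$ with $\nabla=\nabla_{f_k}$ and $X=X_{f_k}$. Entrywise, $[\nabla,X]_{n,m}=a_{n,m}(f(n)-f(m))(f(m)-f(n))=-a_{n,m}(f(n)-f(m))^2$ for $n\neq m$, with zero diagonal. So $[\nabla,X]=-W_{f_k}$, and $W_{f_k}$ is a real symmetric matrix. Then
\[
\ip{[\nabla^2,X]u}{u}=-\ip{Wu}{\nabla^*u}-\ip{\nabla u}{Wu}=\ip{Wu}{\nabla u}-\ip{\nabla u}{Wu}=-2i\,\mathrm{Im}\ip{\nabla u}{Wu},
\]
using skew-adjointness of $\nabla$. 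Multiplying by $-i$ gives $-2\,\mathrm{Im}\ip{\nabla u}{Wu}=2\,\mathrm{Re}\ip{i\nabla u}{Wu}$, which matches the identity above.

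The main obstacle is bookkeeping of signs and factors of $i$: the direction of the commutator in the Heisenberg identity, the placement of $i$ inside $[i\nabla_{f_j}^2,X_{f_k}]$, and the conjugate-linearity in the second slot. I will fix conventions once and for all by checking the $j=k$ term against Theorem~\ref{thm:derivative_expected_feature}. The cross terms then inherit the same sign automatically.
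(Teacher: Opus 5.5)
Your proposal is correct and follows essentially the same route as the paper. Both use the Heisenberg-type identity $\frac{\partial}{\partial t}\mathcal{E}_{X_{f_k}}(g^{(t)})=\ip{i[\Delta_f,X_{f_k}]g^{(t)}}{g^{(t)}}$ (with exactly this sign), split off the cross terms $j\neq k$, and treat the $j=k$ term with $[\nabla_{f_k},X_{f_k}]=-W_{f_k}$ and the commutator product rule. The only cosmetic difference is in that last step: the paper uses self-adjointness of $i\nabla_{f_k}$ to reach $\overline{z}+z$ directly, whereas you use skew-adjointness of $\nabla_{f_k}$ and convert $-2\,\mathrm{Im}\ip{\nabla_{f_k}u}{W_{f_k}u}$ into $2\,\mathrm{Re}\ip{i\nabla_{f_k}u}{W_{f_k}u}$, and your signs and factors of $i$ all check out.
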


Ideally, we would like the rate of change of the expected $X_{f_k}$ location to be a smoothed version of the expected $i\nabla_{f_k}$ momentum. However, we see that in Eq.~(\ref{eq:EmultiFeature}) there are additional cross terms. This leads to the following definition.

\begin{definition}[$\epsilon$-commuting features]
\label{def:epsilon-commuting features}
The feature locations $(f_1, f_2, \ldots, f_K)$ are called \emph{$\epsilon$-commuting} if for every pair $i\neq j\in [K]$, the matrix $E=([\nabla_{f_j}^2,X_{f_i}])_{i,j\in[K]}$ 
satisfies 
 $\|E\|_{\rm op} \leq \epsilon$.
\end{definition}

For a sequence of $\epsilon$ commuting features, the Schr\"odinger dynamics satisfies
\[\abs{\frac{\partial}{\partial t}\mathcal{E}_{X_{f_k}}(g^{(t)}) - 2 \mathrm{Re}\big(\langle i\nabla_f g^{(t)}, W_f g^{(t)}\rangle\big)} \leq (K-1)\epsilon.\]
Hence, here as well we have the interpretation that for smooth enough signals, the rate of change of all expected locations is close to their corresponding expected momenta. see Theorem~\ref{thm:deviation_bounds_multi_feature} for details.

\subsection{Orthogonalizing the feature directions}

The signal $q:V\rightarrow\mathbb{R}^M$ in the raw data is not $\epsilon$-commuting in general. Hence, in Schr\"odinger GNNs, as a first step, we transform the feature $q$ to a sequence of features $f_1,\ldots,f_K$ which are $\epsilon$-commuting. For example, one can plug each node feature $q(n)$ into a simple MLP  or a linear transformation $\Theta$, to obtain $f(n)=\Theta(q(n))$. The transformation $\Theta$ is optimized with respect to the following target. 

\begin{definition}[Position-momentum optimization (PMO)]
\label{def:minimization}
Given a signal $q\in\mathbb{R}^{N\times M}$ and $\lambda>0$,  the  linear transformation $T \in \mathbb{R}^{M \times K}$,  mapping  $q$ to  $f = q T\in\mathbb{R}^{N\times K}$, is optimized with respect to
\[
\min_{T\in \mathbb{R}^{M \times K}} \sum_{i \neq j}^K \|[\nabla_{f_j}^2, X_{f_i}]\|_{\rm op}^2 + \lambda\sum_{k=1}^K \left( \|\nabla_{f_k}\|_\infty-1\right)^2,
\] 

\end{definition}

\subsection{Improving signal routing through modulation}
\label{Improving Signal Routing Through Modulation}

In typical situations, if the goal is to transmit the real-valued signal $g^{(0)}$ towards a feature location $r$, modulating the signal helps. Namely, given feature locations $h,f$ and time $t$, there is a nonzero choice of the modulation parameter $\theta$ such that the signal routing measure $\mathcal{P}_{X_f}(g^{(0)},\mathcal{S}[t,f]D[\theta h]g^{(0)},r)$ is lower than $\mathcal{P}_{X_f}(g^{(0)},\mathcal{S}[t,f]g^{(0)},r)$. We propose an analysis supporting this in Appendix \ref{Improving Signal Routing Through Modulation A}. Hence, since we assume that in order to solve the task there are special pairs of graph regions than need to communicate, modulations are an important ingredient that promote such long term interactions when propagating the signal via Schr\"odinger operators.

\begin{figure}
    \centering
    \includegraphics[width=1\linewidth]{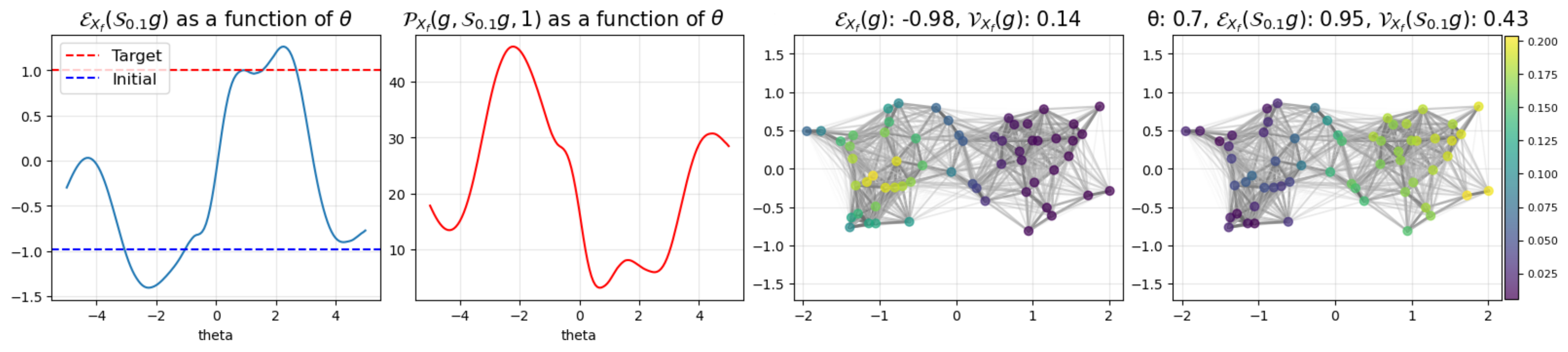}
    \caption{Signal transport under modulation.}
    \label{fig:theta_transport}
\end{figure}

In Figure \ref{fig:theta_transport} we give an example of a graph, initial signal $g^{(0)}$ with $\mathcal{E}_{X_f}(g)=-0.98$,   modulating feature $h=f$, and desired location value $r=1$. We show that  choosing an appropriate modulation $\theta$ and propagating the signal using the Schr\"odinger operator to time $t=0.1$ improves the signal routing measure with respect to not modulating.
For further details of this example, refer to Appendix~\ref{app: Optimizing signal transport via modulation}. 

\subsection{Schr\"odinger signal processing and GNNs}
\paragraph{Schr\"odinger signal processing.}
We define Schr\"odinger filters by considering linear combinations of the evolutions of the modulated signal with different modulations and times.  
Let $f:V\rightarrow\mathbb{R}^K$ be location features and $D\in\mathbb{N}$ be the output feature dimension. To use linear algebra notations, let us now treat signals and location features as vectors in $\mathbb{C}^{N\times J}$ and $\mathbb{R}^{N\times K}$ respectively. A Schr\"odinger filter $\Psi$ is parameterized by 
$(t_{m}\in\mathbb{R}, \theta_{m}\in\mathbb{R},\mathbf{W}^{(m)}\in\mathbb{C}^{J\times D},\mathbf{T}^{(m)}\in\mathbb{R}^{K\times 1})_{m\in [M]}$, and maps signals $g\in \mathbb{C}^{N\times J}$ to 
\[\Psi(g)= \sum_{m=1}^M\mathcal{S}[t_{m},f] D[\theta_{m} f \cdot \mathbf{T}^{(m)}] g \cdot \mathbf{W}^{(m)}.
\] 

\paragraph{Schr\"odinger GNNs.}
The application of a Schr\"odinger GNN is a two-step procedure. First, the input signal is optimized via Position-Momentum Optimization (PMO) (Definition \ref{def:minimization}) to obtain the location features $f$. Then a layered architecture with Schr\"odinger filters and activation functions is applied on the locations features and the input signal. For nonlinear activations, we apply standard activations (e.g., ReLU) separately to the real and imaginary parts: $\sigma(z) = \mathrm{ReLU}(\mathrm{Re}(z)) + i \cdot \mathrm{ReLU}(\mathrm{Im}(z))$ or we use the absolute value $\sigma(z) =|z|$. 
See Appendix \ref{subsec:arch_details} for implementation details and for computational complexity analysis.

\begin{figure}[t]
\centering
\includegraphics[width=1\linewidth]{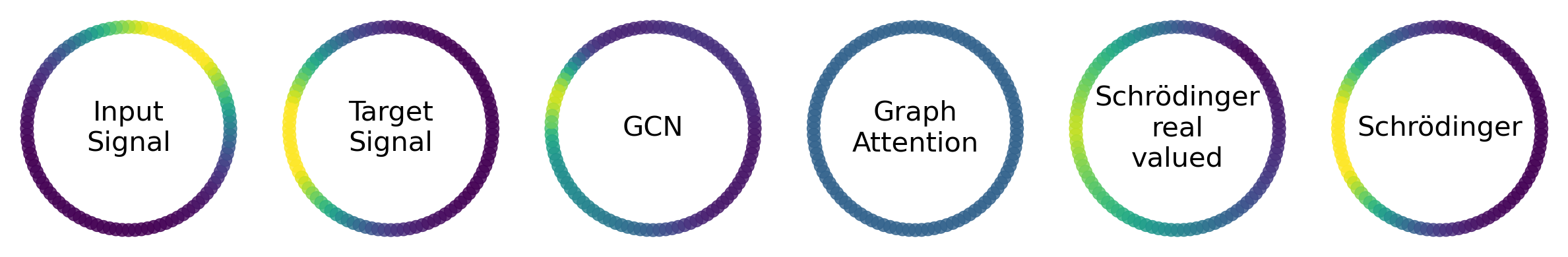}
\caption{
Signal propagation on a ring graph. 
From left to right: the noisy input Gaussian signal, the target signal obtained by translating the Gaussian by a fixed angular displacement, and the outputs predicted by GCN, GAT, the real-valued Schr\"odinger GNN ablation, and the full Schr\"odinger GNN. 
Only the full Schr\"odinger GNN accurately transports the input signal to the target location.
}
\label{fig:ring_gaussian_outputs}
\end{figure}

\paragraph{Sensitivity analysis.} In oversquashing analysis, one approach for modeling the sensitivity of the GNN in a given vertex to the signal received from another vertex is via the gradients of the MPNN \citep{topping2022understandingoversquashingbottlenecksgraphs,BlackEtAl2023ER, di2023over}. In Appendix \ref{Sensitivity Analysis of Signal Routing via Schr\"odinger Operators} we conduct a similar analysis for our setting. We focus on a single Schr\"odinger GNN layer. We first suppose that the modulation feature $h$, phase $\theta$, and time $t$ were chosen so that  each signal chunk is  transmitted to a meaningful destination location $r$ (appropriate for the task). Under this assumption, one would like to show that the derivative of the output signal about the location $r$ with respect to the location of the input chunk is not small, as $r$ is seen as the "right" destination for solving the task. We show that this derivative is exactly 1. This shows that all of the signal information reached its destination. In contrast to oversquashing analyses, 
we do not want high sensitivity of the signal at locations other than the destination $r$ with respect to the 
 input chunk, as transmitting information between such location pairs is not seen as informative for the task.

\section{Experiments}
\label{sec:experiments}

\paragraph{Toy example: signal propagation on a ring.}
We first demonstrate the ability of Schr\"odinger GNN to direct the propagation of the signal through a regression task on a ring graph. 
Consider a ring graph discretizing the unit circle, and the location feature $x=\cos(\theta)$, where $\theta$ is the angle. 
Each input signal is a noisy Gaussian on the ring with random mean $\mu$ and variance $\sigma^2$, and the target is obtained by translating the same Gaussian by a prescribed angular displacement $d$.
The task is to map the input signal to the target signal. 
Figure~\ref{fig:ring_gaussian_outputs} compares the predicted signal produced by Schr\"odinger GNN against GCN~\citep{kipf2017semisupervisedclassificationgraphconvolutional}, GAT~\citep{velickovic2018graphattentionnetworks}, and a real-valued Schr\"odinger GNN. 
Table~\ref{tab:ring_transport_results} reports the corresponding test losses. 
These results show that only Schr\"odinger GNN, with a modulated input signal, accurately learns the prescribed transport.

\begin{table}[h]
\centering
 \caption{
 Test loss for the ring signal transport task. 
The Schr\"odinger GNN achieves the lowest error.
 }
 \resizebox{0.32\textwidth}{!}{
 \begin{tabular}{lr}
 \toprule
 Model & Test Loss $(\downarrow)$\\
\midrule
GCN  & 0.6644\scriptsize{$\pm$0.0720}   \\
GAT & 0.6050\scriptsize{$\pm$0.0052} \\
Schr\"odinger real & 0.9334\scriptsize{$\pm$0.0514} \\
Schr\"odinger & \cellcolor{blue!20}\textbf{3e-04\scriptsize{$\pm$2e-04}} \\
\bottomrule
 \end{tabular}
 }
 \label{tab:ring_transport_results}
\end{table}

\paragraph{Graph classification on TU dataset under architecture-matched evaluation.}
We next evaluate our approach on graph classification tasks from the TU dataset \citep{morris2020tudatasetcollectionbenchmarkdatasets}, using ENZYMES, IMDB, MUTAG, and PROTEINS. 
To isolate the effect of the propagation mechanism from differences in model depth or capacity, we use an architecture-matched evaluation protocol. 
We compare Schr\"odinger GNN with GIN \citep{xu2019howpowerfularegraphneuralnetworks}, GCN \citep{kipf2017semisupervisedclassificationgraphconvolutional}, GAT \citep{velickovic2018graphattentionnetworks}, UniGCN \citep{kiani2024unitaryconvolutionslearninggraphs}, adaptive unitary, Schr\"odinger, and their PMO variants. 
In this setting, every method is implemented with the same macro-architecture: six graph convolution or message-passing layers followed by the same graph-level readout and a final linear prediction layer. 
We match the number of trainable parameters across methods. 
The results are reported in Table~\ref{tab:architecture_matched}. 
We see that our method achieves the best performance on all four datasets, suggesting that the Schr\"odinger propagation is effective and PMO further improves its performance.

\begin{table}[h]
\centering
 \caption{
Architecture- and parameter-matched graph classification results on TU datasets, reported as test AP $(\uparrow)$. 
 }
 \resizebox{0.65\textwidth}{!}{
 \begin{tabular}{lcccc}
\toprule
 & ENZYMES & IMDB & MUTAG & PROTEINS \\
\midrule
GIN & 31.93\scriptsize{$\pm$3.16} & 69.22\scriptsize{$\pm$3.14} & 78.19\scriptsize{$\pm$5.57} & 71.88\scriptsize{$\pm$3.08} \\
GCN  & 31.66\scriptsize{$\pm$5.35} & 50.60\scriptsize{$\pm$4.10} & 73.24\scriptsize{$\pm$6.27} & 71.41\scriptsize{$\pm$3.04} \\
GAT  & 31.13\scriptsize{$\pm$3.48} & 49.54\scriptsize{$\pm$2.54} & 75.21\scriptsize{$\pm$6.41} & 72.31\scriptsize{$\pm$3.28} \\
UniGCN  & 40.30\scriptsize{$\pm$6.63} & 65.42\scriptsize{$\pm$2.80} & 75.74\scriptsize{$\pm$6.67} & 69.19\scriptsize{$\pm$3.01} \\
\midrule
Adaptive Unitary & 41.60\scriptsize{$\pm$5.18} & 65.46\scriptsize{$\pm$2.48} & 75.53\scriptsize{$\pm$5.95} & 71.79\scriptsize{$\pm$3.33} \\
Adaptive Unitary PMO & 41.83\scriptsize{$\pm$4.44} & 66.27\scriptsize{$\pm$3.01} & 75.62\scriptsize{$\pm$6.24} & 71.77\scriptsize{$\pm$2.84}\\
Schr\"odinger & 43.50\scriptsize{$\pm$4.89} & 65.86\scriptsize{$\pm$2.83} & 75.42\scriptsize{$\pm$6.11} & 71.57\scriptsize{$\pm$2.56} \\
Schr\"odinger PMO & \cellcolor{blue!20}\textbf{43.70\scriptsize{$\pm$3.37}} & \cellcolor{blue!20}\textbf{69.60\scriptsize{$\pm$2.85}} & \cellcolor{blue!20}\textbf{79.25\scriptsize{$\pm$6.19}} & \cellcolor{blue!20}\textbf{72.68\scriptsize{$\pm$3.05}} \\
\bottomrule
 \end{tabular}
}
 \label{tab:architecture_matched}
\end{table}

\paragraph{Heterophilous node classification.}
We also assess Schr\"odinger GNN on heterophilous node classification benchmarks from  \citep{platonov2023critical}, including Roman-empire,
Amazon-ratings, Minesweeper, Tolokers, and Questions.
The competing baselines include GCN \citep{kipf2017semisupervisedclassificationgraphconvolutional}, SAGE \citep{hamilton2017inductive}, GAT \citep{velickovic2018graphattentionnetworks}, GT \citep{dwivedi2021generalization}, UniGCN \citep{kiani2024unitaryconvolutionslearninggraphs}, and Lie UniGCN. 
Table~\ref{tab:heterophilous_results} shows the classification results. 
Schr\"odinger GNN  achieves the best performance on three out of five datasets, and on the remaining two, it obtains the second best, indicating that Schr\"odinger propagation is also well-suited for heterophilous graphs.

\begin{table}[h]
\centering
\caption{ 
Heterophilous node classification performance.
Results are reported as mean $\pm$ standard deviation. 
Results of competing methods marked with $\dag$ are taken from \citep{platonov2023critical, kiani2024unitaryconvolutionslearninggraphs}. 
}
 \resizebox{0.7\textwidth}{!}{
 \begin{tabular}{lccccccccccr}
\toprule  
& Roman-Empire & Amazon-Rating & Minesweeper & Tolokers & Questions \\ 
\cmidrule{2-6}
& Test AP $\uparrow$ & Test AP $\uparrow$ & ROC AUC $\uparrow$ & ROC AUC $\uparrow$ & ROC AUC $\uparrow$ \\ 
\midrule
GCN$^\dag$ & 73.69\scriptsize{$\pm$0.74} & 48.70\scriptsize{$\pm$0.63} & 89.75\scriptsize{$\pm$0.52} & 83.64\scriptsize{$\pm$0.67} & 76.09\scriptsize{$\pm$1.27} \\
SAGE$^\dag$ & 85.74\scriptsize{$\pm$0.67} & 53.63\scriptsize{$\pm$0.39} & 93.51\scriptsize{$\pm$0.57} & 82.43\scriptsize{$\pm$0.44} & 76.44\scriptsize{$\pm$0.62} \\
GAT$^\dag$ & 80.87\scriptsize{$\pm$0.30} & 49.09\scriptsize{$\pm$0.63} & 92.01\scriptsize{$\pm$0.68} & 83.70\scriptsize{$\pm$0.47} & 77.43\scriptsize{$\pm$1.20} \\
\midrule
GT$^\dag$ & 86.51\scriptsize{$\pm$0.73} & 51.17\scriptsize{$\pm$0.66} & 91.85\scriptsize{$\pm$0.76} & 83.23\scriptsize{$\pm$0.64} & 77.95\scriptsize{$\pm$0.68}\\
\midrule
UniGCN$^\dag$ & \cellcolor{orange!20}\underline{87.21\scriptsize{$\pm$0.76}} & \cellcolor{blue!20}\textbf{55.34\scriptsize{$\pm$0.74}} & 94.27\scriptsize{$\pm$0.58} & 84.83\scriptsize{$\pm$0.68} & 79.21\scriptsize{$\pm$0.79} \\
Lie UniGCN$^\dag$ & 85.50\scriptsize{$\pm$0.22} & 52.35\scriptsize{$\pm$0.26} & \cellcolor{orange!20}\underline{96.11\scriptsize{$\pm$0.10}} & \cellcolor{blue!20}\textbf{85.18\scriptsize{$\pm$0.43}} & \cellcolor{orange!20}\underline{80.01\scriptsize{$\pm$0.43}} \\
\midrule
Schr\"odinger & \cellcolor{blue!20}\textbf{88.56\scriptsize{$\pm$0.71}} & \cellcolor{orange!20}\underline{54.26\scriptsize{$\pm$0.40}} & \cellcolor{blue!20}\textbf{96.31\scriptsize{$\pm$0.49}} & \cellcolor{orange!20}\underline{85.09\scriptsize{$\pm$0.17}} & \cellcolor{blue!20}\textbf{80.14\scriptsize{$\pm$0.20}}\\
\bottomrule
 \end{tabular}
}
 \label{tab:heterophilous_results}
\end{table}

\paragraph{Long range graph benchmark.}
Finally, we evaluate our approach on the long range graph benchmark (LRGB) \citep{dwivedi2022long}.
We consider four tasks: (i) Peptides-Func, a graph classification task, (ii) Peptides-Struct, a graph regression task, and (iii) PascalVOC-SP and (iv) COCO-SP, which formulate semantic image segmentation as a node-classification task on superpixel graphs.
These benchmarks are designed to test whether a model can capture long-range dependencies beyond local neighborhood aggregation.
We compare our method with GCN \citep{kipf2017semisupervisedclassificationgraphconvolutional}, GINE \citep{xu2019howpowerfularegraphneuralnetworks}, GatedGCN \citep{bresson2018residualgatedgraphconvnets}, GUMP \citep{qiu2024graphunitarymessagepassing}, GPS \citep{rampavsek2022recipe}, DRew \citep{gutteridge2023drew}, Exphormer \citep{shirzad2023exphormersparsetransformersgraphs}, GRIT \citep{ma2023graphinductivebiasestransformers}, Graph ViT \citep{he2023generalizationvitmlpmixergraphs}, CRAWL \citep{tonshoff2021walking}, UniGCN \citep{kiani2024unitaryconvolutionslearninggraphs}, and Lie UniGCN. 
The results are reported in Table~\ref{tab:long_range_results}. 
Schr\"odinger GNN performs strongly across LRGB tasks, where it outperforms all the baselines on the Peptides-Func dataset, and it achieves the second-best among the other three tasks. 
This suggests that Schr\"odinger propagation is effective at learning long-range signals.

\begin{table}[h]
\centering
 \caption{
Comparison of Schr\"odinger GNN with competing GNN architectures on the LRGB dataset.
Results of competing methods marked with $\dag$ are taken from \citep{tonshoff2023wheredidgapgoreassessinglongrangegraph, kiani2024unitaryconvolutionslearninggraphs}.
 }

 \resizebox{0.7\textwidth}{!}{
 \begin{tabular}{lccccccccccr}
\toprule  
& Peptides-Func & Peptides-Struct & PascalVOC-SP  & COCO-SP \\ 
\cmidrule{2-5}
& Test AP $\uparrow$ & Test MAE $\downarrow$ & Test F1 $\uparrow$ & Test F1 $\uparrow$ \\ 
\midrule
GCN$^\dag$ & 0.6860\scriptsize{$\pm$0.0050} & 0.2460\scriptsize{$\pm$0.0007} & 0.1338\scriptsize{$\pm$0.0007} & 0.2078\scriptsize{$\pm$0.0031} \\
GINE$^\dag$ & 0.6621\scriptsize{$\pm$0.0067} & 0.2473\scriptsize{$\pm$0.0017} & 0.2125\scriptsize{$\pm$0.0009} & 0.2718\scriptsize{$\pm$0.0054} \\
GatedGCN$^\dag$ & 0.6765\scriptsize{$\pm$0.0047} & 0.2477\scriptsize{$\pm$0.0009} & 0.2922\scriptsize{$\pm$0.0018} & 0.3880\scriptsize{$\pm$0.0040} \\
GUMP$^\dag$ & 0.6843\scriptsize{$\pm$0.0037} & 0.2564\scriptsize{$\pm$0.0023} & - & - \\ 
\midrule
GPS$^\dag$ & 0.6534\scriptsize{$\pm$0.0091} & 0.2509\scriptsize{$\pm$0.0014} & \cellcolor{blue!20}\textbf{0.3884\scriptsize{$\pm$0.0055}} & 0.4440\scriptsize{$\pm$0.0065} \\
DRew$^\dag$ & 0.7150\scriptsize{$\pm$0.0044} & 0.2536\scriptsize{$\pm$0.0015} & - & 0.3314\scriptsize{$\pm$0.0024} \\
Exphormer$^\dag$ & 0.6527\scriptsize{$\pm$0.0043} & 0.2481\scriptsize{$\pm$0.0007} & 0.3430\scriptsize{$\pm$0.0008} & 0.3960\scriptsize{$\pm$0.0027} \\
GRIT$^\dag$ & 0.6988\scriptsize{$\pm$0.0082} & 0.2460\scriptsize{$\pm$0.0012} & - & - \\
Graph ViT$^\dag$ & 0.6942\scriptsize{$\pm$0.0075} & 0.2449\scriptsize{$\pm$0.0016} & - & - \\
CRAWL$^\dag$ & 0.7074\scriptsize{$\pm$0.0032} & 0.2506\scriptsize{$\pm$0.0022} & - & \cellcolor{blue!20}\textbf{0.4588\scriptsize{$\pm$0.0079}} \\
\midrule
UniGCN$^\dag$ & 0.7072\scriptsize{$\pm$0.0035} & \cellcolor{blue!20}\textbf{0.2425\scriptsize{$\pm$0.0009}} & 0.2852\scriptsize{$\pm$0.0016} & 0.3516\scriptsize{$\pm$0.0070} \\
Lie UniGCN$^\dag$ & \cellcolor{orange!20}\underline{0.7173\scriptsize{$\pm$0.0061}} & 0.2460\scriptsize{$\pm$0.0011} & 0.3153\scriptsize{$\pm$0.0035} & 0.4005\scriptsize{$\pm$0.0067} \\
\midrule 
Schr\"odinger & \cellcolor{blue!20}\textbf{0.7207\scriptsize{$\pm$0.0099}} & \cellcolor{orange!20}\underline{0.2439\scriptsize{$\pm$0.0011}} & \cellcolor{orange!20}\underline{0.3507\scriptsize{$\pm$0.0019}} & \cellcolor{orange!20}\underline{0.4259\scriptsize{$\pm$0.0034}} \\
\bottomrule
 \end{tabular}
}
 \label{tab:long_range_results}
\end{table}

\paragraph{Diagnosis of signal flow with windows.}
To directly diagnose whether a trained model transports information across the graph, we measure whether the trained layer moves the signal along the learned location features. 
For each graph, we split the node signal into localized windows according to the values of the location features. 
Each window isolates the part of the signal concentrated around a small region in feature space. We then pass each windowed signal through one trained layer and compare its average location before and after the layer. 
The shift of the window is measured as the change in its average location along the corresponding feature coordinate. To make this quantity comparable across graphs and features, we normalize the shift by the standard deviation of that location feature. 
We compute this relative shift for every layer, average it over the PascalVOC-SP dataset \citep{dwivedi2022long}, and repeat the same process for evaluating our Schr\"odinger GNN, GCN \citep{kipf2017semisupervisedclassificationgraphconvolutional}, and unitary GCN \citep{kiani2024unitaryconvolutionslearninggraphs}.
Figure~\ref{fig:pascal_mean_relative_shift} shows the mean relative shift across layers. 
Our method demonstrates a clear and sustained shift throughout the network. 
The Schr\"odinger GNN, with unitary propagation and feature-dependent modulation, enables the network to create nonzero effective momentum and route signal mass across the graph. Thus, it confirms that our model learns layer-wise 
signal flow.

\begin{figure}[h]
\begin{center}
     \includegraphics[width=0.45\textwidth]{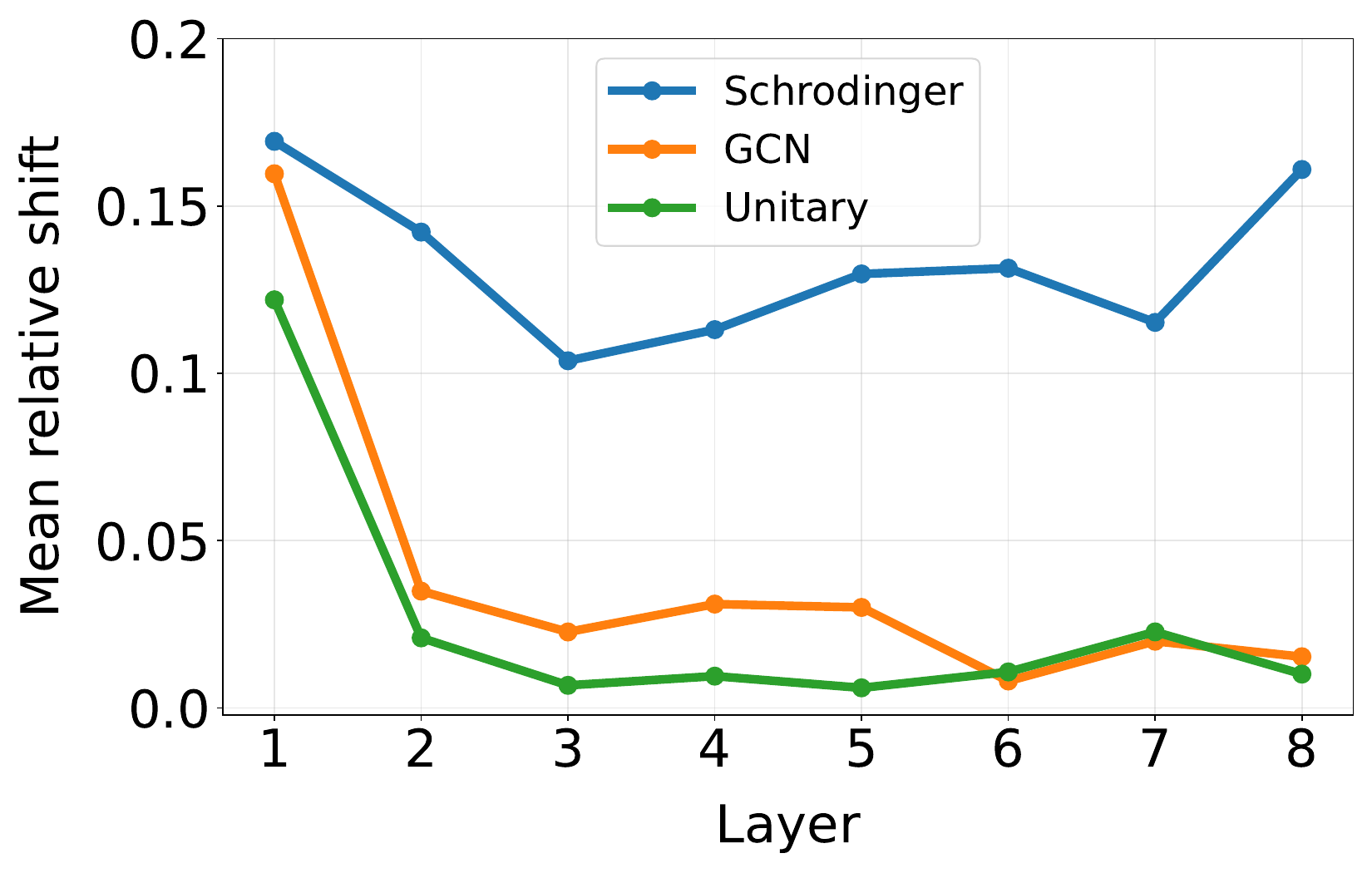}
    \caption{Diagnosis of signal flow on PascalVOC-SP. 
    The Schr\"odinger GNN maintains a large relative shift across layers, indicating that signal  transportation is achieved by the modulation in  Schr\"odinger layers.
    }
    \label{fig:pascal_mean_relative_shift}
\end{center}
\end{figure}

\section{Summary}
\label{sec:summary}

We presented a new approach for defining and analyzing  signal propagation across graphs. The approach directly models where the information of the signal is, how well concentrated it is, and how well it is routed between regions in the graph. We presented Schr\"odinger GNN, a graph neural network that is able to route the information of the signal along any direction in the graph. We showed that standard GNNs do not have this capability. One limitation of Schr\"odinger filters with respect to simple polynomial filters is that applying the Schr\"odinger operator on a signal involves approximating the exponential of the GSO, which involves applying the GSO several times.

\section*{Acknowledgments}
RL was supported by a grant from the United States-Israel Binational Science Foundation (BSF), Jerusalem, Israel, and the United States National Science Foundation (NSF), (NSF-BSF, grant No. 2024660), and by the Israel Science Foundation (ISF grant No. 1937/23).
YEL acknowledges funding by the Alexander von Humboldt Foundation and the Munich Center for Machine Learning (MCML).

\bibliography{bib}

@inproceedings{banerjee2022oversquashing,
  title={Oversquashing in gnns through the lens of information contraction and graph expansion},
  author={Banerjee, Pradeep Kr and Karhadkar, Kedar and Wang, Yu Guang and Alon, Uri and Mont{\'u}far, Guido},
  booktitle={2022 58th Annual Allerton Conference on Communication, Control, and Computing (Allerton)},
  pages={1--8},
  year={2022},
  organization={IEEE}
}

@article{scarselli2009graph,
  title   = {The Graph Neural Network Model},
  author  = {Scarselli, Franco and Gori, Marco and Tsoi, Ah Chung and Hagenbuchner, Markus and Monfardini, Gabriele},
  journal = {IEEE Transactions on Neural Networks},
  volume  = {20},
  number  = {1},
  pages   = {61--80},
  year    = {2009}
}

@inproceedings{
AlonYahav2021Bottleneck,
title={On the Bottleneck of Graph Neural Networks and its Practical Implications},
author={Uri Alon and Eran Yahav},
booktitle={International Conference on Learning Representations},
year={2021}
}

@inproceedings{
topping2022understandingoversquashingbottlenecksgraphs,
title={Understanding over-squashing and bottlenecks on graphs via curvature},
author={Jake Topping and Francesco Di Giovanni and Benjamin Paul Chamberlain and Xiaowen Dong and Michael M. Bronstein},
booktitle={International Conference on Learning Representations},
year={2022}
}

@inproceedings{di2023over,
  title={On over-squashing in message passing neural networks: The impact of width, depth, and topology},
  author={Di Giovanni, Francesco and Giusti, Lorenzo and Barbero, Federico and Luise, Giulia and Lio, Pietro and Bronstein, Michael M},
  booktitle={International conference on machine learning},
  pages={7865--7885},
  year={2023},
  organization={PMLR}
}

@inproceedings{BlackEtAl2023ER,
  title={Understanding oversquashing in {GNN}s through the lens of effective resistance},
  author={Black, Mitchell and Wan, Zhengchao and Nayyeri, Amir and Wang, Yusu},
  booktitle={International Conference on Machine Learning},
  pages={2528--2547},
  year={2023},
  organization={PMLR}
}

@article{tuysuz2021hybrid,
  title={Hybrid quantum classical graph neural networks for particle track reconstruction},
  author={T{\"u}ys{\"u}z, Cenk and Rieger, Carla and Novotny, Kristiane and Demirk{\"o}z, Bilge and Dobos, Daniel and Potamianos, Karolos and Vallecorsa, Sofia and Vlimant, Jean-Roch and Forster, Richard},
  journal={Quantum Machine Intelligence},
  volume={3},
  number={2},
  pages={29},
  year={2021},
  publisher={Springer}
}

@inproceedings{GiraldoEtAl2023Tradeoff,
  title={On the trade-off between over-smoothing and over-squashing in deep graph neural networks},
  author={Giraldo, Jhony H and Skianis, Konstantinos and Bouwmans, Thierry and Malliaros, Fragkiskos D},
  booktitle={Proceedings of the 32nd ACM international conference on information and knowledge management},
  pages={566--576},
  year={2023}
}

@article{venegas2012quantum,
  title={Quantum walks: a comprehensive review},
  author={Venegas-Andraca, Salvador El{\'\i}as},
  journal={Quantum Information Processing},
  volume={11},
  number={5},
  pages={1015--1106},
  year={2012},
  publisher={Springer}
}

@article{kempe2003quantum,
  title={Quantum random walks: an introductory overview},
  author={Kempe, Julia},
  journal={Contemporary Physics},
  volume={44},
  number={4},
  pages={307--327},
  year={2003},
  publisher={Taylor \& Francis}
}

@inproceedings{aharonov2001quantum,
  title={Quantum walks on graphs},
  author={Aharonov, Dorit and Ambainis, Andris and Kempe, Julia and Vazirani, Umesh},
  booktitle={Proceedings of the thirty-third annual ACM symposium on Theory of computing},
  pages={50--59},
  year={2001}
}

@article{farhi1998quantum,
  title={Quantum computation and decision trees},
  author={Farhi, Edward and Gutmann, Sam},
  journal={Physical Review A},
  volume={58},
  number={2},
  pages={915},
  year={1998},
  publisher={APS}
}

@article{bai2021learning,
  title={Learning graph convolutional networks based on quantum vertex information propagation},
  author={Bai, Lu and Jiao, Yuhang and Cui, Lixin and Rossi, Luca and Wang, Yue and Yu, Philip S and Hancock, Edwin R},
  journal={IEEE Transactions on Knowledge and Data Engineering},
  volume={35},
  number={2},
  pages={1747--1760},
  year={2021},
  publisher={IEEE}
}

@inproceedings{GravinaEtAl2025SWAN,
  title={On oversquashing in graph neural networks through the lens of dynamical systems},
  author={Gravina, Alessio and Eliasof, Moshe and Gallicchio, Claudio and Bacciu, Davide and Sch{\"o}nlieb, Carola-Bibiane},
  booktitle={Proceedings of the AAAI Conference on Artificial Intelligence},
  volume={39},
  number={16},
  pages={16906--16914},
  year={2025}
}

@article{levie2014adjoint,
  title={Adjoint translation, adjoint observable and uncertainty principles},
  author={Levie, Ron and Stark, H-G and Lieb, Florian and Sochen, Nir},
  journal={Advances in computational mathematics},
  volume={40},
  number={3},
  pages={609--627},
  year={2014},
  publisher={Springer}
}

@article{levie2019cayleynets,
  title={Cayleynets: Graph convolutional neural networks with complex rational spectral filters},
  author={Levie, Ron and Monti, Federico and Bresson, Xavier and Bronstein, Michael M},
  journal={IEEE Transactions on Signal Processing},
  volume={67},
  number={1},
  pages={97--109},
  year={2018},
  publisher={IEEE}
}

@article{bassey2021survey,
  title={A survey of complex-valued neural networks},
  author={Bassey, Joshua and Qian, Lijun and Li, Xianfang},
  journal={arXiv preprint arXiv:2101.12249},
  year={2021}
}

@article{defferrard2017convolutionalneuralnetworksgraphs,
  title={Convolutional neural networks on graphs with fast localized spectral filtering},
  author={Defferrard, Micha{\"e}l and Bresson, Xavier and Vandergheynst, Pierre},
  journal={Advances in Neural Information Processing Systems},
  volume={29},
  year={2016}
}

@article{qiu2024graphunitarymessagepassing,
  title={Graph unitary message passing},
  author={Qiu, Haiquan and Bian, Yatao and Yao, Quanming},
  journal={arXiv preprint arXiv:2403.11199},
  year={2024}
}

@inproceedings{zhao2019buildingefficientdeepneural,
  title={Building efficient deep neural networks with unitary group convolutions},
  author={Zhao, Ritchie and Hu, Yuwei and Dotzel, Jordan and Sa, Christopher De and Zhang, Zhiru},
  booktitle={Proceedings of the IEEE/CVF Conference on Computer Vision and Pattern Recognition},
  pages={11303--11312},
  year={2019}
}

@inproceedings{
kiani2024unitaryconvolutionslearninggraphs,
title={Unitary Convolutions for Learning on Graphs and Groups},
author={Bobak Kiani and Lukas Fesser and Melanie Weber},
booktitle={The Thirty-eighth Annual Conference on Neural Information Processing Systems},
year={2024}
}

@article{kipf2017semisupervisedclassificationgraphconvolutional,
  title={Semi-supervised classification with graph convolutional networks},
  author={Kipf, Thomas N and Welling, Max},
  journal={arXiv preprint arXiv:1609.02907},
  year={2016}
}

@book{nielsen2010quantum,
  title={Quantum computation and quantum information},
  author={Nielsen, Michael A and Chuang, Isaac L},
  year={2010},
  publisher={Cambridge university press}
}

@inproceedings{
oono2021graphneuralnetworksexponentially,
title={Graph Neural Networks Exponentially Lose Expressive Power for Node Classification},
author={Kenta Oono and Taiji Suzuki},
booktitle={International Conference on Learning Representations},
year={2020}
}

@inproceedings{
zhao2020pairnorm,
title={PairNorm: Tackling Oversmoothing in {GNN}s},
author={Lingxiao Zhao and Leman Akoglu},
booktitle={International Conference on Learning Representations},
year={2020}
}

@inproceedings{
rong2020dropedgedeepgraphconvolutional,
title={DropEdge: Towards Deep Graph Convolutional Networks on Node Classification},
author={Yu Rong and Wenbing Huang and Tingyang Xu and Junzhou Huang},
booktitle={International Conference on Learning Representations},
year={2020}
}

@inproceedings{chen2020simpledeepgraphconvolutional,
  title={Simple and deep graph convolutional networks},
  author={Chen, Ming and Wei, Zhewei and Huang, Zengfeng and Ding, Bolin and Li, Yaliang},
  booktitle={International conference on machine learning},
  pages={1725--1735},
  year={2020},
  organization={PMLR}
}

@article{sandryhaila2013discretesignalprocessinggraphs,
  title={Discrete signal processing on graphs: Frequency analysis},
  author={Sandryhaila, Aliaksei and Moura, Jose MF},
  journal={IEEE Transactions on signal processing},
  volume={62},
  number={12},
  pages={3042--3054},
  year={2014},
  publisher={IEEE}
}

@article{morris2020tudatasetcollectionbenchmarkdatasets,
  title={{TUD}ataset: A collection of benchmark datasets for learning with graphs},
  author={Morris, Christopher and Kriege, Nils M and Bause, Franka and Kersting, Kristian and Mutzel, Petra and Neumann, Marion},
  journal={arXiv preprint arXiv:2007.08663},
  year={2020}
}

@article{levie2018uncertaintyprinciplesoptimallysparse,
  title={Uncertainty principles and optimally sparse wavelet transforms},
  author={Levie, Ron and Sochen, Nir},
  journal={Applied and Computational Harmonic Analysis},
  volume={48},
  number={3},
  pages={811--867},
  year={2020},
  publisher={Elsevier}
}

@article{levie2021waveletplanchereltheoryapplication,
  title={A Wavelet Plancherel Theory with Application to Multipliers and Sparse Approximations},
  author={Levie, Ron and Sochen, Nir},
  journal={Numerical Functional Analysis and Optimization},
  volume={43},
  number={11},
  pages={1303--1400},
  year={2022},
  publisher={Taylor \& Francis}
}

@article{Halvdansson_2023,
   title={Existence of uncertainty minimizers for the continuous wavelet transform},
   volume={296},
   ISSN={1522-2616},
   DOI={10.1002/mana.202100466},
   number={3},
   journal={Mathematische Nachrichten},
   publisher={Wiley},
   author={Halvdansson, Simon and Olsen, Jan‐Fredrik and Sochen, Nir and Levie, Ron},
   year={2023},
   month=jan, pages={1156–1172} 
}

@inproceedings{banerjee2022oversquashinggnnslensinformation,
  title={Oversquashing in {GNN}s through the lens of information contraction and graph expansion},
  author={Banerjee, Pradeep Kr and Karhadkar, Kedar and Wang, Yu Guang and Alon, Uri and Mont{\'u}far, Guido},
  booktitle={2022 58th Annual Allerton Conference on Communication, Control, and Computing (Allerton)},
  pages={1--8},
  year={2022},
  organization={IEEE}
}

@inproceedings{gilmer2017neuralmessagepassingquantum,
  title={Neural message passing for quantum chemistry},
  author={Gilmer, Justin and Schoenholz, Samuel S and Riley, Patrick F and Vinyals, Oriol and Dahl, George E},
  booktitle={International conference on machine learning},
  pages={1263--1272},
  year={2017},
  organization={PMLR}
}

@article{dwivedi2022long,
  title={Long range graph benchmark},
  author={Dwivedi, Vijay Prakash and Ramp{\'a}{\v{s}}ek, Ladislav and Galkin, Michael and Parviz, Ali and Wolf, Guy and Luu, Anh Tuan and Beaini, Dominique},
  journal={Advances in Neural Information Processing Systems},
  volume={35},
  pages={22326--22340},
  year={2022}
}

@misc{bresson2018residualgatedgraphconvnets,
      title={Residual Gated Graph ConvNets}, 
      author={Xavier Bresson and Thomas Laurent},
      year={2018},
      eprint={1711.07553},
      archivePrefix={arXiv},
      primaryClass={cs.LG}
}

@article{kingma2017adammethodstochasticoptimization,
  title={Adam: A method for stochastic optimization},
  author={Kingma, Diederik P and Ba, Jimmy},
  journal={arXiv preprint arXiv:1412.6980},
  year={2014}
}

@inproceedings{shirzad2023exphormersparsetransformersgraphs,
  title={Exphormer: Sparse transformers for graphs},
  author={Shirzad, Hamed and Velingker, Ameya and Venkatachalam, Balaji and Sutherland, Danica J and Sinop, Ali Kemal},
  booktitle={International Conference on Machine Learning},
  pages={31613--31632},
  year={2023},
  organization={PMLR}
}

@inproceedings{he2023generalizationvitmlpmixergraphs,
  title={A generalization of  {V}i{T}/{MLP}-{M}ixer to graphs},
  author={He, Xiaoxin and Hooi, Bryan and Laurent, Thomas and Perold, Adam and LeCun, Yann and Bresson, Xavier},
  booktitle={International conference on machine learning},
  pages={12724--12745},
  year={2023},
  organization={PMLR}
}

@inproceedings{ma2023graphinductivebiasestransformers,
  title={Graph inductive biases in transformers without message passing},
  author={Ma, Liheng and Lin, Chen and Lim, Derek and Romero-Soriano, Adriana and Dokania, Puneet K and Coates, Mark and Torr, Philip and Lim, Ser-Nam},
  booktitle={International Conference on Machine Learning},
  pages={23321--23337},
  year={2023},
  organization={PMLR}
}

@inproceedings{
xu2019howpowerfularegraphneuralnetworks,
title={How Powerful are Graph Neural Networks?},
author={Keyulu Xu and Weihua Hu and Jure Leskovec and Stefanie Jegelka},
booktitle={International Conference on Learning Representations},
year={2019}
}

@article{rampavsek2022recipe,
  title={Recipe for a general, powerful, scalable graph transformer},
  author={Ramp{\'a}{\v{s}}ek, Ladislav and Galkin, Michael and Dwivedi, Vijay Prakash and Luu, Anh Tuan and Wolf, Guy and Beaini, Dominique},
  journal={Advances in Neural Information Processing Systems},
  volume={35},
  pages={14501--14515},
  year={2022}
}

@inproceedings{gutteridge2023drew,
  title={{DR}ew: Dynamically rewired message passing with delay},
  author={Gutteridge, Benjamin and Dong, Xiaowen and Bronstein, Michael M and Di Giovanni, Francesco},
  booktitle={International Conference on Machine Learning},
  pages={12252--12267},
  year={2023},
  organization={PMLR}
}

@inproceedings{zhang2018shufflenet,
  title={Shufflenet: An extremely efficient convolutional neural network for mobile devices},
  author={Zhang, Xiangyu and Zhou, Xinyu and Lin, Mengxiao and Sun, Jian},
  booktitle={Proceedings of the IEEE conference on computer vision and pattern recognition},
  pages={6848--6856},
  year={2018}
}

@inproceedings{ding2017circnn,
  title={Circnn: accelerating and compressing deep neural networks using block-circulant weight matrices},
  author={Ding, Caiwen and Liao, Siyu and Wang, Yanzhi and Li, Zhe and Liu, Ning and Zhuo, Youwei and Wang, Chao and Qian, Xuehai and Bai, Yu and Yuan, Geng and others},
  booktitle={Proceedings of the 50th Annual IEEE/ACM International Symposium on Microarchitecture},
  pages={395--408},
  year={2017}
}

@article{ying2021transformers,
  title={Do transformers really perform badly for graph representation?},
  author={Ying, Chengxuan and Cai, Tianle and Luo, Shengjie and Zheng, Shuxin and Ke, Guolin and He, Di and Shen, Yanming and Liu, Tie-Yan},
  journal={Advances in neural information processing systems},
  volume={34},
  pages={28877--28888},
  year={2021}
}

@article{dwivedi2020generalization,
  title={A generalization of transformer networks to graphs},
  author={Dwivedi, Vijay Prakash and Bresson, Xavier},
  journal={arXiv preprint arXiv:2012.09699},
  year={2020}
}

@article{vaswani2017attention,
  title={Attention is all you need},
  author={Vaswani, Ashish and Shazeer, Noam and Parmar, Niki and Uszkoreit, Jakob and Jones, Llion and Gomez, Aidan N and Kaiser, {\L}ukasz and Polosukhin, Illia},
  journal={Advances in neural information processing systems},
  volume={30},
  year={2017}
}

@article{tonshoff2021walking,
  title={Walking out of the weisfeiler leman hierarchy: Graph learning beyond message passing},
  author={T{\"o}nshoff, Jan and Ritzert, Martin and Wolf, Hinrikus and Grohe, Martin},
  journal={arXiv preprint arXiv:2102.08786},
  year={2021}
}

@inproceedings{
velickovic2018graphattentionnetworks,
title={Graph Attention Networks},
author={Petar Veličković and Guillem Cucurull and Arantxa Casanova and Adriana Romero and Pietro Liò and Yoshua Bengio},
booktitle={International Conference on Learning Representations},
year={2018}
}

@book{bracewell1986fourier,
  title={The Fourier transform and its applications},
  author={Bracewell, Ronald Newbold and Bracewell, Ronald N},
  volume={31999},
  year={1986},
  publisher={McGraw-hill New York}
}

@article{
tonshoff2023wheredidgapgoreassessinglongrangegraph,
title={Where Did the Gap Go? Reassessing the Long-Range Graph Benchmark},
author={Jan T{\"o}nshoff and Martin Ritzert and Eran Rosenbluth and Martin Grohe},
journal={Transactions on Machine Learning Research},
issn={2835-8856},
year={2024}
}

@article{lecun1998gradient,
  title={Gradient-Based Learning Applied to Document Recognition},
  author={LeCun, Yann and Bottou, L{\'e}on and Bengio, Yoshua and Haffner, Patrick},
  journal={Proceedings of the IEEE},
  volume={86},
  number={11},
  pages={2278--2324},
  year={1998},
  publisher={IEEE}
}

@inproceedings{nguyen2023revisitingoversmoothingoversquashingusing,
  title={Revisiting over-smoothing and over-squashing using {O}llivier-{R}icci curvature},
  author={Nguyen, Khang and Hieu, Nong Minh and Nguyen, Vinh Duc and Ho, Nhat and Osher, Stanley and Nguyen, Tan Minh},
  booktitle={International Conference on Machine Learning},
  pages={25956--25979},
  year={2023},
  organization={PMLR}
}

@article{verdon2019quantum,
  title={Quantum Graph Neural Networks},
  author={Verdon, Guillaume and McCourt, Trevor and Luzhnica, Enxhell and Singh, Vikash and Leichenauer, Stefan and Hidary, Jack},
  journal={arXiv preprint arXiv:1909.12264},
  year={2019}
}

@misc{lecun1998mnist,
  title={The {MNIST} database of handwritten digits},
  author={LeCun, Yann and Cortes, Corinna and Burges, Christopher J.C.},
  year={1998}
}

@inproceedings{platonov2023critical,
  title={A Critical Look at the Evaluation of {GNN}s under Heterophily: Are We Really Making Progress?},
  author={Platonov, Oleg and Kuznedelev, Denis and Diskin, Michael and Babenko, Artem and Prokhorenkova, Liudmila},
  booktitle={International Conference on Learning Representations},
  year={2023}
}

@inproceedings{hamilton2017inductive,
  title={Inductive Representation Learning on Large Graphs},
  author={Hamilton, William L. and Ying, Rex and Leskovec, Jure},
  booktitle={Advances in Neural Information Processing Systems},
  volume={30},
  year={2017}
}

@article{dwivedi2021generalization,
  title={A generalization of transformer networks to graphs},
  author={Dwivedi, Vijay Prakash and Bresson, Xavier},
  journal={arXiv preprint arXiv:2012.09699},
  year={2020}
}

@article{wu2020comprehensive,
  title={A comprehensive survey on graph neural networks},
  author={Wu, Zonghan and Pan, Shirui and Chen, Fengwen and Long, Guodong and Zhang, Chengqi and Yu, Philip S},
  journal={IEEE transactions on neural networks and learning systems},
  volume={32},
  number={1},
  pages={4--24},
  year={2020},
  publisher={IEEE}
}

@article{zhou2020graph,
  title={Graph neural networks: A review of methods and applications},
  author={Zhou, Jie and Cui, Ganqu and Hu, Shengding and Zhang, Zhengyan and Yang, Cheng and Liu, Zhiyuan and Wang, Lifeng and Li, Changcheng and Sun, Maosong},
  journal={AI open},
  volume={1},
  pages={57--81},
  year={2020},
  publisher={Elsevier}
}

@article{duvenaud2015convolutional,
  title={Convolutional networks on graphs for learning molecular fingerprints},
  author={Duvenaud, David K and Maclaurin, Dougal and Iparraguirre, Jorge and Bombarell, Rafael and Hirzel, Timothy and Aspuru-Guzik, Al{\'a}n and Adams, Ryan P},
  journal={Advances in neural information processing systems},
  volume={28},
  year={2015}
}

@article{kearnes2016molecular,
  title={Molecular graph convolutions: moving beyond fingerprints},
  author={Kearnes, Steven and McCloskey, Kevin and Berndl, Marc and Pande, Vijay and Riley, Patrick},
  journal={Journal of computer-aided molecular design},
  volume={30},
  number={8},
  pages={595--608},
  year={2016},
  publisher={Springer}
}

@article{schutt2017schnet,
  title={Schnet: A continuous-filter convolutional neural network for modeling quantum interactions},
  author={Sch{\"u}tt, Kristof and Kindermans, Pieter-Jan and Sauceda Felix, Huziel Enoc and Chmiela, Stefan and Tkatchenko, Alexandre and M{\"u}ller, Klaus-Robert},
  journal={Advances in neural information processing systems},
  volume={30},
  year={2017}
}

@article{battaglia2016interaction,
  title={Interaction networks for learning about objects, relations and physics},
  author={Battaglia, Peter and Pascanu, Razvan and Lai, Matthew and Jimenez Rezende, Danilo and others},
  journal={Advances in neural information processing systems},
  volume={29},
  year={2016}
}

@inproceedings{sanchez2020learning,
  title={Learning to simulate complex physics with graph networks},
  author={Sanchez-Gonzalez, Alvaro and Godwin, Jonathan and Pfaff, Tobias and Ying, Rex and Leskovec, Jure and Battaglia, Peter},
  booktitle={International conference on machine learning},
  pages={8459--8468},
  year={2020},
  organization={PMLR}
}

@inproceedings{perozzi2014deepwalk,
  title={Deepwalk: Online learning of social representations},
  author={Perozzi, Bryan and Al-Rfou, Rami and Skiena, Steven},
  booktitle={Proceedings of the 20th ACM SIGKDD international conference on Knowledge discovery and data mining},
  pages={701--710},
  year={2014}
}

@inproceedings{wang2019neural,
  title={Neural graph collaborative filtering},
  author={Wang, Xiang and He, Xiangnan and Wang, Meng and Feng, Fuli and Chua, Tat-Seng},
  booktitle={Proceedings of the 42nd international ACM SIGIR conference on Research and development in Information Retrieval},
  pages={165--174},
  year={2019}
}

@inproceedings{he2020lightgcn,
  title={Lightgcn: Simplifying and powering graph convolution network for recommendation},
  author={He, Xiangnan and Deng, Kuan and Wang, Xiang and Li, Yan and Zhang, Yongdong and Wang, Meng},
  booktitle={Proceedings of the 43rd International ACM SIGIR conference on research and development in Information Retrieval},
  pages={639--648},
  year={2020}
}

@inproceedings{li2018deeper,
  title={Deeper insights into graph convolutional networks for semi-supervised learning},
  author={Li, Qimai and Han, Zhichao and Wu, Xiao-Ming},
  booktitle={Proceedings of the AAAI conference on artificial intelligence},
  volume={32},
  number={1},
  year={2018}
}

@article{nerin2024machine,
  title={Machine learning approaches in predicting allosteric sites},
  author={Ner{\'\i}n-Fonz, Francho and Cournia, Zoe},
  journal={Current Opinion in Structural Biology},
  volume={85},
  pages={102774},
  year={2024},
  publisher={Elsevier}
}

@article{tian2023passer,
  title={{PASS}er: fast and accurate prediction of protein allosteric sites},
  author={Tian, Hao and Xiao, Sian and Jiang, Xi and Tao, Peng},
  journal={Nucleic Acids Research},
  volume={51},
  number={W1},
  pages={W427--W431},
  year={2023},
  publisher={Oxford University Press}
}

@article{shuman2013emerging,
  title={The emerging field of signal processing on graphs: Extending high-dimensional data analysis to networks and other irregular domains},
  author={Shuman, David I and Narang, Sunil K and Frossard, Pascal and Ortega, Antonio and Vandergheynst, Pierre},
  journal={IEEE signal processing magazine},
  volume={30},
  number={3},
  pages={83--98},
  year={2013},
  publisher={IEEE}
}

\newpage

\appendix

\begin{center}
    \LARGE  \textbf{Appendix}    
\end{center}

\addcontentsline{toc}{section}{Appendices}

\startcontents[appendices]
\printcontents[appendices]{l}{1}{\setcounter{tocdepth}{2}}

\newpage

\section{Extended background and related work}

\subsection{Spectral graph neural networks}

The convolution layers in many graph neural architectures can be viewed as instances of spectral graph filtering, where the convolution on a graph is defined through the eigendecomposition of the graph shift operator.
Namely, the graph convolution uses the eigenvectors of a graph shift operator as graph Fourier modes.  
Our method belongs to this broad family of spectral GNN, since the operators used by our method can be interpreted as learnable frequency responses acting on graph signals.

\paragraph{Spectral graph filters.} 
A graph shift operator (GSO) is a self-adjoint matrix that reflects the graph’s connectivity, such as a normalized graph Laplacian, an unnormalized Laplacian, or a symmetrized adjacency matrix.  
Let $\mathcal{L}$ be such a GSO with eigenpairs $\{(\lambda_i,v_i)\}_{i=1}^N$, and denote $\mathcal{L} = \mV \bm{\Lambda} \mV^\top$, where $\mV=[v_1,\ldots,v_N]$ and  $
\bm{\Lambda}=\operatorname{diag}(\lambda_1,\ldots,\lambda_N)$.
The graph Fourier transform of a node signal $\mX\in\mathbb{R}^{N\times T}$ is given by $\mV^\top \mX$, and the inverse transform is given by multiplication by $\mV$.

For a $T$-channel node signal $\mX\in\mathbb{R}^{N\times T}$ and a matrix-valued frequency response $
q:\mathbb{R}\to\mathbb{R}^{d'\times T}$, and 
the corresponding spectral graph filter is given by 
\begin{equation*}
\label{eq:spectral_filter_multichannel}
q(\mathcal{L})\mX
:=
\sum_{i=1}^N
v_i v_i^\top \mX\, q(\lambda_i)^\top .
\end{equation*}
This applies the graph analog of the convolution theorem \citep{bracewell1986fourier}: each graph Fourier mode $v_i$ is preserved in the node domain, while the channel features associated with that mode are mixed by the matrix $q(\lambda_i)$ in the spectral domain.  
Thus, $q(\lambda_i)$ determines how information at graph frequency $\lambda_i$ is amplified, suppressed, or mixed across channels.
In the scalar case, where $T=d'=1$ and $f:\mathbb{R}\to\mathbb{R}$, the filter reduces to the usual functional-calculus operator
\[
f(\mathcal{L})\mX
=
\sum_{i=1}^N f(\lambda_i)\,v_i v_i^\top \mX
=
\mV f(\Lambda) \mV^\top \mX.
\]
Hence, a scalar spectral filter acts diagonally in the graph Fourier basis: it multiplies the coefficient of the $i$-th graph Fourier mode by $f(\lambda_i)$.

\paragraph{Parameterized spectral filters.}
In a neural network, the frequency response $q$ is usually not fixed and it can be parameterized by learnable weights.  
A common choice is a polynomial filter $ q_\theta(\lambda)=\sum_{k=0}^{K} \Theta_k \lambda^k$,
where $\Theta_k\in\mathbb{R}^{d'\times T}$ are trainable channel-mixing matrices.  Then, it gives the node-domain expression $ q_\theta(\mathcal{L})\mX = \sum_{k=0}^{K}
\mathcal{L}^k \mX\Theta_k^\top$.

This form avoids explicitly computing the eigendecomposition of $\mathcal{L}$ and also gives a locality interpretation, i.e., if $\mathcal{L}$ is sparse and respects graph connectivity, then $\mathcal{L}^k\mX$ aggregates information along paths of length up to $k$.  
Chebyshev spectral networks \citep{defferrard2017convolutionalneuralnetworksgraphs} use a stable polynomial parameterization based on Chebyshev polynomials, while the graph convolutional network of \citep{kipf2017semisupervisedclassificationgraphconvolutional} can be viewed as a first-order spectral approximation with a particular normalization.  
More expressive spectral families, such as rational or Cayley filters, replace polynomial responses by richer frequency responses while retaining the same spectral-filtering principle \citep{levie2019cayleynets}.

\paragraph{Spectral GNN layers.}
A spectral graph neural network composes spectral filters with pointwise nonlinearities.  
Given hidden node features $\bm{H}^{(\ell)}\in\mathbb{R}^{N\times d_\ell}$ at layer $\ell$, a spectral GNN layer is given by  $ \bm{H}^{(\ell+1)} = \sigma\left( q_{\theta_\ell}(\mathcal{L})\bm{H}^{(\ell)} + \bm{1} b_\ell^\top \right)$, where $q_{\theta_\ell}:\mathbb{R}\to\mathbb{R}^{d_{\ell+1}\times d_\ell}$ is a trainable matrix-valued frequency response, $b_\ell\in\mathbb{R}^{d_{\ell+1}}$ is a bias vector, and $\sigma$ is applied pointwise.  
Stacking these layers then produces a nonlinear architecture whose linear components are graph-frequency filters.

This perspective places our method within the broad family of spectral GNNs: the learnable operators can be interpreted as frequency-selective transformations of graph signals, with the GSO determining the graph Fourier basis and the parameterized response determining how each spectral component is transformed.  
Consequently, the architecture inherits the standard spectral interpretation of graph convolution while allowing flexible channel mixing and nonlinear feature extraction across layers.

\subsection{Comparison with related unitary neural architectures}

In this section, we position our Schrödinger graph signal processing framework in relation to recent unitary neural architectures, including unitary convolutions for graphs and groups \citep{kiani2024unitaryconvolutionslearninggraphs}, graph unitary message passing \citep{qiu2024graphunitarymessagepassing}, and unitary group convolutions \citep{zhao2019buildingefficientdeepneural}, and clarify how our approach differs from these methods.

The main distinguishing difference of our method is that it is unitary. Rather, our contribution lies in the formulation of feature observables, feature-derivative Schr\"odinger dynamics, phase-induced feature momentum, and routing measure. 
Existing unitary methods mainly use unitarity to preserve norms, stabilize gradients, avoid oversmoothing
or oversquashing, or improve efficient channel mixing. Our method uses unitarity as part of a broader physical and geometric framework for measuring and controlling signal localization on graphs.

\subsubsection{Comparison with unitary convolutions on graphs and groups}

Our Schr\"odinger graph signal processing framework is related to, but different from, the unitary convolution framework \citep{kiani2024unitaryconvolutionslearninggraphs} but they use different components, optimize for different notions of signal propagation, and provide different theoretical guarantees.

\paragraph{Common principle: unitary propagation.}
\citep{kiani2024unitaryconvolutionslearninggraphs} propose replacing standard graph convolutions by unitary or orthogonal convolutions.
Their separable unitary graph convolution is $F_{\mathrm{UniConv}}(\mX) = \exp(i t \mA) \mX \mU$ with $\mU \mU^\dagger = \mI$, where $\mA$ is the graph adjacency matrix, $t\in\mathbb{R}$ is a learnable or tunable propagation time, and $\mU$ is a unitary channel-mixing matrix.
They also introduce a Lie unitary/orthogonal graph convolution: $F_{\mathrm{Lie}}(\mX) = \exp(g_{\mathrm{conv}})(\mX)$ with $ g_{\mathrm{conv}}(\mX)=\mA\mX\mW$ and $\mW+\mW^\dagger=0$ so that the vectorized generator belongs to a Lie algebra and its exponential is unitary or orthogonal.

Our method also uses unitary evolution, but the unitary operator is not generated directly by the
adjacency matrix. Instead, we construct feature-dependent graph derivatives $ (\nabla_{f_k})_{n,m} =a_{n,m}(f_k(n)-f_k(m))$, define the Schr\"odinger Laplacian $\Delta_f = -\sum_{k=1}^K \nabla_{f_k}^2$, and propagate signals by $\mathcal{S}[t,f] = \exp(-it\Delta_f)$. 

Thus, while both methods use exponentials of structure-dependent generators, our framework builds on a second-order feature-derivative operator built from learned or optimized feature locations.

\paragraph{Main conceptual difference. }
Unitary convolutions \citep{kiani2024unitaryconvolutionslearninggraphs} are primarily a stability mechanism for graph and group convolutions. Their goal is to make deep equivariant networks more stable by using norm-preserving and invertible layers. Specifically, their theory focuses on avoiding oversmoothing and avoiding
vanishing or exploding gradients.

By contrast, our method analyzes information flow in GNNs beyond oversquashing. It introduces an observable-based framework for measuring, controlling, and routing signal localization. We define feature-location
observables $X_{f_k}=\operatorname{diag}(f_k)$ and and track the expected feature location and variance of a signal: $\mathcal{E}_{X_{f_k}}(g) = \langle X_{f_k}g,g\rangle$ and $\mathcal{V}_{X_{f_k}}(g) =
\|(X_{f_k}-\mathcal{E}_{X_{f_k}}(g)I)g\|_2^2$. 
This leads to the signal routing measure $\mathcal{P}_{M}(g^{(0)},g^{(t)},r) = \frac{\langle (M-rI)^2g^{(t)},g^{(t)}\rangle} {\mathcal{V}_{M}(g^{(0)})}$, which explicitly quantifies whether the final signal is concentrated near a target feature value $r$.
This target-directed localization objective is absent from unitary convolutions.

\paragraph{Role of complex phases.} In \citep{kiani2024unitaryconvolutionslearninggraphs}, complexification is used to make operators such as $\exp(i t \mA)$ unitary and wave-like. 
In our framework, complex phases serve an additional geometric role: they create momentum in feature space. For a real signal $g$, the expected feature momentum $mathcal{E}_{i\nabla_f}(g) = \langle i\nabla_f g,g\rangle$ is zero. We therefore introduce feature modulation $D[\theta h] = \operatorname{diag}(e^{i\theta h})$, which can create nonzero expected momentum.
Thus, our method enables directed transport toward a
target feature location.

\subsubsection{Comparison with graph unitary message passing}

Graph unitary message passing (GUMP) \citep{qiu2024graphunitarymessagepassing} proposes to improve graph learning by making the adjacency operator used in message passing unitary.
Although both GUMP and our Schr\"odinger graph signal processing framework use unitarity, they use it in different places and for different purposes.

\paragraph{GUMP: unitary adjacency for stable message passing.} 
GUMP starts from the observation that standard message passing repeatedly applies powers of a graph adjacency operator: in $\mH^{(k)} = \sigma(\mA\mH^{(k-1)}\mW_k)$, the dependence of $\mH^{(L)}$ on the input contains powers of $\mA$.  
If the eigenvalues of $\mA$ have magnitude smaller or larger than one, these powers may decay or grow exponentially with depth. GUMP therefore replaces the standard adjacency matrix by a unitary adjacency matrix $\mU[\mA_w]$, giving $\mH^{(k)} = \sigma\left(\mU[\mA_w]\mH^{(k-1)}\mW_k\right)$ and $\mU[\mA_w]^\dagger \mU[\mA_w]=\mI$. 
Their main objective is to avoid graph-induced training instability by preventing the graph propagation operator from causing exponential contraction or expansion of the Jacobian.

In contrast, our method does not make the adjacency matrix itself unitary. We keep the graph adjacency as a structural object and construct feature-dependent derivative operators, then define the Schr\"odinger Laplacian and propagate by the unitary Schr\"odinger operator. 

Therefore, in GUMP, the unitary object is an adjacency-like message-passing matrix. In our method, the unitary object is a Schr\"odinger evolution generated by a self-adjoint feature-derivative Laplacian.

\paragraph{Graph transformation vs. feature-geometry construction.}
One technical challenge in GUMP is that a general graph does not naturally have a sparse unitary adjacency matrix with the same support as the original adjacency.
GUMP addresses this by transforming the graph. For an undirected graph $G=(V,E)$, it first constructs a directed graph $G'$ by replacing each undirected edge with two directed edges, and then passes to the line graph $G \longmapsto G' \longmapsto L(G')$. The transformed line graph $L(G')$ has $2|E|$ vertices, corresponding to directed edges of the original graph. GUMP then computes a weighted adjacency matrix $A_w$ on $L(G')$, projects it to a unitary matrix $\mU[\mA_w]$, performs message passing on $L(G')$.

Our framework does not require this line-graph transformation. The signal remains on the original vertex set $V$, and the graph operator is constructed directly from the original adjacency weights and the learned feature locations. Therefore, our method keeps the signal domain fixed. 

\paragraph{Unitary projection vs. exponential of a self-adjoint operator.}
GUMP obtains a unitary adjacency matrix by projecting a learned weighted adjacency matrix onto the unitary group. In its polar-projection form, this can be written as $   \mU[A_w] = \arg\min_{\mU^\dagger \mU=\mI}\|\mA_w-\mU\|_F^2 = \mA_w(\mA_w^\dagger \mA_w)^{-1/2}$. 

Our method does not project an arbitrary learned adjacency matrix to the unitary group. Instead, unitarity
comes from the spectral theorem: since $\Delta_f$ is self-adjoint, $\mathcal{S}[t,f] = e^{-it\Delta_f}$ is unitary. 
We believe this distinction is important as GUMP enforces unitarity by projecting a graph message-passing matrix, and our method derives unitarity from a Schr\"odinger-type Hamiltonian constructed from feature derivatives.

\subsubsection{Comparison with unitary group convolutions}

We next compare our method with the unitary group convolution framework \citep{zhao2019buildingefficientdeepneural}. Unitary group
convolutions are designed for efficient convolutional neural networks, especially CNNs with grouped channel convolutions. Our method is designed for graph signal propagation and feature-space routing.

\paragraph{UGConv: unitary transforms around channel group convolutions.} 
A  convolutional layer with $M$ input channels and $N$ output channels can be represented by $ y^{(j)} = \sum_{i=1}^M x^{(i)} * W^{(i,j)}$ for $1\leq j\leq N$. 
A group convolution divides channels into $G$ disjoint groups $\widetilde y^{(g,j)} = \sum_{i=1}^{M/G} \widetilde x^{(g,i)} * \widetilde W^{(g,i,j)}$ for $ 1\leq j\leq N/G$. This reduces parameters and floating-point operations, but it also removes many cross-group channel connections. UGConv addresses this by sandwiching the group convolution between two unitary channel transforms $P$ and $Q$:
\begin{align*}
    \widetilde X_k &= P X_k,
    \qquad \forall k, \\
    \widetilde y^{(g,j)}
    &=
    \sum_{i=1}^{M/G}
    \widetilde x^{(g,i)}
    *
    \widetilde W^{(g,i,j)}, \\
    Y_\ell &= Q\widetilde Y_\ell,
    \qquad \forall \ell.
\end{align*}
Here $P\in\mathbb{C}^{M\times M}$ and $Q\in\mathbb{C}^{N\times N}$ are unitary transforms applied across channels.

Our Schr\"odinger method uses unitarity in a different domain. The unitary operator $\mathcal{S}[t,f]=e^{-it\Delta_f}$ acts across graph nodes, and $\Delta_f$ is built from the graph structure and feature locations. 
Channel mixing is handled separately by matrices $\mW^{(m)}$, but the main unitary propagation is a graph-domain evolution, not a channel-domain basis change.

\paragraph{Different meaning of ``group convolution.''}
In \citep{zhao2019buildingefficientdeepneural}, ``group convolution'' refers to grouped channel convolution in CNNs: the input and output channels are partitioned into groups, and convolution is performed independently in each group. It is not a graph convolution and does not use a graph adjacency matrix. It is also not primarily about group-equivariant convolution over symmetry groups.

\paragraph{Efficiency-oriented unitarity versus physics-inspired unitarity.} 
UGConv uses unitary transforms because they preserve inner products and allow information to mix across channel groups without destroying gradient magnitudes. The paper shows that ShuffleNet \citep{zhang2018shufflenet} can be interpreted as a UGConv with a permutation matrix, and block-circulant networks (e.g., \citep{ding2017circnn}) can be interpreted as UGConvs with Fourier transforms. It then proposes Hadamard transforms as an efficient dense channel-mixing alternative, where  Hadamard matrices have $\pm1$ entries, can be computed using additions and subtractions, and avoid the
complex arithmetic of Fourier transforms.

Our use of unitarity is closer to Schr\"odinger dynamics. Since $\Delta_f$ is self-adjoint, $\mathcal{S}[t,f]$ is unitary. Moreover, our complex phases are not introduced for computationally efficient channel mixing, they are introduced to create momentum along feature coordinates $D[\theta h]g =e^{i\theta h}\odot g$. 
Thus, UGConv uses unitary transforms as efficient channel-basis changes, while our method uses unitary
evolution and modulation as interpretable graph-signal transport mechanisms.

\subsection{Relation to quantum and quantum-inspired graph neural networks}

Our use of the term ``Schr\"odinger'' is mathematical rather than hardware-oriented. The proposed model is a classical graph signal processing and graph neural network architecture whose propagation operator is motivated by the Schr\"odinger equation. 
It is important to highlight this distinction as the phrase ``quantum graph neural network'' has been used in the literature for several different families of methods. 
In short, prior quantum and quantum-inspired GNNs primarily use quantum circuits, quantum walks, or tensor-network parameterizations to define new graph learning architectures. Our work instead develops a classical Schr\"odinger graph signal processing framework in which graph signals are routed through learned feature-location coordinates using observables, momentum, commutators, and phase modulation. This distinction clarifies both the relationship to the existing literature and the technical contribution of the proposed Schr\"odinger GNN.
We next position our method relative to these families and clarify the novelty of our construction.

\paragraph{Quantum-circuit graph neural networks.} 
A first line of work studies quantum graph neural networks as quantum neural-network ans\"atze implemented by parameterized quantum circuits \citep{verdon2019quantum, tuysuz2021hybrid}.
For example, \citep{verdon2019quantum} introduces graph-structured quantum neural network ans\"atze designed to represent quantum processes with graph structure, including quantum graph recurrent neural networks and quantum graph convolutional neural networks. In their setting, the graph typically specifies interactions among quantum subsystems or the structure of a quantum computation, and node or edge information is encoded into qubits, transformed by trainable quantum gates, and read out through quantum measurements. The central design questions in this line of work are  quantum-circuit expressivity, quantum state preparation, measurement, and compatibility with near-term or distributed quantum hardware.

Our method is different in both computational substrate and mathematical objective. We do not encode graph data into qubits, do not use parameterized quantum gates, and do not perform quantum measurements. 
All operations in our framework are standard finite-dimensional linear-algebra operations on classical node features. The graph signal $g:V\rightarrow \mathbb{C}^J$ is represented explicitly as a complex-valued array over the vertices, and the unitary operator $\mathcal{S}[t,f] = \exp(-it\Delta_f)$ is evaluated as a spectral graph filter. Thus, unitarity in our model is not used to obtain quantum computation, but to obtain norm-preserving, phase-sensitive signal propagation on a graph. The relevant inductive bias is signal transport in learned feature-location coordinates.

\paragraph{Quantum-walk and quantum-propagation graph convolutions.} 
Another related line of work uses quantum walks to define graph convolution operators \citep{farhi1998quantum, aharonov2001quantum, kempe2003quantum, venegas2012quantum}.  
A quantum walk is the quantum analog of a classical random walk: instead of evolving a real-valued probability distribution by a stochastic
transition matrix, it evolves a complex-valued amplitude vector by a unitary operator. The probability of visiting a vertex is then obtained from the squared modulus of the corresponding complex amplitude.

Several graph learning methods use such quantum-walk dynamics to construct feature aggregation weights. For example, quantum spatial graph convolutional
networks \citep{bai2021learning} compute an average mixing matrix $\mQ$ from a continuous-time quantum walk and use it as the propagation matrix in a graph convolution  $\mZ=\sigma(\mQ\mX\mW)$, where $\mX$ is the node feature matrix and $\mW$ is a trainable channel-mixing matrix. Thus, the quantum-walk component primarily replaces the usual adjacency- or Laplacian-based propagation matrix by a quantum-walk-derived visiting probability matrix.

Our method is related to this family only at the broad level that both use unitary dynamics as inspiration for graph propagation. The construction and purpose are different. Quantum-walk-based graph convolutions usually derive a real-valued propagation kernel from the visiting probabilities of a walk generated by a fixed graph operator such as $\mA$ or $\mL$. In contrast, we construct a feature-dependent Schr\"odinger Laplacian $
    \Delta_f=-\sum_{k=1}^K \nabla_{f_k}^2$, $(\nabla_{f_k})_{n,m}=a_{n,m}(f_k(n)-f_k(m))$,
and propagate complex graph signals by $
    \mathcal{S}[t,f]=e^{-it\Delta_f}$.
Moreover, our architecture includes feature modulation $D[\theta h]=\operatorname{diag}(e^{i\theta h})$,
which creates controllable feature momentum, and our analysis is formulated in terms of feature-location observables $X_{f_k}$, momentum operators
$i\nabla_{f_k}$, variances, routing measures, and commutators. These elements are not present in quantum-walk graph convolutions, whose main role is to define an alternative graph aggregation kernel.

It is important to note the components proposed by our framework, including learned feature-location coordinates using observables, momentum, commutators, and phase modulation, are not present in previous QGNN or tensor-network GNN constructions. Its role is to make multidimensional feature-location propagation more interpretable by reducing cross-direction interference.

\subsection{Sparse signal routing versus dense attention}
\label{app:comparison_transformers}

Transformers provide a natural alternative mechanism for long-range communication. 
In a standard self-attention layer, every node or token can attend to every other node or token. 
Thus, in principle, a Transformer can route information from a source region to a specific distant destination in a single layer, since the attention weights define a task-dependent soft communication pattern between all pairs of locations 
\citep{vaswani2017attention}.
Thus, Transformer-based graph models are attractive for tasks requiring long-range interactions \citep{dwivedi2020generalization, ying2021transformers, rampavsek2022recipe}. 
However, dense self-attention achieves this flexibility by explicitly considering all node pairs. 
For a graph with $N$ nodes, full attention has quadratic time and memory complexity in the number of nodes, namely $O(N^2)$ pairwise attention scores per layer, up to channel and head dimensions. 
This all-pairs communication bypasses graph bottlenecks, but it is also expensive and can be unnecessary when the relevant information flow is structured by the graph structure or by a small number of meaningful feature directions.
Schr\"odinger GNNs address the same conceptual problem, routing information between distant regions, from a different perspective. 
Rather than constructing an explicit all-pairs communication matrix, our method defines feature-location observables and uses complex modulation to induce momentum along learned formal directions. 
The Schr\"odinger operator then propagates the signal coherently along these directions. 
Thus, routing is represented as a geometry-preserving flow of localized signal content, instead of as a dense set of pairwise attention coefficients. In addition, our method is $O(|E|)$ for fixed architectural hyperparameters. 
This contrasts with the $O(N^2)$ pairwise cost of dense Transformer attention.

\section{Computational realization of Schr\"odinger GNNs}
\label{subsec:arch_details}

In this appendix, we explain how the Schr\"odinger graph signal processing introduced in Section~\ref{sec:Schrodinger_gsp} is instantiated in practice.
One important note is that the model does not require explicit construction of a dense matrix exponential.
Instead, it applies sparse feature-dependent operators to node features and uses a truncated series to approximate the action of the Schr\"odinger propagator.

\subsection{Approximating the unitary propagation}

In the implementation, we approximate $\mathcal S[t,f]=e^{-it\Delta_f}$ on feature tensors rather than forming the dense matrix exponential explicitly.
For any operator $B$, we have
\begin{equation*}
    e^{B}=\sum_{r=0}^{\infty}\frac{B^{r}}{r!},
\end{equation*}
thus, the order-$R$ truncated Taylor approximation is given by 
\begin{equation*}
    e^{-it\Delta_f}X
\approx
\sum_{r=0}^{R}\frac{(-it\Delta_f)^{r}}{r!}\,X.
\end{equation*}
A practical advantage of this formulation is that one can apply $\Delta_f$ without explicitly constructing the matrix $\Delta_f$. Specifically, for the multi-feature operator, we have 
\begin{equation*}
    \Delta_f X = -\sum_{k=1}^{K}\nabla_{f_k}\bigl(\nabla_{f_k}X\bigr).
\end{equation*}
Each application of $\nabla_{f_k}$ is an edge-local sparse operation, so one evaluates $\Delta_f X$ by a sequence of sparse passes over the graph. This is more efficient for the actual implementation than a dense matrix-exponential routine. 

\subsection{Layer-level realization of the Schr\"odinger GNN}

Let $\mF\in\mathbb R^{N\times K_f}$ be the matrix of formal-location features, with $\mF_{:,k}=f_k$, and let $\mX\in\mathbb C^{N\times J}$ be the current complex signal matrix. A Schr\"odinger filter with $M$ filter terms outputs $\mY\in\mathbb C^{N\times D}$ by 
\begin{equation*}
    \mY
=
\sum_{m=1}^{M}
\mathcal S[t_m,\mathbf{f}]
\,D[\theta_m \mF\mT^{(m)}]
\,\mX\,\mW^{(m)}.
\end{equation*}
Here $t_m,\theta_m\in\mathbb R$ are scalar propagation and modulation parameters, $\mT^{(m)}\in\mathbb R^{K_f\times 1}$ chooses a direction in the formal-location feature space, $D[\theta_m \mF\mT^{(m)}]$ is the diagonal phase-modulation operator with diagonal entries $\exp\bigl(i\theta_m(\mF\mT^{(m)})_n\bigr)$ for $ n\in[N]$, and $\mW^{(m)}\in\mathbb C^{J\times D}$ mixes feature channels. The order of operations is: phase-modulate the signal, propagate it with the unitary Schr\"odinger operator, and then mix channels. In such construction, the modulation creates directional momentum, while $\mathcal S[t_m,\mathbf{f}]$ transports the signal along the formal locations. 

\subsection{Complex input modulation}

The raw input node-feature matrix is denoted by $\mQ\in\mathbb R^{N\times d_{\rm in}}$. Before applying Schr\"odinger propagation, we map $\mQ$ to a complex signal matrix by an amplitude--phase factorization,
\begin{equation*}
    \widetilde{\mX} = (\mQ\mB)\odot \exp\bigl(i\,\mQ\mP\bigr) \in\mathbb C^{N\times d},
\end{equation*}
where $\mB,\mP\in\mathbb R^{d_{\rm in}\times d}$ are learned linear maps and $\odot$ denotes entrywise multiplication. The amplitude term $\mQ\mB$ carries the real-valued content, while the phase term $\exp(i\mQ\mP)$ creates oscillations that can induce nonzero expected momentum under the Schr\"odinger dynamics. This conversion is not a dimensionality-reducing projection by itself, but it represents each learned channel by a complex amplitude whose modulus and phase are both trainable functions of the original features.

\subsection{Position-momentum optimization}
When Position-momentum optimization (PMO) is used, it is run before training the main prediction network as a preprocessing step. 

Given the raw candidate formal-location features $\mQ\in\mathbb R^{N\times M}$, PMO learns a linear map $\mT\in\mathbb R^{M\times K_f}$ and defines $\mF=\mQ\mT$.
The objective in Definition~\ref{def:minimization} penalizes cross-commutators such as $[\nabla_{f_j}^2,X_{f_i}]$ and includes the stated regularization terms. In practice, we optimize $\mT$ with Adam over the training graphs, using mini-batches when the dataset contains multiple graphs. After this preprocessing stage, $\mT$ is fixed, and the resulting formal-location matrix $\mF$ is used by the Schr\"odinger GNN during training and inference. Thus, PMO is an additional preprocessing cost, not an additional matrix-exponential cost at every layer.

\subsection{Nonlinearities and dropout}
After the modulation layer, the network operates on complex-valued node features.
We apply the nonlinearities componentwise \citep{bassey2021survey}, given by 
\begin{equation*}
    \sigma(z)=\mathrm{ReLU}(\mathrm{Re}(z)) + i\,\mathrm{ReLU}(\mathrm{Im}(z)),
\end{equation*}
and we found that it works well in practice. We also explore magnitude $\sigma(z)=|z|$. 
The componentwise choice keeps the representation in the complex domain, and  the magnitude is used at the final stage when the downstream task requires real-valued outputs.

\subsection{Computational complexity}

Let $R$ denote the Taylor truncation order, let $K$ be the number of feature-location directions used to build $\Delta_f$, and let $C$ be the hidden feature dimension. One application of $\nabla_{f_k}$ to a feature tensor costs $O(|E|C)$, since it is an edge-local sparse operation. Therefore, one application of $\nabla_{f_k}^{2}$ also costs $O(|E|C)$ up to a constant factor, and one application of $\Delta_f = -\sum_{k=1}^{K}\nabla_{f_k}^{2}$ costs $O(K|E|C)$. An order-$R$ truncated Taylor approximation costs $O(RK|E|C)$ per layer. For fixed $R$ and $K$, this remains linear in the number of edges.

The memory cost is dominated by the feature tensors, the sparse graph structure, and the stored feature-location channels. Note that we do not compute the dense matrix exponential, which keeps memory usage comparable to other sparse message-passing architectures. In our experiments, we found that values around $R\approx 10\text{--}15$ gave a good balance between accuracy and cost.
The memory complexity is $O((|V| + |E|)C)$, similar to standard GNNs, as we do not explicitly construct the dense matrix exponential.

\subsection{Propagation time initialization}
Each output channel has a learnable real propagation-time parameter. If a layer has $C$ output channels, we initialize 
\begin{equation*}
    t_j\sim \operatorname{Uniform}(0,1.5),
\qquad j=1,\ldots,C.
\end{equation*}
Small values of $t_j$ produce local filters, while larger values allow a channel to aggregate information from farther regions through the Schr\"odinger propagation. This initialization gives different channels different initial transport ranges. When learnable propagation times are disabled, we use a fixed scalar time value instead.

\section{Theoretical results and proofs}\
\label{app:proof}

We present the theoretical results and proof for Schr\"odinger graph signal processing in Section \ref{sec:Schrodinger_gsp}.
To support these proofs, we also include several additional propositions and lemmas, which are numbered separately for clarity.
We restate the claim of each statement for convenience.

\subsection{Background in functional calculus}

We recall the finite-dimensional functional calculus for self-adjoint operators. 
Let $\mM \in \CC^{N \times N}$ be self-adjoint and let $\mM = \mV \bm{\Lambda}\mV^*$ be its eigendecomposition, where $\mV$ is a unitary matrix and $\bm{\Lambda}$ is a diagonal matrix with elements $\operatorname{diag}(\lambda_1, \ldots, \lambda_N)$.

For any function $y: \R \rightarrow \CC$ defined on the spectrum $\{\lambda_1, \ldots, \lambda_N\}$, we have $y(\mM) = \mV y(\bm{\Lambda})\mV^*$, where $y(\bm{\Lambda}) = \operatorname{diag}(y(\lambda_1), \ldots, y(\lambda_N))$. 
If $\vv_j$ is an orthonormal eigenbasis of $\mM$ with $\mM \vv_j =\lambda_j \vv_j$, then 
$y(\mM)\vv_j = y(\lambda_j)\vv_j$. 
Thus, $y(\mM)$ is diagonal in an orthonormal basis, and hence is a normal operator. 
In addition, if $y$ maps to $\R$ into the unit circle $\{z\in \mathbb{C}: |z|=1\}$, then $y(\mM)$ is unitary.

\begin{lemma}
\label{lem:func_calc_commute}
    Let $\mM,\mR\in\mathbb{C}^{N\times N}$ be commuting self-adjoint operators (i.e., $\mM\mR=\mR\mM$). Then, for every function $f:\mathbb{R}\rightarrow\mathbb{C}$, $M$ commutes with $y(\mR)$.
\end{lemma}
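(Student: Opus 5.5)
The plan is to reduce the statement to simultaneous diagonalization in the eigenbasis of $\mR$. (The statement writes $f$ for the function and $M$ for $\mM$; I read these as $y$ and $\mM$, to match the functional calculus recalled just above.) Let $\mu_1,\ldots,\mu_p$ be the distinct eigenvalues of $\mR$, and let $P_1,\ldots,P_p$ be the corresponding orthogonal eigenprojections. By the spectral theorem recalled in the functional calculus paragraph, $\mR=\sum_{j}\mu_j P_j$ and $y(\mR)=\sum_j y(\mu_j)P_j$. This holds because $y(\mR)=\mV y(\bm{\Lambda})\mV^*$ and grouping equal eigenvalues gives exactly this sum. It therefore suffices to show that $\mM$ commutes with each $P_j$; the result then follows by linearity.

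The key step is $\mM P_j=P_j\mM$. First, $\mM$ preserves each eigenspace $E_j=\operatorname{ran}P_j$: if $\mR v=\mu_j v$, then $\mR\mM v=\mM\mR v=\mu_j \mM v$, so $\mM v\in E_j$. Hence $P_j\mM P_j=\mM P_j$. Next, taking adjoints and using $\mM^*=\mM$ and $P_j^*=P_j$ gives $P_j\mM P_j=P_j\mM$. Combining the two identities yields $\mM P_j=P_j\mM$.

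An alternative route avoids projections. Choose a polynomial $p$ that interpolates $y$ on the finite set of eigenvalues $\{\mu_1,\ldots,\mu_p\}$ (Lagrange interpolation). Then $y(\mR)=p(\mR)$, since both act as $y(\mu_j)$ on $E_j$. Because $\mM$ commutes with $\mR$, it commutes with every power $\mR^k$, and hence with $p(\mR)$. This version does not even need $\mM$ to be self-adjoint. I would likely present this one as the main argument, since it is shortest.

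There is no serious obstacle. The only point needing care is justifying $y(\mR)=p(\mR)$, i.e., that the functional calculus depends only on the values of $y$ on the spectrum. This follows directly from $y(\mR)=\mV y(\bm{\Lambda})\mV^*$ together with $p(\mR)=\mV p(\bm{\Lambda})\mV^*$, and from the fact that the diagonal entries agree.
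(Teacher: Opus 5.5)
The paper gives no proof of this lemma. It is stated as standard background in the functional-calculus subsection and used only once, in the proof of Theorem~\ref{thm:expected_momentum_invariant}, to conclude that $\nabla_f$ commutes with $\mathcal{S}[t,f]=e^{-it\Delta_f}$ because it commutes with $\Delta_f=-\nabla_f^2$. There is therefore no argument of the authors' to compare against. Both of your routes are correct and complete, and your reading of $f$ as $y$ and $M$ as $\mM$ is right.

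Making the interpolation argument the main proof is the better choice, for the reason you give: it never uses $\mM^*=\mM$. This matters for the paper's actual application. There the role of $\mM$ is played by $\nabla_f$, which is skew-adjoint, not self-adjoint, so the lemma as stated does not literally apply. One could apply it to the self-adjoint $i\nabla_f$ and then divide by $i$, but your version covers the use directly.

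In the projection route the adjoint step is also unnecessary. Since $\mM$ maps each eigenspace $E_j$ of $\mR$ into itself, and the $E_k$ are mutually orthogonal with $\mathbb{C}^N=\bigoplus_k E_k$, write $x=\sum_k P_k x$. Then $P_j\mM P_k x=0$ for $k\neq j$, which gives $P_j\mM x=\mM P_j x$ without using any property of $\mM$ beyond commuting with $\mR$.
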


\subsection{Proof of Theorem \ref{thm:expected_momentum_invariant}}

\textbf{Theorem \ref{thm:expected_momentum_invariant}} (Constant expected momentum)\textbf{.}
\textit{Let $g^{(0)}:V\rightarrow\mathbb{C}$ be a normalized signal and $f:V\rightarrow\mathbb{R}$ a feature location. Then, for every $t\in\mathbb{R}$, 
\[
\mathcal{E}_{i\nabla_{f}}(g^{(t)})=\mathcal{E}_{i\nabla_{f}}(g^{(0)}).
\]}

\begin{proof}
Let $\Delta_f = - \nabla_f^2$ and $\mathcal{S}[t,f]=e^{-it\Delta_f}$. Since $\Delta_f$ is self-adjoint, and $e^{-it(\cdot)}$ maps to the complex unit circle, $\mathcal{S}[t,f]$ is unitary. Moreover, since $\Delta_f$ commutes with $\nabla_f$, by Lemma \ref{lem:func_calc_commute}  $\nabla_f$ commutes with $\mathcal{S}[t,f]$. 
Hence, 
\begin{equation*}
    \begin{aligned}
        \mathcal{E}_{i\nabla_{f}}(g^{(t)})
        & = \ip{i \nabla_{f} \mathcal{S}[t,f] g^{(0)}}{ \mathcal{S}[t,f] g^{(0)}} \\
        & = \ip{  \mathcal{S}[t,f] i\nabla_{f} g^{(0)}}{ \mathcal{S}[t,f] g^{(0)}} \\
        & = \ip{i \nabla_{f} g^{(0)} }{ g^{(0)}} = \mathcal{E}_{i\nabla_{f}}(g^{(0)}). 
    \end{aligned}
\end{equation*}
\end{proof}

\subsection{Proof of Theorem \ref{thm:derivative_expected_feature}}
\label{app:proof_of_thm:derivative_expected_feature}

Before proving Theorem \ref{thm:derivative_expected_feature}, we develop some analysis for the smoothing operator.

\begin{lemma}
[Smoothing operator as commutator]
\label{lem:smooth_comm}
\begin{equation*}
    W_f = [X_f,\nabla_f] \;=\; X_f \nabla_f - \nabla_f X_f \;=\; -[\nabla_f,X_f].
\end{equation*}
\end{lemma}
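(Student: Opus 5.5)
We will prove the identity by comparing matrix entries, since all three operators $X_f$, $\nabla_f$ and $W_f$ are given explicitly in the standard basis. First, by Definition~\ref{def:f_smoothing}, the matrix of $W_f$ has entries
\[
(W_f)_{v,w}=a_{v,w}\bigl(f(w)-f(v)\bigr)^2 .
\]
This uses two facts. The sum over $w\in\mathcal{N}(v)$ can be extended to all $w\in V$ because $a_{v,w}=0$ whenever $\{v,w\}\notin E$. The diagonal contributes nothing because $(f(v)-f(v))^2=0$, so possible self-loops are harmless.

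Next we compute the two products, using that $X_f=\operatorname{diag}(f)$ acts by multiplying rows (on the left) or columns (on the right) by $f$. Together with Definition~\ref{def:fk_partial} this gives
\[
(X_f\nabla_f)_{v,w}=f(v)\,a_{v,w}\bigl(f(v)-f(w)\bigr),\qquad (\nabla_f X_f)_{v,w}=a_{v,w}\bigl(f(v)-f(w)\bigr)f(w).
\]
Subtracting, we get
\[
([X_f,\nabla_f])_{v,w}=a_{v,w}\bigl(f(v)-f(w)\bigr)\bigl(f(v)-f(w)\bigr)=a_{v,w}\bigl(f(v)-f(w)\bigr)^2 .
\]
This coincides with $(W_f)_{v,w}$, since the square is symmetric in its sign.

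The last equality, $[X_f,\nabla_f]=-[\nabla_f,X_f]$, is immediate from the antisymmetry of the commutator. There is no real obstacle here. The only points needing care are the index conventions: which factor of a diagonal matrix scales rows and which scales columns, and the passage from the neighborhood sum in Definition~\ref{def:f_smoothing} to a full matrix product, which is justified by the vanishing of $a_{v,w}$ off the edge set.
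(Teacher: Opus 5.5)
Your proof is correct and is essentially the paper's argument. The paper computes $([X_f,\nabla_f]g)(v)$ for an arbitrary signal $g$ and matches it with $(W_f g)(v)$, which is the same calculation as your entrywise identity $(X_f\nabla_f-\nabla_f X_f)_{v,w}=a_{v,w}(f(v)-f(w))^2=(W_f)_{v,w}$; your remark that the neighborhood sum extends to all of $V$ because $a_{v,w}=0$ off the edge set (and the diagonal term vanishes) is used only implicitly there.
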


\begin{proof}

For any signal $g$ and vertex $v$, we have
\begin{equation*}
    \begin{aligned}
        ([X_f,\nabla_f]g)(v) 
        & = (X_f \nabla_f g)(v) - (\nabla_f X_f g)(v)\\
        &= f(v)\sum_{w \in V} a_{v,w}(f(v) - f(w)) g(w) - \sum_{w\in V} a_{v,w}(f(v) - f(w)) f(w) g(w)\\
&= \sum_{w\in V} a_{v,w}(f(v) - f(w))^2 g(w) \\
&= (W_f g)(v).
    \end{aligned}
\end{equation*}
Hence, $[X_f,\nabla_f]=W_f$, and equivalently $[\nabla_f,X_f]=-W_f$. 
\end{proof}

\begin{lemma}
[Commutator of Schr\"odinger Laplacian and location observable]
\label{lemma:commutator_expansion}
For the Schr\"odinger Laplacian $\Delta = -\nabla_f^2$ and location observable $X_f$, we have
\[
[\Delta, X_f] = \nabla_f W_f + W_f \nabla_f
\]
where $W_f = [\nabla_f, X_f]$ is the $f$-smoothing operator.
\end{lemma}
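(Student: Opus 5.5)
The identity follows from the Leibniz rule for commutators combined with Lemma~\ref{lem:smooth_comm}. No entrywise computation is needed beyond what that lemma already did.

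\textbf{Step 1: Leibniz rule.} For square matrices $A,B,C$ we have $[AB,C]=A[B,C]+[A,C]B$. This follows by expanding $ABC-CAB = ABC - ACB + ACB - CAB$. Applying it with $A=B=\nabla_f$ and $C=X_f$ gives
\[
[\nabla_f^2, X_f] = \nabla_f[\nabla_f,X_f] + [\nabla_f,X_f]\nabla_f .
\]

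\textbf{Step 2: substitute $\Delta=-\nabla_f^2$.} By bilinearity of the commutator,
\[
[\Delta, X_f] = -[\nabla_f^2,X_f] = -\nabla_f[\nabla_f,X_f] - [\nabla_f,X_f]\nabla_f .
\]

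\textbf{Step 3: invoke Lemma~\ref{lem:smooth_comm}.} That lemma gives $[\nabla_f,X_f]=-W_f$, where $W_f$ is the $f$-smoothing operator of Definition~\ref{def:f_smoothing}. Substituting into Step 2 yields
\[
[\Delta,X_f]=\nabla_f W_f + W_f\nabla_f,
\]
which is the claimed formula.

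\textbf{Sign convention.} The only point requiring care is the sign. The parenthetical in the statement writes $W_f=[\nabla_f,X_f]$, whereas Lemma~\ref{lem:smooth_comm} establishes $W_f=[X_f,\nabla_f]=-[\nabla_f,X_f]$. The claimed identity $[\Delta,X_f]=\nabla_fW_f+W_f\nabla_f$ holds with $W_f$ taken to be the $f$-smoothing operator as in Lemma~\ref{lem:smooth_comm}. With the literal convention $W_f=[\nabla_f,X_f]$, the right-hand side would carry an overall minus sign. The proof will therefore state explicitly that $W_f$ denotes the smoothing operator of Definition~\ref{def:f_smoothing}, so that the parenthetical is read consistently with Lemma~\ref{lem:smooth_comm}.

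As an optional sanity check, one can verify the identity entrywise on a single edge using the kernel $a_{v,w}(f(v)-f(w))^2$ of $W_f$.
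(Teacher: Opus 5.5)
Your proof is correct and follows the paper's argument exactly: you apply the Leibniz rule $[AB,C]=A[B,C]+[A,C]B$ to $-[\nabla_f\nabla_f,X_f]$ and then substitute $[\nabla_f,X_f]=-W_f$ from Lemma~\ref{lem:smooth_comm}. Your sign remark is also right, since the paper's own proof uses $[\nabla_f,X_f]=-W_f$; the parenthetical $W_f=[\nabla_f,X_f]$ in the statement is therefore a typo and should read $W_f=[X_f,\nabla_f]$.
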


\begin{proof}
Using the product rule for commutators $[AB, C] = A[B,C] + [A,C]B$, and by Lemma \ref{lem:smooth_comm}, we have
\begin{align*}
[\Delta, X_f] 
&= [-\nabla_f^2, X_f] \\
&= -[\nabla_f \nabla_f, X_f]\\
&= -\nabla_f[\nabla_f, X_f] - [\nabla_f, X_f]\nabla_f\\
& = -\nabla_f(-W_f) - (-W_f)\nabla_f\\
&= \nabla_f W_f + W_f \nabla_f
\end{align*}
\end{proof}

Now we prove Theorem \ref{thm:derivative_expected_feature}.

\textbf{Theorem \ref{thm:derivative_expected_feature}} (Expected feature location derivative under Schr\"odinger dynamics)\textbf{.}
\textit{Let $g^{(0)}:V\rightarrow\mathbb{C}$ be a normalized signal and $f:V\rightarrow\mathbb{R}$ a feature location. Let $g^{(t)}=\mathcal{S}[t,f]g^{(0)}$ for every $t\in\mathbb{R}$. Then, 
\begin{equation*}
\frac{\partial}{\partial t}\mathcal{E}_{X_f}(g^{(t)})=2 \mathrm{Re}\left(\langle i\nabla_f g^{(t)}, W_f g^{(t)}\rangle\right).  
\end{equation*}
}

\begin{proof}
Note that by definition of $\mathcal{E}_{X_f}(g^{(t)}) = \ip{X_fg^{(t)}}{g^{(t)}}$, we have 
\begin{equation*}
    \frac{\partial}{\partial t}\mathcal{E}_{X_f}(g^{(t)}) =  \frac{\partial}{\partial t}\ip{X_fg^{(t)}}{g^{(t)}} 
\end{equation*}
and since $g^{(t)}=\mathcal{S}[t,f]g^{(0)}$, by the chain rule,
we have 
\begin{equation*}
\begin{aligned}
    \frac{\partial}{\partial t}\mathcal{E}_{X_f}(g^{(t)})
    & = \ip{X_f(-i\Delta_f)g^{(t)}}{g^{(t)}} + \ip{X_fg^{(t)}}{(-i\Delta_f)g^{(t)}}\\ 
    & = \ip{i[\Delta_f,X_f]g^{(t)}}{g^{(t)}}
\end{aligned}
\end{equation*}
Substituting $\Delta_f=-\nabla_f^2$ and using Lemma~\ref{lemma:commutator_expansion} yields
\begin{align*}
 \frac{\partial}{\partial t}\mathcal{E}_{X_f}(g^{(t)}) 
 &= \langle i\nabla_f W_f g^{(t)}, g^{(t)}\rangle + \langle W_f i\nabla_f g^{(t)}, g^{(t)}\rangle\\
&= \langle W_f g_t,i\nabla_f g_t\rangle + \langle i\nabla_f g^{(t)},W_f g^{(t)}\rangle \\
&= \overline{\langle i\nabla_f g^{(t)},W_f g^{(t)}\rangle} + \langle i\nabla_f g^{(t)},W_f g^{(t)}\rangle \\
&= 2\,\mathrm{Re}\left(\langle i\nabla_f g^{(t)}, W_f g^{(t)}\rangle\right).
\end{align*}
\end{proof}

The previous theorem still contains the smoothed signal $W_fg^{(t)}$. The next definition isolates the regime in which $W_fg^{(t)}$ is close to $g^{(t)}$, so that the derivative of the expected feature is close to the exact momentum.

\begin{definition}[$\epsilon$-$f$ regular signal]
\label{app:epsilon_regular}
Let $\gG = (V, E)$ be a graph, $f: V \to \mathbb{R}$ be a feature location, and $W_f$  the  corresponding $f$-smoothing operator. A signal $g: V \to \mathbb{C}$ is called $\epsilon$-$f$ regular if there exists a signal $e_g:V \to \mathbb{R}$ such that
$W_f g = g + e_g$ with $\|e_g\|_2 \leq \epsilon$. 
\end{definition}

The next theorem quantifies the extent to which  the the exact expected momentum approximates the rate of change of the expected feature location,  when the signal is $\epsilon$-$f$ regular.

\begin{theorem}[Expected feature location dynamics for $\epsilon$-$f$ regular signals]
\label{thm:deviation_bounds_single_feature} 
Let $g^{(0)}: V \to \mathbb{C}$ be normalized  and $g^{(t)}=\mathcal{S}[t,f]g^{(0)}$. 
If $g^{(t)}$ is $\epsilon$-$f$ regular, then
\[
\left| \frac{\partial}{\partial t}\mathcal{E}_{X_f}(g^{(t)}) - 2 \mathcal{E}_{i\nabla_f}(g^{(0)}) \right| \leq 2\epsilon \|\nabla_f\|_{\rm op}. 
\]
\end{theorem}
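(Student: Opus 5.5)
Start from Theorem~\ref{thm:derivative_expected_feature}, which expresses the derivative of the expected location as $2\,\mathrm{Re}\langle i\nabla_f g^{(t)}, W_f g^{(t)}\rangle$. Since $g^{(t)}$ is $\epsilon$-$f$ regular, substitute $W_f g^{(t)} = g^{(t)} + e_{g^{(t)}}$ with $\|e_{g^{(t)}}\|_2\le\epsilon$. This splits the right-hand side into a main term and an error term:
\[
\frac{\partial}{\partial t}\mathcal{E}_{X_f}(g^{(t)}) = 2\,\mathrm{Re}\langle i\nabla_f g^{(t)}, g^{(t)}\rangle + 2\,\mathrm{Re}\langle i\nabla_f g^{(t)}, e_{g^{(t)}}\rangle .
\]

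The main term is the expected momentum $\mathcal{E}_{i\nabla_f}(g^{(t)})$. It is already real, because $i\nabla_f$ is self-adjoint ($\nabla_f$ is skew-symmetric), so taking the real part changes nothing. By Theorem~\ref{thm:expected_momentum_invariant} it equals $\mathcal{E}_{i\nabla_f}(g^{(0)})$. The main term is therefore exactly $2\mathcal{E}_{i\nabla_f}(g^{(0)})$.

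For the error term, apply $|\mathrm{Re}\, z|\le |z|$ and then Cauchy--Schwarz:
\[
|\langle i\nabla_f g^{(t)}, e_{g^{(t)}}\rangle| \le \|\nabla_f\|_{\rm op}\,\|g^{(t)}\|_2\,\epsilon .
\]
Since $\mathcal{S}[t,f]$ is unitary (shown in the proof of Theorem~\ref{thm:expected_momentum_invariant}), $\|g^{(t)}\|_2=\|g^{(0)}\|_2=1$. The error is therefore at most $2\epsilon\|\nabla_f\|_{\rm op}$, which gives the claimed inequality.

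I do not expect a real obstacle. The only step needing care is the main term, where one must confirm it is real so that the $\mathrm{Re}$ can be dropped, and that unitarity preserves the normalization. Both facts follow directly from results already established in the paper.
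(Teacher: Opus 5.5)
Your proposal is correct and matches the paper's proof step for step. You substitute $W_f g^{(t)} = g^{(t)} + e_{g^{(t)}}$ into Theorem~\ref{thm:derivative_expected_feature} and identify the main term as the real quantity $2\mathcal{E}_{i\nabla_f}(g^{(t)})$; you then bound the remainder by Cauchy--Schwarz with $\|g^{(t)}\|_2=1$ from unitarity, and use Theorem~\ref{thm:expected_momentum_invariant} to replace $g^{(t)}$ by $g^{(0)}$.
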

\begin{proof}
Recall from Theorem \ref{thm:derivative_expected_feature} that
\begin{equation}
    \label{eq:e-f-_bound0}
    \frac{\partial}{\partial t}\mathcal{E}_{X_f}(g^{(t)})=2 \mathrm{Re}\left(\langle i\nabla_f g^{(t)}, W_f g^{(t)}\rangle\right).
\end{equation}
By the $\epsilon$--$f$ regularity assumption there exists $e_{g^{(t)}}$ with $\lVert e_{g^{(t)}}\rVert_2\le \epsilon$ such that
$W_f g^{(t)} = g^{(t)} + e_{g^{(t)}}$. Using this identity together with  (\ref{eq:e-f-_bound0}) gives
\begin{align*}
\left|\frac{\partial}{\partial t}\mathcal{E}_{X_f}(g^{(t)}) - 2\,\mathcal{E}_{i\nabla_f}(g^{(t)})  \right| 
&= \left| 2 \,{\rm Re}\Big(\langle i\nabla_f  g^{(t)},e_{g^{(t)}}\rangle\Big) \right| \\
&\le 2\, \| i\nabla_f g^{(t)}\|_2\,\| e_{g^{(t)}}\|_2 \\
&\le 2\epsilon \,\|\nabla_f\|_\mathrm{op}\,\|g^{(t)}\|_2 \\
& = 2\epsilon \,\|\nabla_f\|_\mathrm{op}.
\end{align*}
Hence, by Theorem~\ref{thm:expected_momentum_invariant}, we have 
\begin{equation*}
    \left| \frac{\partial}{\partial t}\mathcal{E}_{X_f}(g^{(t)}) - 2\mathcal{E}_{i\nabla_f}(g^{(0)}) \right| \leq 2\epsilon \|\nabla_f\|_{\mathrm{op}}.
\end{equation*}
\end{proof}

\subsection{Dynamics of the variance}
\label{Dynamics of the Variance}

Next, we derive the dynamics of the variance.
\begin{theorem}[Time derivative of variance]
\label{thm:var_evolution}
Let $g:V\rightarrow\mathbb{C}$ be a signal and $f:V\rightarrow\mathbb{R}$ a feature location, and $\Delta_f = -\nabla_f^2$. The first-order derivative of variance with respect to time $t\in\mathbb{R}$ is 
\[
\frac{\partial}{\partial t}\mathcal{V}_{X_f}(g^{(t)}) = \mathcal{E}_{i[\Delta_f,X_f^2]}(g^{(t)}) -4\mathcal{E}_{X_f}(g^{(t)})\mathrm{Re}\big(\langle i\nabla_f g^{(t)}, W_f g^{(t)}\rangle\big).
\]
\end{theorem}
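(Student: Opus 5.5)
We write $\mathcal{V}_{X_f}(g^{(t)}) = \mathcal{E}_{X_f^2}(g^{(t)}) - \mathcal{E}_{X_f}(g^{(t)})^2$, as recorded in the definition of the variance, and differentiate each term in $t$ separately. Here $g^{(t)}=\mathcal{S}[t,f]g$. Since $\Delta_f$ is self-adjoint and $\mathcal{S}[t,f]=e^{-it\Delta_f}$, we have $\frac{\partial}{\partial t}g^{(t)} = -i\Delta_f g^{(t)}$.

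For the first term we repeat the computation from the proof of Theorem \ref{thm:derivative_expected_feature}, now with the self-adjoint observable $X_f^2$ in place of $X_f$. Differentiating $\ip{X_f^2 g^{(t)}}{g^{(t)}}$ by the product rule gives
\[
\ip{X_f^2(-i\Delta_f)g^{(t)}}{g^{(t)}} + \ip{X_f^2 g^{(t)}}{-i\Delta_f g^{(t)}}.
\]
In the second summand we move $-i\Delta_f$ to the left slot, where it becomes $i\Delta_f$ because $\Delta_f$ is self-adjoint and the adjoint conjugates the scalar. The two summands then combine to $\ip{i(\Delta_f X_f^2 - X_f^2\Delta_f)g^{(t)}}{g^{(t)}} = \mathcal{E}_{i[\Delta_f,X_f^2]}(g^{(t)})$. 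This step uses nothing specific about $X_f^2$ beyond self-adjointness, so no expansion of the commutator $[\Delta_f,X_f^2]$ is needed.

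For the second term, the chain rule gives $\frac{\partial}{\partial t}\mathcal{E}_{X_f}(g^{(t)})^2 = 2\mathcal{E}_{X_f}(g^{(t)})\,\frac{\partial}{\partial t}\mathcal{E}_{X_f}(g^{(t)})$. Substituting Theorem \ref{thm:derivative_expected_feature} turns this into $4\mathcal{E}_{X_f}(g^{(t)})\,\mathrm{Re}\big(\ip{i\nabla_f g^{(t)}}{W_f g^{(t)}}\big)$. Subtracting it from the first term yields the stated formula.

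The only delicate point is the normalization. Theorem \ref{thm:derivative_expected_feature} assumes $g^{(0)}$ is normalized, while the statement here only says "a signal." Unitarity of $\mathcal{S}[t,f]$ keeps $\|g^{(t)}\|_2 = \|g\|_2$, and the variance formula $\mathcal{E}_{X_f^2}-\mathcal{E}_{X_f}^2$ is itself the normalized one. We therefore assume $\|g\|_2=1$, as implicit in the notion of variance. We also note that the derivative computation in Theorem \ref{thm:derivative_expected_feature} never actually uses the norm, so the identity for the second term holds as written in any case.
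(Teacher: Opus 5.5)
Your proof is correct and follows the same route as the paper. You split $\mathcal{V}_{X_f}(g^{(t)})=\mathcal{E}_{X_f^2}(g^{(t)})-\mathcal{E}_{X_f}(g^{(t)})^2$, differentiate the first term by the product rule with $\partial_t g^{(t)}=-i\Delta_f g^{(t)}$ to get $\mathcal{E}_{i[\Delta_f,X_f^2]}(g^{(t)})$, and handle the second by the chain rule together with Theorem~\ref{thm:derivative_expected_feature}. Your write-up is in fact slightly more careful than the paper's: you keep $X_f^2$ in both summands (the paper's display has $X_f$ in the second summand), and you state the normalization $\|g\|_2=1$ explicitly rather than leaving it implicit.
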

This mirrors the classical Schr\"odinger equation dynamics where variance evolution depends on both the commutator $[\Delta,X^2]$ 
and the coupling between position and momentum. See Appendix \ref{sec:Schrodinger_classic} for the classical correspondence.

\begin{proof}[Proof of Theorem \ref{thm:var_evolution}]
    By the definition of variance, we have 
    \begin{equation*}
        \mathcal{V}_{X_f}(g^{(t)})=\mathcal{E}_{X_f^2}(g^{(t)})-\mathcal{E}_{X_f}(g^{(t)})^2. 
    \end{equation*}
    Differentiating with respect to $t$ then gives
    \begin{equation*}
        \frac{\partial}{\partial t}\mathcal{V}_{X_f}(g^{(t)})
=
\frac{\partial}{\partial t}\mathcal{E}_{X_f^2}(g^{(t)})
-
2\mathcal{E}_{X_f}(g^{(t)})\frac{\partial}{\partial t}\mathcal{E}_{X_f}(g^{(t)}).
    \end{equation*}
From the time evolution of the expected feature for every observable, and by definition of $\mathcal{E}_{X_f^2}(g^{(t)}) = \ip{X_f^2g^{(t)}}{g^{(t)}}$, we have $\frac{\partial}{\partial t}\mathcal{E}_{X_f^2}(g^{(t)}) =  \frac{\partial}{\partial t}\ip{X_f^2g^{(t)}}{g^{(t)}} $.
Now, by the chain rule and the fact that $g^{(t)}=\mathcal{S}[t,f]g^{(0)}$, we have 
\begin{equation*}
\begin{aligned}
    \frac{\partial}{\partial t}\mathcal{E}_{X_f^2}(g^{(t)})
    & = \ip{X_f^2(-i\Delta_f)g^{(t)}}{g^{(t)}} + \ip{X_fg^{(t)}}{(-i\Delta_f)g^{(t)}}\\ 
    & = \ip{i[\Delta_f,X_f^2]g^{(t)}}{g^{(t)}} \\
    & =\mathcal{E}_{i[\Delta_f,X_f^2]}(g^{(t)})
\end{aligned}
\end{equation*}

In addition, by Theorem~\ref{thm:derivative_expected_feature}, we have
\begin{equation*}
    \frac{\partial}{\partial t}\mathcal{E}_{X_f}(g^{(t)})=
2\,\mathrm{Re}\left(\langle i\nabla_f g^{(t)},W_fg^{(t)}\rangle\right).
\end{equation*}
By substituting both expressions, we obtain
\begin{equation*}
    \frac{\partial}{\partial t}\mathcal{V}_{X_f}(g^{(t)}) = \mathcal{E}_{i[\Delta_f,X_f^2]}(g^{(t)}) -4\mathcal{E}_{X_f}(g^{(t)})\mathrm{Re}\big(\langle i\nabla_f g^{(t)}, W_f g^{(t)}\rangle\big).
\end{equation*}
\end{proof}

\subsection{Proof of Theorem \ref{thm:non_zero_expected_momentum_modulated_signals}}

\textbf{Theorem \ref{thm:non_zero_expected_momentum_modulated_signals}} (Expected momentum of modulated signal)\textbf{.}
\textit{Given a  normalized signal $g:V\rightarrow\mathbb{R}$,  feature locations $f,h:V\rightarrow\mathbb{R}$, and a  phase $\theta\in\mathbb{R}$, the expected momentum of  $D[\theta h]g$  satisfies
\begin{equation*}
\mathcal{E}_{i\nabla_f}(D[\theta h]g) = \sum_{(m,n) \in E} a_{m,n} g(m) g(n) (f(n) - f(m)) \sin(\theta(h(n) - h(m))).
\end{equation*}
}

\begin{proof}
For the modulated signal ${D_{\theta h}}g(v) = g(v) e^{i\theta h(v)}$, we have
\[
(\nabla_f {D_{\theta h}}g) (m)= \sum_{n \in V} a_{m,n}\bigl(f(m)-f(n)\bigr) g(n)e^{i\theta h(n)}.
\]
Therefore
\begin{align*}
\mathcal{E}_{i\nabla_f}(D_{\theta h}g) 
&= \langle i\nabla_f D_{\theta h}g, D_{\theta h}g \rangle \\
&= i \sum_{m \in V} \sum_{n \in V} a_{m,n} g(m) g(n)e^{i\theta(h(n) - h(m))} (f(m) - f(n)).
\end{align*}
Because the graph is undirected, we may group the two orientations of each edge. Namely, 
\begin{align*}
& \quad \quad \mathcal{E}_{i\nabla_f}(D_{\theta h}g) \\
& = \frac{i}{2} \sum_{(m,n) \in E} a_{m,n} g(m) g(n) \left[e^{i\theta(h(n) - h(m))}(f(m) - f(n)) + e^{i\theta(h(m) - h(n))}(f(n) - f(m))\right] \\
&= \frac{i}{2} \sum_{(m,n) \in E} a_{m,n} g(m) g(n) (f(n) - f(m))\left[e^{-i\theta(h(n) - h(m))} - e^{i\theta(h(n) - h(m))}\right] \\
&= \frac{i}{2} \sum_{(m,n) \in E} a_{m,n} g(m) g(n) \bigl(f(n) - f(m))\bigl(-2i)\sin(\theta(h(n) - h(m))) \\
&= \sum_{(m,n) \in E} a_{m,n} g(m) g(n) (f(n) - f(m))\sin(\theta(h(n) - h(m))).
\end{align*}

\end{proof}

\subsection{Proof of Theorem \ref{thm:derivative_expected_multi_feature}}

\textbf{Theorem \ref{thm:derivative_expected_multi_feature}} (Expected multi-feature derivative)\textbf{.}
\textit{Given the Schr\"odinger Laplacian $\Delta_f = -\sum_{k \in [K]}\nabla_{f_k}^2$, a normalized signal $g^{(0)}$, and $g^{(t)}=\mathcal{S}[t,f]g^{(0)}$, we have
\begin{equation*}
    \frac{\partial}{\partial t}\mathcal{E}_{X_{f_k}}(g^{(t)})=  
     2 \mathrm{Re}\left(\langle i\nabla_{f_k} g^{(t)}, W_{f_k} g^{(t)}\rangle\right) -\sum_{j\neq k} \ip{[i\nabla_{f_j}^2,X_{f_k}]g^{(t)}}{g^{(t)}}.
\end{equation*}
}

\begin{proof}
    By definition of $\mathcal{E}_{X_{f_k}}(g^{(t)}) = \ip{X_{f_k}g^{(t)}}{g^{(t)}}$, we have 
\begin{equation*}
    \frac{\partial}{\partial t}\mathcal{E}_{X_{f_k}}(g^{(t)}) =  \frac{\partial}{\partial t}\ip{X_fg^{(t)}}{g^{(t)}} 
\end{equation*}
and since $g^{(t)}=\mathcal{S}[t,f]g^{(0)}$, by the chain rule, we have 
\begin{equation*}
\begin{aligned}
    \frac{\partial}{\partial t}\mathcal{E}_{X_{f_k}}(g^{(t)})
    & = \ip{X_{f_k}(-i\Delta_{f})g^{(t)}}{g^{(t)}} + \ip{X_{f_k}g^{(t)}}{(-i\Delta_{f})g^{(t)}}\\ 
    & = \ip{i[\Delta_f,X_{f_k}]g^{(t)}}{g^{(t)}}.
\end{aligned}
\end{equation*}
Since $\Delta_f=-\sum_{j=1}^K \nabla_{f_j}^2$, we have
    \begin{equation*}
        \frac{\partial}{\partial t}\mathcal{E}_{X_{f_k}}(g^{(t)}) = -\sum_{j=1}^K \langle [i\nabla_{f_j}^2,X_{f_k}]g^{(t)},g^{(t)}\rangle.
    \end{equation*}
    We split the sum into the $j=k$ term and the cross terms.
    When $j=k$, using Lemma~\ref{lemma:commutator_expansion} gives 
    \begin{align*}
 \frac{\partial}{\partial t}\mathcal{E}_{X_{f_k}}(g^{(t)}) 
 &= \langle i\nabla_{f_k} W_f g^{(t)}, g^{(t)}\rangle + \langle W_{f_k} i\nabla_{f_k} g^{(t)}, g^{(t)}\rangle\\
&= \langle W_{f_k} g_t,i\nabla_{f_k} g_t\rangle + \langle i\nabla_{f_k} g^{(t)},W_{f_k} g^{(t)}\rangle \\
&= \overline{\langle i\nabla_{f_k} g^{(t)},W_{f_k} g^{(t)}\rangle} + \langle i\nabla_{f_k} g^{(t)},W_{f_k} g^{(t)}\rangle \\
&= 2\,\mathrm{Re}\left(\langle i\nabla_{f_k} g^{(t)}, W_{f_k} g^{(t)}\rangle\right).
\end{align*}
Therefore, we have
    \begin{equation*}
        \ip{i[\Delta_{f_k},X_{f_k}]g^{(t)}}{g^{(t)}}  = 2 \mathrm{Re}\left(\langle i\nabla_{f_k} g^{(t)}, W_{f_k} g^{(t)}\rangle\right).
    \end{equation*}
     Keeping the remaining indices explicit gives
    \begin{equation*}
        \frac{\partial}{\partial t}\mathcal{E}_{X_{f_k}}(g^{(t)})=  
     2 \mathrm{Re}\left(\langle i\nabla_{f_k} g^{(t)}, W_{f_k} g^{(t)}\rangle\right) -\sum_{j\neq k} \ip{[i\nabla_{f_j}^2,X_{f_k}]g^{(t)}}{g^{(t)}}.
    \end{equation*}
\end{proof}

The cross terms in Theorem~\ref{thm:derivative_expected_multi_feature} vanish when $i\nabla_{f_j}^2$ commute with $X_{f_k}$ for $j\neq k$.  
The next result quantifies the deviation from the single feature dynamics when commutation holds only approximately and the signal is $\epsilon$-$f_k$ regular.

\begin{theorem}[Dynamics of multi-channel expected feature location for $\epsilon$-$f$ regular signals]
\label{thm:deviation_bounds_multi_feature}
Let $\Delta_f=-\sum_{j\in[K]}\nabla_{f_j}^2$,  $g^{(t)}=\mathcal{S}[t,f]g=e^{-it\Delta_f}g^{(0)}$ with $\|g^{(0)}\|_2=1$, and  $k\in[K]$. Suppose that $g^{(t)}$ is $\epsilon$-$f_k$ regular and that the feature locations $\{f_1,\ldots,f_K\}$ form a $\delta$-commuting in the sense of Definition~\ref{def:epsilon-commuting features}.
Then,
\begin{equation*}
    \left| \frac{\partial}{\partial t}\mathcal{E}_{X_{f_k}}(g^{(t)}) - 2\mathcal{E}_{i\nabla_{f_k}}(g^{(t)}) \right|
\leq
2\epsilon \|\nabla_{f_k}\|_{\rm op}
+
(K-1)\delta.
\end{equation*}
\end{theorem}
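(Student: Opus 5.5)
Start from the exact identity of Theorem~\ref{thm:derivative_expected_multi_feature}:
\[
\frac{\partial}{\partial t}\mathcal{E}_{X_{f_k}}(g^{(t)})=2\,\mathrm{Re}\left(\langle i\nabla_{f_k} g^{(t)}, W_{f_k} g^{(t)}\rangle\right)-\sum_{j\neq k}\ip{[i\nabla_{f_j}^2,X_{f_k}]g^{(t)}}{g^{(t)}}.
\]
Subtract $2\mathcal{E}_{i\nabla_{f_k}}(g^{(t)})$ from both sides and apply the triangle inequality. This splits the deviation into a ``diagonal'' term and a ``cross'' term, which are bounded separately.

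\emph{Diagonal term.} Write $W_{f_k}g^{(t)}=g^{(t)}+e$ with $\|e\|_2\le\epsilon$, using the $\epsilon$-$f_k$ regularity assumption (Definition~\ref{app:epsilon_regular}). Then
\[
2\,\mathrm{Re}\langle i\nabla_{f_k}g^{(t)},W_{f_k}g^{(t)}\rangle=2\,\mathrm{Re}\,\mathcal{E}_{i\nabla_{f_k}}(g^{(t)})+2\,\mathrm{Re}\langle i\nabla_{f_k}g^{(t)},e\rangle.
\]
Since $i\nabla_{f_k}$ is self-adjoint, its expectation is real, so the first summand equals $2\mathcal{E}_{i\nabla_{f_k}}(g^{(t)})$. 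By Cauchy--Schwarz, the remainder is at most $2\|\nabla_{f_k}\|_{\rm op}\|g^{(t)}\|_2\,\epsilon$. Unitarity of $\mathcal{S}[t,f]$ gives $\|g^{(t)}\|_2=1$, which yields the bound $2\epsilon\|\nabla_{f_k}\|_{\rm op}$. This is the same computation as in the proof of Theorem~\ref{thm:deviation_bounds_single_feature}, but without the final step that passes to $g^{(0)}$. That step would need Theorem~\ref{thm:expected_momentum_invariant}, and in the multi-feature case $\nabla_{f_k}$ need not commute with $\Delta_f$. This is why the statement is phrased in terms of $g^{(t)}$.

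\emph{Cross terms.} For each $j\neq k$, Cauchy--Schwarz and $\|g^{(t)}\|_2=1$ give
\[
|\ip{[i\nabla_{f_j}^2,X_{f_k}]g^{(t)}}{g^{(t)}}|\le\|[\nabla_{f_j}^2,X_{f_k}]\|_{\rm op}.
\]
The main obstacle is connecting this quantity to Definition~\ref{def:epsilon-commuting features}, whose phrasing about the ``matrix $E$'' of commutators is ambiguous. I would read $\delta$-commuting as the pairwise bound $\|[\nabla_{f_j}^2,X_{f_i}]\|_{\rm op}\le\delta$ for all $i\neq j$. This matches how the main text uses it and is implied by a bound on the block operator $E$, since each block's norm is at most the norm of the whole block matrix. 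Each cross term is then at most $\delta$.

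Summing the $K-1$ cross terms gives $(K-1)\delta$. Adding the diagonal estimate gives the claimed bound $2\epsilon\|\nabla_{f_k}\|_{\rm op}+(K-1)\delta$.
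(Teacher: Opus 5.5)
Your proposal is correct and follows the paper's proof essentially step for step. Both start from Theorem~\ref{thm:derivative_expected_multi_feature}. The diagonal term is bounded by $2\epsilon\|\nabla_{f_k}\|_{\rm op}$ using $W_{f_k}g^{(t)}=g^{(t)}+e_t$, the reality of $\mathcal{E}_{i\nabla_{f_k}}(g^{(t)})$, and Cauchy--Schwarz. Each of the $K-1$ cross terms is bounded by $\|[\nabla_{f_j}^2,X_{f_k}]\|_{\rm op}\le\delta$, under the same pairwise reading of $\delta$-commuting that the paper uses. Your side remark is also accurate: the bound cannot be passed back to $g^{(0)}$, because $\nabla_{f_k}$ need not commute with $\Delta_f$.
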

\begin{proof}
    By Theorem~\ref{thm:derivative_expected_multi_feature}, we have 
    \begin{equation*}
    \frac{\partial}{\partial t}\mathcal{E}_{X_{f_k}}(g^{(t)})= 2 \mathrm{Re}\left(\langle i\nabla_{f_k} g^{(t)}, W_{f_k} g^{(t)}\rangle\right) - \sum_{j\neq k} \ip{[i\nabla_{f_j}^2,X_{f_k}]g^{(t)}}{g^{(t)}}.
\end{equation*}
Let $A_k(t):=2 \mathrm{Re}\big(\langle i\nabla_{f_k} g^{(t)}, W_{f_k} g^{(t)}\rangle\big)$ and $C_{k,j}(t):=\ip{[i\nabla_{f_j}^2,X_{f_k}]g^{(t)}}{g^{(t)}}$ for $j\neq k$, we have 
\begin{equation*}
    \frac{\partial}{\partial t}\mathcal{E}_{X_{f_k}}(g^{(t)}) = A_k(t)-\sum_{j\neq k} C_{k,j}(t),
\end{equation*}
and therefore, 
\begin{equation*}
    \left| \frac{\partial}{\partial t}\mathcal{E}_{X_{f_k}}(g^{(t)}) - 2\mathcal{E}_{i\nabla_{f_k}}(g^{(t)}) \right| \le \left|A_k(t)-2\mathcal{E}_{i\nabla_{f_k}}(g^{(t)})\right| +\sum_{j\neq k}|C_{k,j}(t)|.
\end{equation*}
Because $g^{(t)}$ is $\epsilon$-$f_k$ regular, there exists a signal $e_t$ such that $W_{f_k}g^{(t)}=g^{(t)}+e_t$ and $\|e_t\|_2\le \epsilon$. 
Since $i\nabla_{f_k}$ is self-adjoint and $\mathcal{E}_{i\nabla_{f_k}}(g^{(t)})=\langle i\nabla_{f_k}g^{(t)},g^{(t)}\rangle$, we have 
\begin{align*}
A_k(t)-2\mathcal{E}_{i\nabla_{f_k}}(g^{(t)})
&=2\mathrm{Re}\big(\langle i\nabla_{f_k}g^{(t)},W_{f_k}g^{(t)}\rangle\big)
-2\langle i\nabla_{f_k}g^{(t)},g^{(t)}\rangle \\
&=2\mathrm{Re}\big(\langle i\nabla_{f_k}g^{(t)},W_{f_k}g^{(t)}-g^{(t)}\rangle\big) \\
&=2\mathrm{Re}\big(\langle i\nabla_{f_k}g^{(t)},e_t\rangle\big).
\end{align*}
Therefore, 
\begin{equation*}
    \left|A_k(t)-2\mathcal{E}_{i\nabla_{f_k}}(g^{(t)})\right|
\le 2\|i\nabla_{f_k}g^{(t)}\|_2\,\|e_t\|_2
\le 2\epsilon\|\nabla_{f_k}\|_{\rm op}.
\end{equation*}
For each $j\neq k$, the $\delta$-commuting assumption gives $\|[\nabla_{f_j}^2,X_{f_k}]\|_{op}\le \delta$. Hence, 
\begin{equation*}
    |C_{k,j}(t)|
\le \|[i\nabla_{f_j}^2,X_{f_k}]\|_{\rm op}\,\|g^{(t)}\|_2^2
= \|[\nabla_{f_j}^2,X_{f_k}]\|_{\rm op}
\le \delta.
\end{equation*}
By summing over $j\neq k$, then we have $\sum_{j\neq k}|C_{k,j}(t)|\le (K-1)\delta$. 
\end{proof}

\subsection{Improving signal routing through modulation}
\label{Improving Signal Routing Through Modulation A}

Here, we show that in typical situations modulating real-valued signals improves their signal routing measure. Consider the following setting. We have a multilayer network where at each layer $l$ we have a real-valued signal $g^{(l)}$ that we are allowed to modulate by choosing the free parameter $\theta_l\in\mathbb{R}$ in  $D[\theta_l h]g^{(l)}$. We then propagate the signal via $\mathcal{S}[dt,f]D[\theta_l h]g^{(l)}$ for some small time step $dt$, and lastly apply a modulus nonlinearity to define the signal at the next layer $g^{(l+1)}=\abs{\mathcal{S}[dt,f]D[\theta_l h]g^{(l)}}$. Here, we can interpret $g^{(l)}$ as the signal at time $ldt$, and the input to the network $g^{(0)}$ as the signal at time $0$.

Suppose that we would like to route the signal to the feature location $r$, i.e., we would like  $\mathcal{P}_{X_f}(g^{(0)},D[\theta_l h]g^{(l)},r)$ to decrease in $l$ by choosing appropriate $\theta_l$. In this setting, since $dt$ is small, we can linearize the propagation of $g^{(l)}$ about $t=0$, and obtain
\begin{align*}
\mathcal{P}_{X_f}(g^{(0)},g^{(l+1)},r) 
&= \mathcal{P}_{X_f}(g^{(0)},g^{(l)},r)  + \frac{\partial}{\partial t}\mathcal{P}_{X_f}(g^{(0)},\mathcal{S}[t,f]D[\theta_l h]g^{(l)},r)\Big|_{t=0}dt + O(dt^2),
\end{align*}
where the first term in the right-hand-side follows the fact that $\mathcal{P}_{X_f}(g^{(0)},D[\theta_l h]g^{(l)},r)$ does not depend on $\theta_l$.
We would now like to know if modulating the signal at layer $l$ improves the routing measure at layer $l+1$. For that, it is enough to show that the derivative of $\mathcal{P}_{X_f}(g^{(0)},g^{(l+1)},r)$ with respect to $\theta_l$ is nonzero at $\theta_l=0$. Observe that
\[
\frac{\partial}{\partial \theta_l}\mathcal{P}_{X_f}(g^{(0)},g^{(l+1)},r)= \frac{\partial}{\partial \theta_l} \frac{\partial}{\partial t}\mathcal{P}_{X_f}(g^{(0)},\mathcal{S}[t,f]D[\theta_l h]g^{(l)},r)\Big|_{t=0} +O(dt^2).
\]
Hence, our goal is to show that
\[
\mathcal{D}:=\frac{\partial}{\partial \theta_l} \frac{\partial}{\partial t}\mathcal{P}_{X_f}(g^{(0)},\mathcal{S}[t,f]D[\theta_l h]g^{(l)},r)\Big|_{t,\theta_l=0}
\]
is nonzero in general. As long as this is true,  $\theta_l=0$ is not the minimizer of $\mathcal{P}_{X_f}(g^{(0)},g^{(l+1)},r)$, so one can always choose a better modulation than $\theta_l=0$.

We now simplify the notations and give a formula for 
\[
\mathcal{D}:=\frac{\partial}{\partial \theta}\frac{\partial}{\partial t}\mathcal{P}_{X_f}(g,\mathcal{S}[t,f]D[\theta h]g,r)\Big|_{t=\theta=0}.
\] 

\begin{claim}
[Mixed derivative of the signal routing measure]\label{claim:srm_derivative}

\begin{align*}
&\frac{\partial}{\partial \theta}\frac{\partial}{\partial t}\mathcal{P}_{X_f}(g,\mathcal{S}[t,f]D[\theta h]g,r)\Big|_{t=\theta=0} = \frac{\langle [X_h,[\Delta,X_f^2]]g,g\rangle \;-\; 4r\,\mathrm{Re}\,\langle [X_h,W_f\nabla_f] g, g\rangle}{\mathcal{V}_{X_f}(g)}
\end{align*}

\end{claim}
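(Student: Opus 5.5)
The plan is to compute the mixed derivative directly, working in finite dimensions where everything is smooth in $(t,\theta)$. The denominator $\mathcal{V}_{X_f}(g)$ depends only on the fixed first argument $g$, so it is a constant. It therefore suffices to differentiate the numerator
\[
N(t,\theta)=\langle (X_f-rI)^2\psi_{t,\theta},\psi_{t,\theta}\rangle,\qquad \psi_{t,\theta}=\mathcal{S}[t,f]\,\varphi_\theta,\qquad \varphi_\theta=D[\theta h]g .
\]
The key observation is that $D[\theta h]=\operatorname{diag}(e^{i\theta h})=e^{i\theta X_h}$ is itself a unitary one-parameter group generated by the observable $X_h$. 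Both derivatives can therefore be handled by the same ``Heisenberg'' mechanism already used in the proofs of Theorems~\ref{thm:derivative_expected_feature} and~\ref{thm:var_evolution}.

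\textbf{Step 1: the $t$-derivative at $t=0$.} Exactly as in the proof of Theorem~\ref{thm:var_evolution}, differentiating $e^{-it\Delta}$ gives
\[
\partial_t N(0,\theta)=\langle i[\Delta,(X_f-rI)^2]\varphi_\theta,\varphi_\theta\rangle .
\]
Expanding $(X_f-rI)^2=X_f^2-2rX_f+r^2I$ and noting that the scalar term commutes with $\Delta$, this becomes
\[
\partial_t N(0,\theta)=\langle A\varphi_\theta,\varphi_\theta\rangle,\qquad A:=i[\Delta,X_f^2]-2r\,i[\Delta,X_f].
\]

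\textbf{Step 2: the $\theta$-derivative at $\theta=0$.} Write $\langle A\varphi_\theta,\varphi_\theta\rangle=\langle e^{-i\theta X_h}Ae^{i\theta X_h}g,g\rangle$. Differentiating the conjugation gives
\[
\frac{d}{d\theta}\Big|_{\theta=0}\bigl(e^{-i\theta X_h}Ae^{i\theta X_h}\bigr)=-i[X_h,A].
\]
For the first piece, $-i[X_h,i[\Delta,X_f^2]]=[X_h,[\Delta,X_f^2]]$, which is exactly the first term of the claim.

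For the second piece, I would invoke Lemma~\ref{lemma:commutator_expansion} to write $[\Delta,X_f]=\nabla_fW_f+W_f\nabla_f$. The contribution is then $-2r\langle[X_h,\nabla_fW_f+W_f\nabla_f]g,g\rangle$.

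\textbf{Step 3: reducing to a real part (main obstacle).} The delicate part is the adjoint and sign bookkeeping needed to turn this into $-4r\,\mathrm{Re}\langle[X_h,W_f\nabla_f]g,g\rangle$. Set $B=W_f\nabla_f$.
\begin{itemize}
\item $W_f$ is real symmetric and $\nabla_f$ is skew-adjoint, so $B^*=\nabla_f^*W_f^*=-\nabla_fW_f$. Hence $\nabla_fW_f+W_f\nabla_f=B-B^*$.
\item Since $X_h$ is self-adjoint, $([X_h,B])^*=B^*X_h-X_hB^*=-[X_h,B^*]$. With $C:=[X_h,B]$, this gives $[X_h,B-B^*]=C+C^*$.
\item Finally, $\langle (C+C^*)g,g\rangle=2\,\mathrm{Re}\langle Cg,g\rangle$.
\end{itemize}
Multiplying by $-2r$ yields $-4r\,\mathrm{Re}\langle[X_h,W_f\nabla_f]g,g\rangle$.

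Dividing both pieces by $\mathcal{V}_{X_f}(g)$ gives the stated formula. The only points requiring care are the interchange of the $t$- and $\theta$-derivatives, which is justified by smoothness of matrix exponentials, and the sign conventions in the commutator identities.
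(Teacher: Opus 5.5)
Your proposal is correct and follows essentially the same route as the paper: a Heisenberg-type $t$-derivative at $t=0$, followed by differentiating $\langle A e^{i\theta X_h}g, e^{i\theta X_h}g\rangle$ in $\theta$ to get $-i[X_h,A]$, which is the paper's rule $\frac{d}{d\theta}\langle A\phi_\theta,\phi_\theta\rangle = i\langle [A,X_h]\phi_\theta,\phi_\theta\rangle$. The differences are cosmetic: you differentiate the full numerator $\langle (X_f-rI)^2\psi,\psi\rangle$ directly and do the adjoint bookkeeping ($B-B^*$, then $C+C^*$) after the $\theta$-derivative, whereas the paper obtains the $t$-derivative from $\mathcal{V}+(r-\mathcal{E})^2$ via Theorems~\ref{thm:derivative_expected_feature} and~\ref{thm:var_evolution} and rewrites $\langle i\nabla_f\phi_\theta,W_f\phi_\theta\rangle=\langle iW_f\nabla_f\phi_\theta,\phi_\theta\rangle$ before differentiating inside the real part.
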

We see that when $h$ is constant, i.e.,  there is no modulation, $\mathcal{D}$ is zero.

\begin{proof}
Let $\phi_\theta=D[\theta h]g=e^{i\theta X_h}g$. 
Since both $D[\theta h]$ and $X_f$ are diagonal, they commute. 
Consequently, 
$\mathcal{E}_{X_f}(\phi_\theta)=\mathcal{E}_{X_f}(g)$ and $\mathcal{V}_{X_f}(\phi_\theta)=\mathcal{V}_{X_f}(g)$ for every  $\theta$.
Using Lemma~\ref{thm:energy_flow_measure}, Theorem~\ref{thm:derivative_expected_feature}, and Theorem~\ref{thm:var_evolution}, we obtain
\begin{equation*}
    \frac{\partial}{\partial t}\mathcal{P}_{X_f}(g,\mathcal{S}[t,f]\phi_\theta,r)\Big|_{t=0}
=
\frac{\mathcal{E}_{i[\Delta_f,X_f^2]}(\phi_\theta)-4r\,\mathrm{Re}\big(\langle i\nabla_f\phi_\theta,W_f\phi_\theta\rangle\big)}{\mathcal{V}_{X_f}(g)}.
\end{equation*}
We now differentiate with respect to $\theta$.
For any operator $A$ and $\phi_\theta=e^{i\theta X_h}g$, we have
\begin{equation*}
    \frac{d}{d\theta}\langle A\phi_\theta,\phi_\theta\rangle
=
i\langle [A,X_h]\phi_\theta,\phi_\theta\rangle.
\end{equation*}
By applying this with $A=i[\Delta_f,X_f^2]$ and evaluating at $\theta=0$, we obtain 
\begin{equation*}
    \frac{d}{d\theta}\mathcal{E}_{i[\Delta_f,X_f^2]}(\phi_\theta)\Big|_{\theta=0}
=
\langle [X_h,[\Delta_f,X_f^2]]g,g\rangle.
\end{equation*}
For the second term, we can write 
\begin{equation*}
    \langle i\nabla_f\phi_\theta,W_f\phi_\theta\rangle
=
\langle iW_f\nabla_f\phi_\theta,\phi_\theta\rangle.
\end{equation*}
We can use the same  differentiation rule with $A=iW_f\nabla_f$ and get 
\begin{align*}
\frac{d}{d\theta}\mathrm{Re}\big(\langle i\nabla_f\phi_\theta,W_f\phi_\theta\rangle\big)\Big|_{\theta=0}
&=
\mathrm{Re}\left(i\langle [iW_f\nabla_f,X_h]g,g\rangle\right) \\
&=
\mathrm{Re}\big(\langle [X_h,W_f\nabla_f]g,g\rangle\big).
\end{align*}
Substituting the two derivatives into the previous formula completes the proof.
\end{proof}

\subsection{Sensitivity analysis of signal routing via Schr\"odinger operators}
\label{Sensitivity Analysis of Signal Routing via Schr\"odinger Operators}

In this section, we propose a simple sensitivity analysis for Schr\"odinger Operators in our observable setting, to mirror the sensitivity analysis presented in oversquashing papers \citep{topping2022understandingoversquashingbottlenecksgraphs,BlackEtAl2023ER, di2023over}.

Suppose we have an input nonnegative-valued signal  at some layer, which is not localized on one graph region, and we partition it into chunks. To keep the notations simple, we focus on one chunk, and denote it by $g^{(0)}$. Since the original signal has roughly the same amount of energy at all locations (it is  not localized), we can roughly think about $g^{(0)}$ both as the signal's chunk and as a localization window about $\mathcal{E}_{X_{f_{k}}}(g^{(0)})$, with variance $\mathcal{V}_{X_{f_{k}}}(g^{(0)})$.

Suppose that we modulate the signal to $g^{(0)}_{\theta} = D[\theta h]g^{(0)}$, apply a Sch\"odinger operator $\mathcal{S}[t,f]$ to obtain the output signal $\Phi(g^{(0)})= \mathcal{S}[t,f]g^{(0)}_{\theta}$. We lastly apply a modulus nonlinearity, to obtain $\Psi(g^{(0)})=\abs{\Phi(g^{(0)})}$.
Similarly to the input signal, we think of $\Psi(g^{(0)})$ both as the output signal and as a localization window about $\mathcal{E}_{X_{f_{k}}}(\Psi(g^{(0)}))$, with variance $\mathcal{V}_{X_{f_{k}}}(\Psi(g^{(0)}))$. For this interpretation, we suppose that the variance of $\Psi(g^{(0)})$ is not too high.

We would like to know how much the values of $\Psi(g^{(0)})$ in the graph region about $\mathcal{E}_{X_{f_{k}}}\big(\Phi(g^{(0)})\big)$ are sensitive to changes in the input values $g^{(0)}$ about  $\mathcal{E}_{X_{f_{k}}}(g^{(0)})$. If the sensitivity is high, it indicates that much of the information from region $\mathcal{E}_{X_{f_{k}}}(g^{(0)})$ reached region $\mathcal{E}_{X_{f_{k}}}\big(\Psi(g^{(0)})\big)$. 
We formalize this sensitivity via the differential of $\Phi$, which is also the standard approach in oversquashing analyses \citep{topping2022understandingoversquashingbottlenecksgraphs,BlackEtAl2023ER, di2023over}.

\paragraph{Differentials.}

To keep the analysis self-contained, we recall basic definitions related to differentials.

Given a function $\psi:\RR^D\rightarrow \RR^K$, and a point in the domain $x\in\RR^D$, if there exists a linear operator $\mathcal{D}_x[\psi]:\RR^D\rightarrow\RR^K$ which satisfies for every $y\in\RR^D$ 
\[\psi(x+dy) = \psi(x) + d \mathcal{D}_x[\psi]y +o(d),\]
this operator is called the \emph{differential} of $\psi$ at $x$. Here, the limit in the $o$ notation is over $d>0$. In view of the above formula,  the function $y\mapsto \psi(x) + \mathcal{D}_x[\psi]y$ is seen as the linearization of $\psi$ about $x$.

In typical calculus formulations, we compute partial derivatives. These are connected to differentials as follows. Denote the $n$'th standard basis element  by $\delta^n=(0,\ldots, 0, 1,0\ldots,0)$, i.e., $\delta^n$ has 1 at entry $n$ and 0 elsewhere. Given $m\in[D]$, consider the linear functional $\lceil\delta^m\rceil:\RR^D\rightarrow\RR$ which operates on input vectors $v=(v_d)_{d=1}^D$ via $\lceil\delta^m\rceil(v) = v_m$. Now, the partial derivative of the $m$'th output coordinate of $\psi$ with respect to the $n$'th  input coordinate is exactly $\lceil\delta^m\rceil\mathcal{D}_x[\psi] (\delta^n)$.

In general, we need not restrict ourselves to probing the differential using only standard basis elements. Instead, given a general vector $v\in\RR^K$ one can consider a general linear functional $\lceil v\rceil$ defined by
\[\lceil v\rceil(w) = \ip{w}{v}\]
for probing the output of the differential. Moreover, one can consider a general vector $w\in\RR^D$ for probing the input. Now, the quantity
$\lceil v\rceil\mathcal{D}_x[\psi] (w)$ is interpreted as the partial derivative of the $v$  component of the output of $\psi$ with respect to the $w$ component of the input.

\paragraph{Derivation of the routing sensitivity}

In our formulation, we are not interested in showing that the sensitivity between all pairs of nodes is high. Rather, we just want to show that the sensitivity between the starting region and the destination  region is high. Indeed, our premise is that good GNNs should have the ability to rout the signal information between each region of the graph to a \emph{specific} corresponding destination region, relevant for the task. We suppose that the destination of region  $\mathcal{E}_{X_{f_{k}}}(g^{(0)})$  is $\mathcal{E}_{X_{f_{k}}}\big(\Psi(g^{(0)})\big)$. Namely, we suppose that the correct modulation feature and time $t$ were already chosen for the task of interest. This greatly simplifies the analysis with respect to oversquashing analyses, and gives us exact sensitivity results rather than loose inequalities.

By linearity of $\Phi$, we have
\[\lceil \Phi(g^{(0)}) \rceil \mathcal{D}_{g^{(0)}}[\Phi] (g^{(0)})= 1.\]
In words, scaling up the input of a linear operator scales up the output by the same value.

Now, by the fact that $\Psi = \abs{\Phi}$ we also have 
\[\lceil \Psi(g^{(0)}) \rceil \mathcal{D}_{g^{(0)}}[\Psi] (g^{(0)})= 1.\]
Indeed, denote $\nu=\Psi(g^{(0)})$, and observe that
\[\ip{\abs{\nu}}{\abs{\nu}} = \sum_n \abs{\nu_n}\abs{\nu_n} = \sum_n \nu_n\overline{\nu_n}= \ip{\nu}{\nu}.\]
As a result, the sensitivity of $\Psi$ probed at $\Psi(g^{(0)})$ is equal to the sensitivity of $\Phi$ probed at $\Phi(g^{(0)})$.
This simple observation shows that all of the energy in the window $g^{(0)}$ is mapped to the window $\Psi(g^{(0)})$ under $\Psi$.

\section{Tutorial: Schr\"odinger dynamics in classical quantum mechanics}\label{sec:Schrodinger_classic}

Our method is inspired by classical quantum mechanics.
In this section, we briefly review it. 
Throughout this subsection we take $f(x)=x$, let $X_f$ be the position operator $(X_f g)(x)=x\,g(x)$, and choose the continuum analogue 
\begin{equation*}
    \nabla_f = -\frac{\partial}{\partial x},\quad \Delta=-\nabla_f^2=-\frac{\partial^2}{\partial x^2}, 
\end{equation*}
With this convention, we have
\begin{equation*}
    \mathcal{E}_{i\nabla_f}(g)=\left\langle i\left(-\frac{\partial}{\partial x}\right)g,\;g\right\rangle
= -i\int_{\mathbb R} g'(x)\overline{g(x)}dx.
\end{equation*}
Note that if $g$ is real-valued, then $\mathcal{E}_{i\nabla_f}(g)=0$.
The free Schr\"odinger evolution is $g^{(t)}=e^{-it\Delta}g$.
\begin{theorem}[Heisenberg equation of motion for expected values]\label{thm:heisenberg_motion}
For every observable $A$, we have 
\begin{equation*}
    \frac{\partial}{\partial t}\mathcal{E}_{A}(g^{(t)})= i\langle [\Delta, A]g^{(t)}, g^{(t)} \rangle.
\end{equation*}
\end{theorem}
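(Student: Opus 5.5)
The plan is to differentiate the quadratic form $t\mapsto \ip{Ag^{(t)}}{g^{(t)}}$ directly, using only that $g^{(t)}=e^{-it\Delta}g$ solves the Schr\"odinger equation $\frac{\partial}{\partial t}g^{(t)}=-i\Delta g^{(t)}$ and that $\Delta$ is self-adjoint. This is the same computation already carried out in the proofs of Theorem~\ref{thm:derivative_expected_feature} and Theorem~\ref{thm:derivative_expected_multi_feature}, now with a general observable $A$ in place of $X_f$. There is one implicit assumption: $g$ lies in a domain on which $A$, $\Delta$ and their products are defined, for instance Schwartz functions, which are preserved by the free evolution. Under that assumption the formal manipulations below are legitimate.

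\textbf{Step 1: the product rule.} Since the inner product is sesquilinear and $A$ does not depend on $t$,
\[
\frac{\partial}{\partial t}\ip{Ag^{(t)}}{g^{(t)}}=\ip{A\,\partial_t g^{(t)}}{g^{(t)}}+\ip{Ag^{(t)}}{\partial_t g^{(t)}}.
\]
To justify this, I would work in the Fourier domain. There $e^{-it\Delta}$ acts as multiplication by $e^{-it\xi^2}$, so $t\mapsto g^{(t)}$ is differentiable in $L^2$ with derivative $-i\Delta g^{(t)}$.

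\textbf{Step 2: substitute the evolution equation.} Plugging in $\partial_t g^{(t)}=-i\Delta g^{(t)}$, the first term becomes $-i\ip{A\Delta g^{(t)}}{g^{(t)}}$. For the second term, conjugate-linearity in the second slot gives
\[
\ip{Ag^{(t)}}{-i\Delta g^{(t)}}=i\ip{Ag^{(t)}}{\Delta g^{(t)}}.
\]
Self-adjointness of $\Delta$ then turns this into $i\ip{\Delta A g^{(t)}}{g^{(t)}}$.

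\textbf{Step 3: collect terms.} Adding the two contributions gives
\[
i\ip{(\Delta A-A\Delta)g^{(t)}}{g^{(t)}}=i\ip{[\Delta,A]g^{(t)}}{g^{(t)}},
\]
which is the claimed identity.

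The only real obstacle is analytic rather than algebraic. In infinite dimensions, $\Delta$ and typical observables such as $X_f$ are unbounded, so Steps 1 and 2 require that $g^{(t)}$ stay in the domains of $A\Delta$ and $\Delta A$. I would handle this by stating the theorem for $g$ in the Schwartz class $\mathcal{S}(\mathbb{R})$. This class is invariant under $e^{-it\Delta}$ by the Fourier representation, and it is invariant under multiplication by polynomials and under differentiation, which covers $A=X_f$, $A=X_f^2$ and $A=i\nabla_f$. In the finite-dimensional graph setting the same argument goes through with no domain issues at all.
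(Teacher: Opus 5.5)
Your proposal is correct and follows the paper's proof: differentiate $\langle Ag^{(t)},g^{(t)}\rangle$ by the product rule, substitute $\partial_t g^{(t)}=-i\Delta g^{(t)}$, and use the self-adjointness of $\Delta$ to collect the two terms into $i\langle[\Delta,A]g^{(t)},g^{(t)}\rangle$. Your Schwartz-class domain discussion adds rigor that the paper's purely formal computation leaves implicit, but the argument itself is unchanged.
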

\begin{proof}
    By definition, we have 
    \begin{equation*}
        \begin{aligned}
            \frac{\partial}{\partial t}\mathcal{E}_{A}(g^{(t)})  &= \frac{\partial}{\partial t}\langle Ag^{(t)},g^{(t)}\rangle \\
&= \langle A(-i\Delta)g^{(t)},g^{(t)}\rangle + \langle Ag^{(t)},(-i\Delta)g^{(t)}\rangle \\
&= i\langle [\Delta,A]g^{(t)},g^{(t)}\rangle.
        \end{aligned}
    \end{equation*}
\end{proof}
\begin{theorem}[Expected position evolution in the classical case]\label{thm:classical_position_evolution}
Let $X$ be multiplication by $x$ and let $\Delta=-\partial_x^2$.
Then,
\begin{equation*}
    \frac{\partial}{\partial t}\mathcal{E}_{X}(g^{(t)})=2\,\mathcal{E}_{i\nabla}(g^{(t)}).
\end{equation*}
Moreover, $\mathcal{E}_{i\nabla}(g_t)$ is constant in $t$, and therefore 
\begin{equation*}
    \mathcal{E}_{X}(g^{(t)})=\mathcal{E}_{X}(g^{(0)})+2t\,\mathcal{E}_{i\nabla}(g^{(0)}).
\end{equation*}
\end{theorem}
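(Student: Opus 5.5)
Apply Theorem~\ref{thm:heisenberg_motion} with $A=X$, which gives
\[
\frac{\partial}{\partial t}\mathcal{E}_X(g^{(t)}) = i\langle [\Delta,X]g^{(t)},g^{(t)}\rangle .
\]
The whole task then reduces to computing the commutator $[\Delta,X]$.

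I will mirror the graph case exactly, as in Lemmas~\ref{lem:smooth_comm} and~\ref{lemma:commutator_expansion}. With $\nabla=-\partial_x$, the product rule gives
\[
[\nabla,X]g=-(xg)'+xg'=-g,
\]
so $[\nabla,X]=-I$. In other words, the continuum smoothing operator is $W=[X,\nabla]=I$, which makes every signal exactly $0$-regular. The commutator product rule then yields
\[
[\Delta,X]=-[\nabla^2,X]=-\nabla[\nabla,X]-[\nabla,X]\nabla=2\nabla .
\]
Substituting this into the Heisenberg equation gives
\[
\frac{\partial}{\partial t}\mathcal{E}_X(g^{(t)})=\langle 2i\nabla g^{(t)},g^{(t)}\rangle=2\,\mathcal{E}_{i\nabla}(g^{(t)}).
\]
Equivalently, one can read this off from Theorem~\ref{thm:derivative_expected_feature} with $W_f=I$, since $2\,\mathrm{Re}\langle i\nabla g,g\rangle=2\,\mathcal{E}_{i\nabla}(g)$ because $i\nabla$ is self-adjoint.

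For the constancy of the momentum, apply Theorem~\ref{thm:heisenberg_motion} again, now with $A=i\nabla$. Since $\Delta=-\nabla^2$ commutes with $\nabla$, we have $[\Delta,i\nabla]=0$, so the derivative of $\mathcal{E}_{i\nabla}(g^{(t)})$ vanishes. This is the same argument as in Theorem~\ref{thm:expected_momentum_invariant}, via Lemma~\ref{lem:func_calc_commute}. The right-hand side of the first identity is therefore the constant $2\mathcal{E}_{i\nabla}(g^{(0)})$, and integrating from $0$ to $t$ gives the linear law for $\mathcal{E}_X(g^{(t)})$.

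The main obstacle is rigor in infinite dimensions, since $X$, $\nabla$ and $\Delta$ are unbounded. I would handle this by taking $g^{(0)}$ in the Schwartz class. Using the Fourier representation $\widehat{g^{(t)}}(\xi)=e^{-it\xi^2}\widehat{g^{(0)}}(\xi)$, one sees that $g^{(t)}$ remains Schwartz. This justifies differentiating under the inner product, applying the commutator identities pointwise, and integrating by parts with no boundary terms. Alternatively, the momentum constancy can be checked directly: $|\widehat{g^{(t)}}|=|\widehat{g^{(0)}}|$, and $\mathcal{E}_{i\nabla}$ is, up to sign convention, $\int\xi|\hat g(\xi)|^2\,d\xi$.
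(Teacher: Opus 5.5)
Your proposal is correct and follows essentially the paper's route: apply Theorem~\ref{thm:heisenberg_motion} with $A=X$, use $[\Delta,X]=2\nabla$, then use $[\Delta,i\nabla]=0$ for conservation of momentum and integrate in time. The paper gets the commutator by expanding $-\partial_x^2(xu)+x\partial_x^2u=-2\partial_x u$ directly, whereas you go through $[\nabla,X]=-I$ and the commutator product rule; your Schwartz-class and Fourier justification adds rigor the paper omits, though the alternative ``reading off'' from Theorem~\ref{thm:derivative_expected_feature} is only a formal analogy, since that theorem is proved for finite graphs.
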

\begin{proof}
    By Theorem~\ref{thm:heisenberg_motion}, we have
    \begin{equation*}
        \frac{\partial}{\partial t}\mathcal{E}_{X}(g_t)= i\langle [\Delta, X]g_t, g_t \rangle.
    \end{equation*}
    For any smooth test function $u$, we have 
    \begin{equation*}
        [\Delta,X]u = -\partial_x^2(xu)+x\partial_x^2u = -2\,\partial_x u = 2i(i\partial_x u).
    \end{equation*}
    Therefore, $\frac{\partial}{\partial t}\mathcal{E}_{X}(g^{(t)})=2\,\mathcal{E}_{i\nabla}(g^{(t)})$. 
    Similarly, since
    \begin{equation*}
        [\Delta,i\nabla]=[-\partial_x^2,i\partial_x]=0,
    \end{equation*}
    we have $\mathcal{E}_{i\nabla}(g^{(t)})$ is conserved by Theorem~\ref{thm:heisenberg_motion}. Integrating in time gives the stated linear evolution of $\mathcal{E}_{X}(g^{(t)})$.
\end{proof}
For real-valued signals, the expected location remains constant under Schr\"odinger evolution, which motivates the need for modulation to achieve directional transport.
\begin{theorem}[Modulation generates classical transport]
Let $g,\widetilde{g}:\mathbb{R}\to\mathbb{R}$ be real-valued and define $g^{(0)}(x)=e^{i\theta \widetilde{g}(x)}g(x)$. Then
\begin{equation*}
    \mathcal{E}_{i\nabla}(g^{(0)})= -\theta\int \widetilde{g}'(x)\,|g(x)|^2\,dx,
\end{equation*}
and 
\begin{equation*}
    \mathcal{E}_{X}(g^{(t)})=\mathcal{E}_{X}(g)-2t\theta\int \widetilde{g}'(x)\,|g(x)|^2\,dx.
\end{equation*}
\begin{proof}
    We can directly compute 
    \begin{equation*}
        \begin{aligned}
            \mathcal{E}_{i\nabla}(g^{(0)}) 
                &= \int i\partial_x(e^{i\theta \widetilde{g}(x)}g(x))\,\overline{e^{i\theta \widetilde{g}(x)}g(x)}\,dx \\
                &= \int (ig'(x)-\theta \widetilde{g}'(x)g(x))g(x)\,dx \\
                &= -\theta\int \widetilde{g}'(x)|g(x)|^2\,dx,
        \end{aligned}
    \end{equation*}
    since $\int ig'(x)g(x)\,dx=0$ for real-valued $g$. The second identity then follows from Theorem~\ref{thm:classical_position_evolution}.
\end{proof}
\end{theorem}
\begin{theorem}[Time derivative of the position variance in the free Schr\"odinger case]
Let $g^{(t)}=e^{-it\Delta}g$ with $\Delta=-\partial_x^2$. Then
\begin{equation*}
    \frac{\partial}{\partial t}\mathcal{V}_{X}(g^{(t)})
=
\mathcal{E}_{i[\Delta,X^2]}(g^{(t)})-4\,\mathcal{E}_{X}(g^{(t)})\,\mathcal{E}_{i\nabla}(g^{(t)}).
\end{equation*}
Since $\mathcal{E}_{i\nabla}(g^{(t)})$ is constant, it can also be written as
\begin{equation*}
    \frac{\partial}{\partial t}\mathcal{V}_{X}(g^{(t)})
=
\mathcal{E}_{i[\Delta,X^2]}(g^{(t)})-4(\mathcal{E}_{X}(g^{(0)})+2t\,\mathcal{E}_{i\nabla}(g^{(0)}))\mathcal{E}_{i\nabla}(g^{(0)}).
\end{equation*}
\begin{proof}
    By definition, we have 
    \begin{equation*}
        \mathcal{V}_{X}(g^{(t)})=\mathcal{E}_{X^2}(g^{(t)})-\mathcal{E}_{X}(g^{(t)})^2.
    \end{equation*}
    By differentiating and using Theorem~\ref{thm:heisenberg_motion} with $A=X^2$, we obtain
    \begin{equation*}
        \frac{\partial}{\partial t}\mathcal{V}_{X}(g^{(t)})
=
\mathcal{E}_{i[\Delta,X^2]}(g^{(t)})-2\mathcal{E}_{X}(g^{(t)})\frac{\partial}{\partial t}\mathcal{E}_{X}(g^{(t)}).
    \end{equation*}
    By Theorem~\ref{thm:classical_position_evolution}, we have $\frac{\partial}{\partial t}\mathcal{E}_{X}(g^{(t)})=2\mathcal{E}_{i\nabla}(g^{(t)})$, which gives the first formula. The second is obtained by substituting the linear evolution of $\mathcal{E}_{X}(g^{(t)})$ and the conservation of $\mathcal{E}_{i\nabla}(g^{(t)})$.
\end{proof}
\end{theorem}

\section{Experimental details and additional experiments}
\label{app:exp_detail_addtional_exp}

All experimental runs were conducted on individual GPUs, specifically utilizing an NVIDIA L40S hardware. 

\subsection{Toy experiment: recovering orthogonal directions on a grid}

To illustrate the effectiveness of position-momentum optimization (PMO), we consider a two-dimensional grid graph whose original node features are the Cartesian coordinates $(x,y)$. We then replace this orthogonal pair by the correlated pair $(x,x+y)$ and run PMO on the resulting two channels. Since $(x,x+y)$ spans the same two-dimensional feature subspace as $(x,y)$, the optimization should recover two approximately orthogonal directions. Figure~\ref{fig:grid_orthogonality} compares the correlated input features with the transformed features returned by PMO. 

\begin{figure}[h]
\centering
\includegraphics[width=1\textwidth]{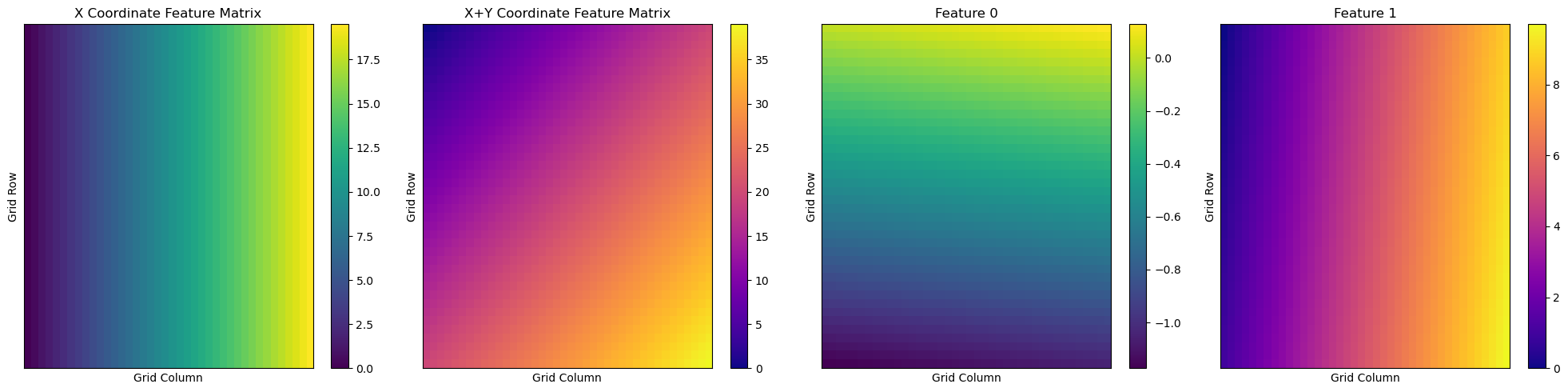}
\caption{
Grid-orthogonality toy experiment. The two left panels show the input features $x$ and $x+y$. The two right panels show the transformed features after PMO. The recovered directions align with two approximately orthogonal axes on the grid.
}
\label{fig:grid_orthogonality}
\end{figure}

\subsection{Optimizing signal transport via modulation}
\label{app: Optimizing signal transport via modulation}

In this subsection, we illustrate the effectiveness of signal transport under our modulation formulation. 
Specifically, we use a synthetic two-cluster graph to demonstrate that phase modulation can improve directed transport. We sample $N=60$ nodes from two Gaussian clouds in $\mathbb{R}^2$: $30$ nodes around $(-1,0)$ and $30$ nodes around $(1,0)$, each with standard deviation $0.5$ in both coordinates. Two nodes are connected by an undirected, unweighted edge whenever their Euclidean distance is smaller than $1.5$. The feature location is the $x$-coordinate, denoted by $f$, and we use the same feature for modulation, namely $h=f$.

We choose a nonnegative initial signal that is concentrated on the left cluster and normalize it to unit $\ell_2$ norm. The resulting signal is denoted by $g$. In the realization shown in Figure~\ref{fig:theta_transport}, this normalized signal satisfies $\mathcal{E}_{X_f}(g)=-0.98$, and the target location is set to $r=1$. 

For each modulation parameter $\theta\in[-5,5]$, we apply the modulation $D[\theta f]$, propagate by the Schr\"odinger operator $\mathcal S[0.1,f]$, and repeat this propagation three times. We track
\begin{align*}
\mathcal E_{X_f}(g)&=\sum_{n=1}^N f(n)|g(n)|^2,\\
\mathcal V_{X_f}(g)&=\mathcal E_{X_f^2}(g)-\mathcal E_{X_f}(g)^2,\\
\mathcal P_{X_f}(g^{(0)},g^{(t)},r)&=\frac{\mathcal V_{X_f}(g^{(t)})+\bigl(r-\mathcal E_{X_f}(g^{(t)})\bigr)^2}{\mathcal V_{X_f}(g^{(0)})}.
\end{align*}
We then evaluate $\mathcal{E}_{X_f}(g_\theta^{(0.1)})$ and $\mathcal{P}_{X_f}(g,g_\theta^{(0.1)},r)$. For visualization only, we renormalize $g_\theta^{(0.1)}$ to unit $\ell_2$ norm and plot its magnitude. Figure~\ref{fig:theta_transport} shows that an appropriate modulation parameter $\theta$ moves the signal toward the target location and lowers the routing measure relative to the unmodulated case.

\begin{figure}[t]
\centering
\includegraphics[width=0.7\textwidth]{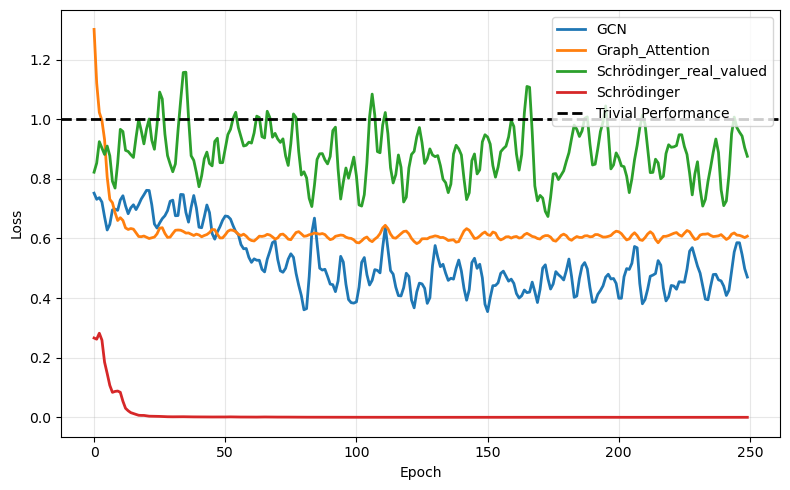}
\caption{Gaussian-translate learning curves. Lower is better. The modulated Schr\"odinger model converges to the lowest error among the compared models, outperforming the real-valued baselines and the non-modulated Schr\"odinger variant. The dashed line denotes the trivial predictor.}
\label{fig:dirichlet-curves}
\end{figure}

\subsection{Gaussian-translate toy experiment on a ring}

We study a synthetic translation task on a ring graph. Given an input signal $x$ on the ring, the model needs to predict the translated signal $y=S_d x$, where $S_d$ denotes circular shift by $d$ nodes. 
This task stresses whether a graph model can implement phase consistent transport on a simple topology.

\subsubsection{Data}

We use a ring graph with $N=100$ nodes. Let $\theta_n=-\pi+2\pi n/N$ denote the angular coordinate of node $n$. Each sample is generated by drawing a variance $\sigma^2\sim\mathcal{U}[0.5,1.5]$, sampling a Gaussian signal on the ring, adding Gaussian noise with standard deviation $10^{-3}$, applying a random circular shift, and normalizing the result to unit $\ell_2$ norm. The target is $y=S_d x$ with a fixed shift $d=35$. We use an 80/10/10 train/validation/test split and batch size $32$.

\subsubsection{Models}

We compare GCN and GAT baselines with Schr\"odinger variants that approximate a unitary graph flow through a truncated exponential. Let $A$ be the aggregation operator on the cycle and define the skew-Hermitian generator $H=i\,A$. Each Schr\"odinger layer applies a learnable linear map $W$ followed by the truncated series $
z\leftarrow \sum_{k=0}^{T}\frac{(\delta\mathcal{H})^k}{k!}\,Wz$ with $T=15$,
where $\delta$ is a learned per-channel scale. All Schr\"odinger models use feature normalization after each layer and a magnitude nonlinearity. The modulated variant additionally learns a phase direction $m=\mathrm{Linear}([x,\theta])$ and multiplies the features by $e^{i\epsilon m}$ with $\epsilon=25$.

\subsubsection{Training}

For a sample $(x_i,y_i)$, we minimize the $L_2$ loss $\|\hat y_i-y_i\|_2$. We train for $250$ epochs with Adam \citep{kingma2017adammethodstochasticoptimization}, using a base learning rate of $0.1$, a $10\times$ larger learning rate for the modulation parameters, and a ReduceLROnPlateau scheduler with factor $0.7$ and patience $10$. Figure~\ref{fig:dirichlet-curves} reports smoothed test curves for readability and the dashed line denotes a naive baseline.
In Table~\ref{tab:dirichlet-results}, we report the parameter counts for the Schr\"odinger GNN and the competing methods. 

\begin{table}[h]
\centering
\caption{Parameter counts for the Gaussian-translate models on a ring with $N=100$ and shift $d=35$. This table reports model size only, not predictive performance.}
\vspace{1 mm}
\begin{tabular}{lrrccc}
\toprule
Model & Params \\
\midrule
GCN & 2{,}136 \\
GAT & 6{,}193 \\
Schr\"odinger non modulated & 4{,}273 \\
Schr\"odinger & 4{,}275 \\
\bottomrule
\end{tabular}
\label{tab:dirichlet-results}
\end{table}

\subsection{MNIST graph classification}

We conduct an experiment on the classical MNIST dataset  \citep{lecun1998mnist} to evaluate our model's performance on a standard image classification task formulated as a graph problem. 

\paragraph{Dataset construction.} We convert each $28\times 28$ MNIST image into a graph with $N=784$ nodes, one node per pixel. We add undirected edges using 8-neighbor grid connectivity (Chebyshev radius $1$). Each node $v_i$ carries the feature vector $\mathbf{x}_i=[x_{\mathrm{norm}},y_{\mathrm{norm}},I]$, where $x_{\mathrm{norm}},y_{\mathrm{norm}}\in[0,1]$ are normalized pixel coordinates and $I\in[0,1]$ is the pixel intensity. We use the standard MNIST split of 60{,}000 training images and 10{,}000 test images \citep{lecun1998mnist}, and we report averages over five random seeds.

\paragraph{Baselines.}
We compare against five standard GNN baselines: GCN \citep{kipf2017semisupervisedclassificationgraphconvolutional}, GAT \citep{velickovic2018graphattentionnetworks}, GIN \citep{xu2019howpowerfularegraphneuralnetworks}, MPNN \citep{gilmer2017neuralmessagepassingquantum}, and ChebConv \citep{defferrard2017convolutionalneuralnetworksgraphs}. As an additional non-graph reference, we train a classical 2D CNN \citep{lecun1998gradient} directly on the raw $28\times 28$ images. For comparability, the CNN uses the same optimizer, learning rate, dropout, and overall width/depth scale whenever applicable.

\paragraph{Training hyperparameters.}
All graph models use hidden dimension $64$, depth $3$, batch size $16$, dropout $0.1$, and global mean pooling. We train for $200$ epochs with Adam and learning rate $\alpha=3\times 10^{-4}$.

\paragraph{Classification performance. }
Table~\ref{tab:mnist_results} reports the classification performance. We can see that our Schr\"odinger model achieves competitive performance. 

\begin{table}[h]
\centering
\caption{MNIST classification performance.}
\vspace{1 mm}
\begin{tabular}{lrrccc}
\toprule
Model & Test Accuracy $(\uparrow)$ \\
\midrule
GCN &  92.09\scriptsize{$\pm$0.28} \\
ChebConv & 95.72\scriptsize{$\pm$0.74} \\ 
GAT &  95.94\scriptsize{$\pm$0.71}\\
GIN & 98.33\scriptsize{$\pm$0.11}\\ 
MPNN & 98.95\scriptsize{$\pm$0.06}\\
CNN & 99.07\scriptsize{$\pm$0.07} \\ 
\midrule
Schr\"odinger & 99.13\scriptsize{$\pm$0.04}\\
\bottomrule
\end{tabular}
\label{tab:mnist_results}
\end{table}

\subsection{Graph classification on TU dataset}
In this subsection, we present the experimental setup for  the architecture-matched comparison on ENZYMES, IMDB-BINARY, MUTAG, and PROTEINS from the TU dataset collection \citep{morris2020tudatasetcollectionbenchmarkdatasets}. 
% dataset split: 50/25/25
The dataset is split into $50\%$ for training, $25\%$ for validation and $25\%$ for testing.
The corresponding results are reported in Table~\ref{tab:architecture_matched}.

\begin{table}[h]
\centering
\caption{
Statistics of the TU graph-classification datasets used in the architecture-matched comparison.}
\vspace{1 mm}
\begin{tabular}{lcccc}
\toprule
& {Enzymes} & {IMDB} & {MUTAG} & {PROTEINS} \\
\midrule
\# Graphs & 600 & 1000 & 188 & 1113 \\
\# Nodes (range) & 2 - 126 & 12 - 136 & 10 - 28 & 4 - 620 \\
\# Edges (range) & 2 - 298 & 52 - 2498 & 20 - 66 & 10 - 2098 \\
Avg \# Nodes & 32.63 & 19.77 & 17.93 & 39.06 \\
Avg \# Edges & 124.27 & 193.06 & 39.58 & 145.63 \\
\# Classes & 6 & 2 & 2 & 2 \\
Directed & False & False & False & False \\
ORC Mean & 0.13 & 0.58 & -0.27 & 0.17 \\
ORC Std & 0.15 & 0.19 & 0.05 & 0.20 \\
\bottomrule
\end{tabular}
\label{tab:tu_stats}
\end{table}

\paragraph{Architectural constraints.}
To compare architectures under a common capacity budget, we use a standardized backbone with six graph convolution layers followed by one linear classifier. We first define a reference parameter budget using the Unitary (UniGCN) architecture with a hidden dimension of $128$. For the remaining models (GAT, GCN, GIN, Adaptive Unitary, Schr\"odinger, and Schr\"odinger PMO), we then adjust the hidden dimension so that the total number of trainable parameters falls within $0.6\%$ of this reference budget. This isolates differences in the propagation operator as much as possible. For complex-valued models, each complex parameter is counted as two real parameters.

\paragraph{Hyperparameter search.}
For each model--dataset pair, we search over learning rate $\{0.0005,0.001,0.005,0.01\}$ and dropout $\{0,0.25,0.5\}$. The search space follows the settings used in prior evaluations~\citep{kiani2024unitaryconvolutionslearninggraphs,nguyen2023revisitingoversmoothingoversquashingusing}. The chosen configuration is the one with the best mean validation accuracy over $100$ runs.
The chosen values are summarized in Table~\ref{tab:arch_matched_hyperparams}.

\begin{table}[h]
    \centering
    \caption{Hyperparameters for the architecture-matched comparison.}
    \vspace{1 mm}
    \label{tab:arch_matched_hyperparams}
    \resizebox{0.9\textwidth}{!}{%
    \begin{tabular}{l|l|cccc}
    \toprule
    \textsc{Model} & \textsc{Hyperparameter} & \textsc{ENZYMES} & \textsc{IMDB} & \textsc{MUTAG} & \textsc{PROTEINS} \\
    \midrule
    \multirow{4}{*}{GCN} & 
    Learning Rate & $0.005$ & $0.005$ & $0.005$ & $0.001$ \\
    & Dropout & $0$ & $0$ & $0$ & $0$ \\
    & Hidden Dimension & $190$ & $190$ & $190$ & $190$ \\
    \midrule
    \multirow{4}{*}{GAT} & 
    Learning Rate & $0.001$ & $0.001$ & $0.0005$ & $0.005$ \\
    & Dropout & $0$ & $0.5$ & $0$ & $0.5$ \\
    & Hidden Dimension & $189$ & $189$ & $189$ & $189$ \\
    \midrule
    \multirow{4}{*}{Schr\"odinger} & 
    Learning Rate & $0.005$ & $0.0005$ & $0.005$ & $0.005$ \\
    & Dropout & $0.25$ & $0$ & $0.25$ & $0$ \\
    & Hidden Dimension & $117$ & $170$ & $170$ & $170$ \\
    \midrule
    \multirow{4}{*}{Schr\"odinger PMO} & 
    Learning Rate & $0.005$ & $0.001$ & $0.01$ & $0.005$ \\
    & Dropout & $0$ & $0$ & $0$ & $0$ \\
    & Hidden Dimension & $117$ & $117$ & $117$ & $117$ \\
    \midrule
    \multirow{4}{*}{Unitary} & 
    Learning Rate & $0.001$ & $0.001$ & $0.001$ & $0.0005$ \\
    & Dropout & $0$ & $0$ & $0$ & $0$ \\
    & Hidden Dimension & $128$ & $128$ & $128$ & $128$ \\
    \midrule
    \multirow{4}{*}{Adaptive Unitary} & 
    Learning Rate & $0.005$ & $0.0005$ & $0.005$ & $0.001$ \\
    & Dropout & $0$ & $0$ & $0$ & $0$ \\
    & Hidden Dimension & $127$ & $127$ & $127$ & $127$ \\
    \midrule
        \multirow{4}{*}{Adaptive Unitary PMO} & 
    Learning Rate & $0.001$ & $0.01$ & $0.001$ & $0.001$ \\
    & Dropout & $0.25$ & $0$ & $0$ & $0$ \\
    & Hidden Dimension & $127$ & $127$ & $127$ & $127$ \\
    \midrule
    \multirow{4}{*}{GIN} 
    & Learning Rate & $0.001$ & $0.005$ & $0.01$ & $0.0005$ \\
    & Dropout & $0$ & $0$ & $0$ & $0$ \\
    & Hidden Dimension & $190$ & $190$ & $190$ & $190$ \\
    \bottomrule
    \end{tabular}%
    }
    \end{table}

\paragraph{Runtime comparison.}
Table~\ref{tab:runtime_tu} reports the mean and standard deviation of the training time per epoch for each model on the TU datasets.

\begin{table}[h]
\centering
\caption{
Training time per epoch on TU datasets (seconds, mean $\pm$ std).
}
\vspace{1 mm}
\label{tab:runtime_tu}
\resizebox{0.8\textwidth}{!}{%
\begin{tabular}{lcccc}
\toprule
Model & ENZYMES & IMDB & MUTAG & PROTEINS \\
\midrule
GCN & $33.4 \pm 7.85$ & $27.5 \pm 5.66$ & $19.0 \pm 4.30$ & $33.9 \pm 2.02$ \\
GAT & $61.5 \pm 15.51$ & $42.6 \pm 0.52$ & $14.3 \pm 2.83$ & $65.7 \pm 8.13$ \\
GIN & $39.8 \pm 5.41$ & $32.9 \pm 9.12$ & $9.1 \pm 1.52$ & $43.2 \pm 10.40$ \\
Unitary & $216.7 \pm 6.25$ & $261.9 \pm 72.57$ & $60.4 \pm 14.47$ & $189.3 \pm 6.79$ \\
Adaptive Unitary & $200.1 \pm 27.03$ & $285.5 \pm 48.41$ & $47.5 \pm 12.85$ & $202.2 \pm 36.36$ \\
Adaptive Unitary PMO & $84.3 \pm 15.91$ & $142.9 \pm 7.83$ & $45.0 \pm 0.67$ & $158.0 \pm 0.82$ \\
Schr\"odinger & $172.8 \pm 12.89$ & $255.9 \pm 55.36$ & $44.4 \pm 13.06$ & $247.6 \pm 47.71$ \\
Schr\"odinger PMO & $173.5 \pm 25.11$ & $279.1 \pm 67.42$ & $68.6 \pm 8.54$ & $258.5 \pm 30.49$ \\
\bottomrule
\end{tabular}%
}
\end{table}

\paragraph{Ablation study on ENZYMES dataset}

We use the ENZYMES dataset to isolate the contribution of three ingredients in the Schr\"odinger framework: learnable propagation time, phase modulation, and PMO preprocessing. All models share the same backbone with hidden dimension $128$, learning rate $0.005$, and $300$ training epochs. Results are averaged over $100$ independent runs. The sequence in Table~\ref{tab:ablation_enzymes} reports a cumulative ablation from the UniGCN baseline to the full Schr\"odinger PMO model.

\begin{table}[h]
    \centering
    \caption{
    Ablation study on ENZYMES (test accuracy $\pm$ std). All models share the same backbone architecture and optimization settings.}
    \vspace{1 mm}
    \label{tab:ablation_enzymes}
    \begin{tabular}{lc}
    \toprule
    Model & Test Accuracy \\
    \midrule
    Unitary (UniGCN) & $37.33 \pm 8.25$ \\
    Adaptive Unitary & $41.56 \pm 5.67$ \\
    Schr\"odinger & $43.61 \pm 4.58$ \\
    Schr\"odinger PMO & $44.83 \pm 4.03$ \\
    \bottomrule
    \end{tabular}
\end{table}

\subsection{Heterophilous node classification}

\paragraph{Dataset statistics.}
Table~\ref{tab:heterophilous_stats} summarizes the statistics of the heterophilous node-classification datasets.

\begin{table}[h]
\centering
\caption{
Statistics of the heterophilous node-classification datasets from \citep{platonov2023critical}.}
\vspace{1 mm}
\label{tab:heterophilous_stats}
\resizebox{\textwidth}{!}{%
\begin{tabular}{lccccc}
\toprule
& \textsc{Roman-Empire} & \textsc{Amazon-Ratings} & \textsc{Minesweeper} & \textsc{Tolokers} & \textsc{Questions} \\
\midrule
\#Nodes & 22,662 & 24,492 & 10,000 & 11,758 & 48,921 \\
\#Edges & 32,927 & 93,050 & 39,402 & 519,000 & 153,540 \\
\#Classes & 18 & 5 & 2 & 2 & 2 \\
Homophily & 0.05 & 0.38 & 0.68 & 0.59 & 0.84 \\
Metric & AP & AP & ROC AUC & ROC AUC & ROC AUC \\
\bottomrule
\end{tabular}%
}
\end{table}

\paragraph{Architecture and hyperparameters.}
We follow the experimental setup as in \citep{kiani2024unitaryconvolutionslearninggraphs}. 
For our Schr\"odinger model, we first run PMO on the input features for a $50$ batch size with learning rate $10^{-3}$, and 2000 epochs. 
We set the number of layers as $\{4, 6, 8, 10\}$ with hidden dimension 512. 
The dropout is set as $\{0.2, 0.5\}$. 
A final linear layer maps the node embeddings to class logits.
The performance is reported with the best results.

\subsection{Long range graph benchmark}

We follow the experimental setup and training scheme as in \citep{tonshoff2023wheredidgapgoreassessinglongrangegraph, kiani2024unitaryconvolutionslearninggraphs}.

\subsubsection{Peptides} 

\paragraph{Edge features.}The current unitary Schr\"odinger block does not natively consume edge attributes. For edge-attributed molecular graphs, we  use an edge-feature adapter before the Schr\"odinger propagation. In the Peptides runs, this adapter is GINE~\citep{xu2019howpowerfularegraphneuralnetworks}.

\paragraph{Resources, parameter count, and search space.}  In our implementation, Peptides epochs typically took less than $15$ seconds, and the dataset storage footprint was about $1$GB. Following the LRGB parameter-budget convention, models are kept around roughly $500$K trainable real-valued parameters.  We also follow the hyperparameter settings as in  \citep{tonshoff2021walking, kiani2024unitaryconvolutionslearninggraphs}.
 Each complex parameter is counted as two real-valued parameters. We use Adam~\citep{kingma2017adammethodstochasticoptimization} with an initial learning rate of $0.001$ and a cosine learning-rate scheduler. The sweep for the basic model varies the number of layers in $\{6,8, 10, 12, 14\}$, dropout in $\{0.1,0.15,0.2\}$, batch size 200, with maximal 4000 epochs, and hidden dimension in $\{135, 155, 175, 195, 200\}$.

\subsubsection{PascalVOC-SP and COCO-SP}
We evaluate on the PascalVOC-SP and COCO-SP datasets from the Long Range Graph Benchmark
(LRGB)~\citep{dwivedi2022long}. Both datasets are superpixel-based computer vision benchmarks
formulated as node classification tasks, and performance is reported using the official F1 metric.
We follow the experimental protocol of \citep{kiani2024unitaryconvolutionslearninggraphs} for these two datasets. 
We train with the Adam optimizer, set the
initial learning rate to $10^{-3}$, use a cosine learning-rate scheduler, and a batch size of 50.

For models that do not directly incorporate edge attributes in the message-passing layer, we follow
\citep{kiani2024unitaryconvolutionslearninggraphs} and prepend a single GatedGCN layer as an edge-feature aggregator, which
maps edge information into the node representations before applying the main network. We use the
same positional encoding as in the reference setup: no additional PE/SE is used for COCO-SP, while random-walk structural encodings (RWSE) \citep{rampavsek2022recipe} are used for PascalVOC-SP.

Hyperparameters were tuned by sweeping the dropout rate over $\{0.1, 0.15, 0.2\}$ and the number of layers over
$\{6, 8, 10, 12, 14\}$. For each depth, the hidden dimension was selected so that the model satisfied the LRGB parameter budget of approximately 500K trainable parameters. When applicable, complex-valued parameters were counted as two real-valued parameters. We trained COCO-SP models for up to 750 epochs and PascalVOC-SP models for up to 1000 epochs. Hyperparameters were selected based on validation performance, and we report the test-set F1 score of the selected configuration.

\subsubsection{Signal-flow diagnosis}
We describe the diagnostic used in Figure~\ref{fig:pascal_mean_relative_shift}. 
The purpose of this experiment is to measure whether the hidden signal of a trained model is transported across the graph by individual layers. 
All models are first trained on PascalVOC-SP using the standard training split and selected by validation performance. 
After training, all parameters are frozen and the diagnostic is run in evaluation mode, with dropout disabled. 
The diagnostic uses only hidden activations and location features.

Let $\gG=(V,E)$ be a PascalVOC-SP superpixel graph with $|V|=N$ nodes. 
Let $f=(f_1,\ldots,f_K):V\rightarrow\mathbb{R}^K$ denote the location features used for the diagnostic. 
In the PascalVOC-SP experiments, we use the two superpixel-center coordinates, $K=2$, corresponding to the normalized horizontal and vertical coordinates. 
The same location features are used for all compared models. 
For a hidden representation $H\in\mathbb{C}^{N\times d}$, or $H\in\mathbb{R}^{N\times d}$ for real-valued models, we compute its node energy and for each location coordinate $f_k$, we compute the expected location of $H$.
This graph-dependent standard deviation is used only for normalization of the reported shift.

\paragraph{Window construction.}
To measure local signal transport, we split the hidden signal into windows according to the location features. 
For each coordinate $f_k$, we construct a collection of nonnegative windows $
    \{w_{b,k}:V\rightarrow[0,1]\}_{b=1}^{B}$
localized at different values of $f_k$. 
These windows are normalized to form a partition of unity along each coordinate $
    \sum_{b=1}^{B} w_{b,k}(v)=1$ for all $ v\in V$.
For two-dimensional PascalVOC-SP coordinates, a window is indexed by $\alpha=(b_1,b_2)$ and is defined by the product window $
    w_{\alpha}(v)=w_{b_1,1}(v)w_{b_2,2}(v)$.
Given a hidden representation $H$, the corresponding windowed hidden signal is $H_{\alpha}(v,:)=\sqrt{w_{\alpha}(v)}\,H(v,:)$ and we normalize it.

\paragraph{Layerwise relative shift.}
Let $\Phi_\ell$ denote the $\ell$-th trained layer of a model, including its graph propagation, channel mixing, normalization, and activation operations. 
For a model with hidden states $H^{(0)},H^{(1)},\ldots,H^{(L)}$, we apply the $\ell$-th layer to each windowed input signal $
    \widetilde{H}^{(\ell)}_{\alpha}
    =
    \Phi_\ell\!\left(H^{(\ell-1)}_{\alpha}\right)$ then compute the relative shift.
The quantity plotted in Figure~\ref{fig:pascal_mean_relative_shift} is the average of the relative shifts over all nonempty windows and all graphs in the PascalVOC-SP evaluation split.

\end{document}